\title{Stagewise Reinforcement Learning and the Geometry of the Regret Landscape}
\newcommand\equalContribution{=}
\author{%
    Chris Elliott\textsuperscript{\equalContribution}\\
    Timaeus
\And
    Einar Urdshals\textsuperscript{\equalContribution}\\
    Timaeus
\And
    David Quarel\textsuperscript{\equalContribution}\\
    Timaeus
\AND
    Matthew Farrugia-Roberts\\
    University of Oxford
\And
    Daniel Murfet\\
    Timaeus
}
\date{}
\begin{document}

\maketitle
\makeatletter
\def\@makefnmark{\hbox{\@textsuperscript{\normalfont\@thefnmark}}}
\makeatother
\renewcommand{\thefootnote}{=}
\setcounter{footnote}{0}
\footnotetext{These authors contributed equally to this work.}
\setcounter{footnote}{0}
\renewcommand{\thefootnote}{\arabic{footnote}}

\begin{abstract}
Singular learning theory characterizes Bayesian learning as an evolving tradeoff between accuracy and complexity, with transitions between qualitatively different solutions as sample size increases. We extend this theory to reinforcement learning, proving that the concentration of a generalized posterior over policies is governed by the local learning coefficient (LLC), an invariant of the geometry of the regret function. This theory predicts that deep reinforcement learning with SGD should proceed from simple policies with high regret to complex policies with low regret. We verify this prediction empirically in a gridworld environment exhibiting stagewise policy development: phase transitions over training manifest as ``opposing staircases'' where regret decreases sharply while the LLC increases.
\end{abstract}

\section{Introduction}

What principles determine the kinds of agents that deep reinforcement learning produces? This question grows increasingly salient as these principles shape the agents that play ever more integral roles in our society with each passing year. This question also presents a fundamental challenge for deep learning science, since its answering requires understanding the inductive biases of the kinds of extremely complex high-dimensional stochastic optimization processes that comprise neural network training.

In light of the formidable nature of this challenge, we propose the following approach to an \emph{empirical} understanding of the inductive biases of deep RL algorithms: First, we should theoretically understand the inductive biases of an idealized learning process chosen to capture core elements of deep RL, while still being analytically tractable. Second, we should derive predictions about the kinds of phenomena we would see in deep RL if it were driven by similar principles. Finally, we should experimentally test these predictions in order to validate the idealized learning process as a partial model of the real deep RL process.

A similar approach has recently proven fruitful in the science of deep \emph{supervised} learning \citep[e.g,][]{Chen+2023, Hoogland+2025, TransientRidge}. These works use singular Bayesian inference as an idealized learning process, applying Watanabe's free energy formula for Bayesian inference in singular statistical models \citep{WatanabeGrey, WatanabeGreen} to frame learning as an evolving tradeoff between accuracy (as measured by loss) and complexity (as measured by the \emph{local learning coefficient,} LLC, a geometric invariant of the loss landscape; \citealp{LLC}). With this understanding, they predict and experimentally verify the existence of various stagewise learning phenomena in deep learning and their connection to loss landscape geometry.

In this paper, we extend this approach to deep RL, offering the following contributions:
\begin{enumerate}
    \item
        In \cref{section:setting}, we develop our idealization of the deep RL process, namely \textbf{a generalization of singular Bayesian inference} that accounts for the role played by rewards in RL using Kalman--Todorov duality \citep{kalman1960new, Todorov, Levine}, as well as accounting for potential non-stationarity in experience data using an importance sampling objective.
    
    \item
        In \cref{section:theory}, we outline \textbf{a generalization of Watanabe's free energy formula and Bayesian information criterion} for singular statistical models, extending the main results of singular learning theory to our generalized inference setting. The proof follows \citet{WatanabeGrey, WatanabeWBIC, WatanabeGreen}, with the main extensions being the application of Lyapounov's central limit theorem and the identification of a new fundamental condition.
        \Cref{theory_appendix} gives proofs of these results in a unified setting.
    
    \item
        In \cref{section:predictions}, we leverage these new results to derive \textbf{qualitative predictions about deep RL.} In particular, we predict that deep RL should proceed stagewise from simple, high-regret policies to complex, low-regret policies, with policy complexity measured by the LLC.

    \item
        In \cref{section:empirical_results}, we \textbf{validate these predictions in deep reinforcement learning experiments} in a simple gridworld environment. \Cref{fig:LLC_example} previews the ``opposing staircases'' phenomenon, where we see the policy undergoes stagewise learning with decreasing regret and increasing complexity as measured by LLC.
\end{enumerate}

Our results demonstrate that deep reinforcement learning shares a fundamental property of Bayesian inference: the agents produced by learning are not necessarily optimal according to the return/regret. Rather, deep reinforcement learning agents exhibit \emph{stagewise learning governed by an evolving tradeoff between performance and simplicity.}
This learning dynamic has important consequences for practical applications of deep reinforcement learning in current and future agentic AI systems. We discuss these implications in \cref{section:relevance_alignment}.
Our results also underscore the LLC as a principled model complexity measure for RL, we survey related complexity measures in \cref{section:complexity_in_RL}.
Finally, our results provide a stepping stone for further theoretical analysis of the dynamics of deep RL, and invite the development of SLT-based interpretability and training tools, which we discuss in \cref{section:future_work}.

\begin{figure}[t!]
\centering
\includesvg[width=0.9\linewidth]{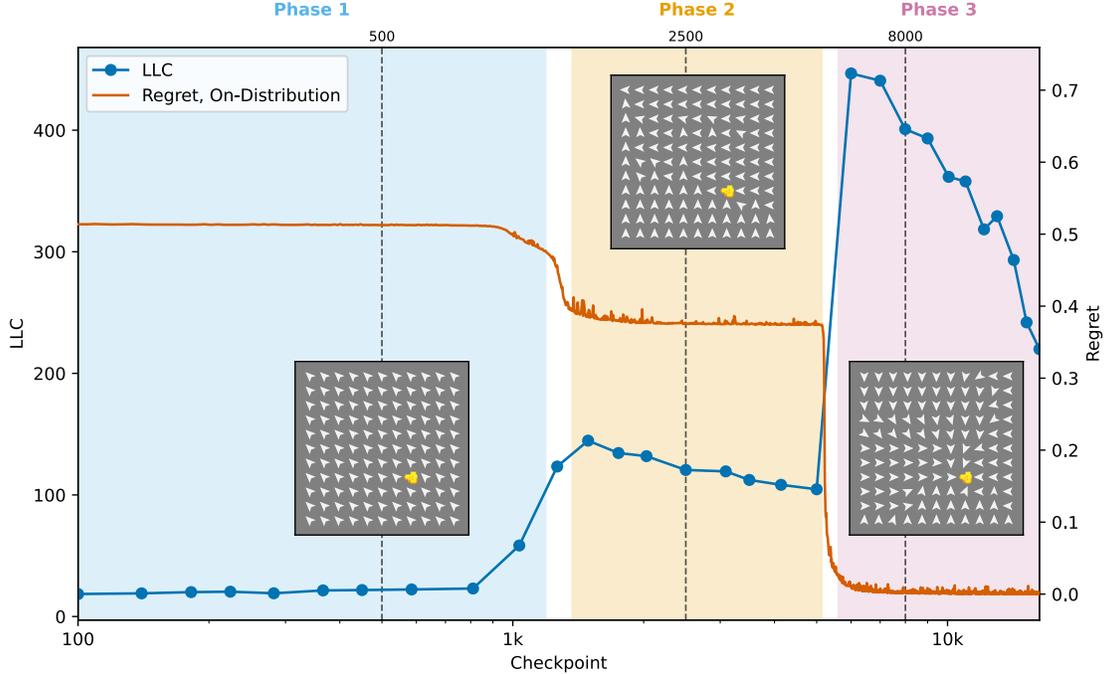}
\caption{\textbf{Opposing staircases of regret and complexity}: Comparison of the regret and complexity (as estimated by the local learning coefficient) across training for an agent optimized to follow shortest paths towards the goal location (cheese), which was located in the top-left corner in $\sim\frac 13$ of sampled episodes and uniformly across all locations in the remaining episodes. Phases are indicated along with visualizations of policies at the indicated checkpoints (i.e. policy gradient steps). In the first phase the agent moves up and left with equal probability; in the second phase it moves deterministically towards the top-left region, passing through the goal if possible, and finally in the third phase it moves directly toward the goal. 
For each location $s$, the arrow drawn points in the direction of the vector 
$v_s = [\pi(\rightarrow | s, \varnothing) - \pi(\leftarrow | s, \varnothing), 
\pi(\uparrow | s, \varnothing) - \pi(\downarrow | s, \varnothing)]$ and has size proportional to $||v_s||_1$ representing the expected direction of movement if the agent
spawns in cell $s$. Policy representation taken from \citet{mini2023understanding}. Phase transitions are associated with rapid decreases in regret and rapid increases in the LLC estimate.
}
\label{fig:LLC_example}
\end{figure}

\section{Background}

In this section, we provide a brief primer on singular learning theory (SLT) for supervised learning, and introduce preliminary definitions for discussing reinforcement learning (RL).

\subsection{Bayesian Deep Supervised Learning and Singular Learning Theory}

When we refer to Bayesian inference as a learning process, we mean how the posterior distribution $p(w|D_n)$ over parameters $w \in W$ given data $D_n$ evolves as $n$ increases.

In supervised learning for \emph{regular} statistical models, where the data consists of $n$ random variables sampled independently and identically and the Fisher information matrix is invertible, the story is well-known and simple: as $n \rightarrow \infty$, the posterior converges to a normal distribution centered on the maximum likelihood estimator with variance controlled by the (inverse of the) Fisher information matrix \citetext{by the Bernstein-von Mises theorem, see \citealp{hartigan1983asymptotic}}.

However, when we parametrize our statistical model by a neural network the Fisher information matrix is not invertible and the model is \emph{not} regular. This is the typical situation in deep learning \citep{Hagiwara+1993, watanabe2007almost, wei2022deep}. Statistical models which are not regular are called \emph{singular} and the appropriate mathematical framework for studying the Bayesian learning process for supervised learning is singular learning theory \citep[SLT;][]{WatanabeGrey}, which leverages algebraic geometry to describe Bayesian learning dynamics in terms of the nature of singularities in the negative log likelihood.

In particular, SLT gives an asymptotic expansion of the local free energy, known as \emph{Watanabe's free energy formula,} which we informally describe now.
Let $u \in W$ be a local minimum of the expected negative log likelihood, and let $U$ be a ball around $u$ in which $u$ is a global minimum. Then, under certain technical conditions on the statistical model, we have an asymptotic expansion in $n$ \citetext{\citealp[Thm.\ 11]{WatanabeGreen}; \citealp{LLC}} of the free energy associated to this neighborhood:
\begin{equation*}
    -\log \int_U \exp(-nL_n(w)) \phi(w) \d w = nL_n(u) n + \lambda(U) \log n - (m(U)-1) \log \log n + O_{\mr P}(1)
\end{equation*}
where $L_n(u)$ is the negative log likelihood of $D_n$ at $u$ and $\lambda(U)$ and $m(U)$ are coefficients derived from the geometry of the log likelihood around $u$. The coefficient $\lambda(U)$ is also known as the \emph{local learning coefficient} \citep[LLC;][]{LLC}. The lower-order terms include, for example, a contribution from the prior $\phi$.

As we will discuss in later sections, this formula implies that the learning dynamics of Bayesian inference with singular models (including Bayesian deep supervised learning) are \emph{much richer} than in the regular case.
For example, rather than converging to a normal distribution as in the regular case, in singular models the posterior can ``jump'' at certain critical sample sizes $n^*$ from concentrating in one region of parameter space for $n < n^*$ to concentrating in another region for $n > n^*$. This behavior resembles phase transitions in statistical physics \citep[\S 7.6]{WatanabeGrey}, and these phase transitions are characterized by an evolving tradeoff between how well a model fits the data as measured by the log likelihood, and its complexity as measured by the LLC \citetext{\citealp[\S7.6]{WatanabeGrey}; \citealp{WatanabeWBIC,Chen+2023}}. This theory has been verified for Bayesian learning in small models \citep{Chen+2023} and is conjecturally related to stagewise learning in SGD training in a range of systems including transformers \citep{Hoogland+2025, TransientRidge}.

So much for supervised learning. The main aim of this paper is to extend the above story to a more general setting that captures the core features of deep RL, including (1)~the essential role of singularities, (2)~the use of rewards rather than labels, and (3)~the non-stationarity of experience data.

\subsection{Reinforcement Learning Preliminaries}

We now introduce the preliminary definitions we require for describing our results in the main text. Note that in \cref{theory_appendix} we treat a more general setting, relaxing some of the restrictions introduced here for brevity (such as finiteness of the Markov decision process).

We will study a partially observable finite Markov decision process with sets $\mc S, \mc A$ and $\mc O$ of states, actions and observations respectively.  Denote the transition function of the Markov problem by\footnote{One usually imposes restrictions on the transition function $p$, in particular the condition that the observation depends only on the output state.}
\[p \colon \mc S \times \mc A \to \Delta(\mc S \times \mc O \times \RR)\]
 and suppose that the expected value of the return associated to each state-action pair is finite.  Fix a maximum episode length $T_{\mr{max}}$.  A \emph{trajectory} is a finite sequence
 \[\tau = (s_0, o_0, a_1, s_1, o_1, \ldots, a_{T_{\mr{max}}}, s_{T_{\mr{max}}}, o_{T_{\mr{max}}})\]
where $s_i \in \mc S, o_i \in \mc O$ and $a_i \in \mc A$.  We write $\mc T$ for the finite set consisting of all trajectories.

Let $\gamma \in [0,1]$ denote the discount factor.  Let $r(s,a)$ denote the expected reward in state $s \in \mc S$ for action $a \in \mc A$.  The \emph{return of a trajectory} $\tau \in \mc T$ is given by
\[r(\tau) = \sum_{i=1}^{T_{\mr{max}}} \gamma^i r(s_{i-1},a_i),\]
where we note the reuse of the symbol $r$, this will not lead to ambiguity since we will not need to refer to the expected return of a state-action pair again.

Consider a real analytic family $\{\pi_w\}$ of policies for the Markov problem parameterized by $w \in W$. This family, together with the transition function $p$ and a distribution $\Lambda$ over initial states, determines a family $\{q_w\}$ of probability distributions on the set $\mc T$ of trajectories defined by
 \[q_{w}(\tau) = \Lambda(s_0)p(o_0|s_0)\prod_{i=1}^{T_{\mr{max}}} \pi_w(a_i|o_{i-1})p(s_i|a_i)p(o_i|s_i).\]
We can now define our optimization objective: the \emph{expected return} at parameter $w \in W$ is $R(w) = \bb E_{\tau \sim q_w}(r(\tau))$. Let $R_{\mr{max}} = \sup_{w \in W} R(w)$.  The \emph{regret} at parameter $w \in W$ is
\[
 G(w) = R_{\mr{max}} - R(w)\,.
\]
Minimizing regret is equivalent to maximizing return. For simplicity, we frame our theory around minimizing regret. Moreover, it will be convenient to write the regret in the alternative form $\bb E_{\tau \sim q_w}(g(\tau))$ where $g(\tau) = R_{\mr{max}} - r(\tau)$.

\section{Bayesian Deep Reinforcement Learning}\label{section:setting}

In this section, we develop our idealized learning process, a generalization of Bayesian inference that accounts for the role of rewards and the non-stationarity of experience data in reinforcement learning. \Cref{section:generalized_posterior} culminates in \cref{posterior_def}, giving the form of the generalized Bayesian posterior (or ``Gibbs posterior''). \Cref{section:off_policy_learning} discusses the relationship between this idealized learning process and SGD-based deep RL.

\subsection{The Generalized Posterior}\label{section:generalized_posterior}

In reinforcement learning the goal of a learner is to acquire information about the environment in order to \emph{act} optimally, which means maximizing the reward $R(w)$. In Bayesian statistics the goal of a learner is to \emph{predict} and it is \emph{a priori} unclear how to incorporate scalar rewards into this framework. In addition to this problem we also need to specify what kind of observations about the environment we are learning from. 

In this paper we adopt the following two positions:
\begin{enumerate}
\item \textbf{Reward versus probability}: we follow the \emph{Kalman--Todorov} duality \citep{kalman1960new,Todorov}, which relates optimal control to optimal prediction under the following conceptual relation
\begin{equation}\label{eq:kalman_todorov}
\text{Probability} \propto e^{-\text{Cost}}\,.
\end{equation}
This principle has been studied more recently under the name ``control as inference'' \citep{Levine}.
\item \textbf{Nature of observations}: an agent learns from observing the consequences of its actions, and thus it is obvious that \emph{trajectories} should be the ``samples'' in a Bayesian approach to RL. However, unlike supervised learning, these observations are tied to the learner whose policy generates the actions. Moreover, it seems reasonable to acquire information about the environment from the experience of similar agents (including the agent itself, earlier in the learning process). Accordingly, we take
\begin{equation}
D_n \;=\; \{(w_i,\tau_i)\}_{i=1}^n,
\qquad
\tau_i \sim q_{w_i}
\end{equation}
as our model of the dataset where $w_1,\ldots,w_n$ is any sequence of points in $W$ and $\tau_1, \ldots, \tau_n$ is a sequence of random variables in $\mc T$ where $\tau_i$ is sampled from the distribution $q_{w_i}$. This formulation encompasses both off-policy data (distinct $w_i$) and on-policy data (all $w_i = w_0$ for a fixed $w_0$). Note that these random variables are independent but not identical, since we allow $w_i \neq w_j$.
\end{enumerate}
Given a prior $\phi(w)$ over parameters for policies, to complete a Bayesian description of reinforcement learning it only remains to define how this distribution is updated given observations $D_n$. That is, we need to define a \emph{generalized posterior} $p(w|D_n)$. Since $g(\tau)$ is our cost function and the $\tau_i$ are independent, \eqref{eq:kalman_todorov} gives a likelihood in the on-policy case (where $w_i = w_0$ for $1 \le i \le n$)
\[
p(D_n|w_0) = \prod_{i=1}^n p(\tau_i|w_0) = \prod_{i=1}^n e^{-g(\tau_i)} = e^{-n G_n(w_0)}
\]
where $G_n(w_0) = \frac{1}{n} \sum_{i=1}^n g(\tau_i)$. However we can't use Bayes rule to obtain a posterior $p(w|D_n)$, because we can only define a likelihood $p(D_n|w)$ this way for $w = w_0$. To proceed note that $G_n(w_0)$ is a Monte Carlo estimator for
\[
G(w_0) \;=\; \bb E_{\tau\sim q_{w_0}}\bigl[g(\tau)\bigr].
\]
The Kalman--Todorov principle \eqref{eq:kalman_todorov} suggests that we should assign likelihood
\begin{equation}\label{eq:likehood_candidate}
p(D_n| w) \propto \exp(-n\,\widehat G_n(w)),
\end{equation}
where $\widehat G_n(w)$ is an empirical estimate of the population regret
\[
G(w) = \bb E_{\tau\sim q_w}\bigl[g(\tau)\bigr]
= \sum_{\tau\in\mc T} q_w(\tau)\, g(\tau).
\]
The issue is that, when $\tau_i\sim q_{w_i}$, the naive average $\frac1n\sum_i g(\tau_i)$ estimates
$\frac1n\sum_i \bb E_{\tau\sim q_{w_i}}[g(\tau)]$, which depends on the behavior policies and is not, in general, equal to $G(w)$ for a counterfactual candidate $w$. To address this, fix $i$ and suppose $q_w$ is absolutely continuous with respect to $q_{w_i}$ (that is,
$q_{w_i}(\tau)=0$ implies $q_w(\tau)=0$ for all $\tau\in\mc T$), so that the Radon--Nikodym derivative
$\frac{\d q_w}{\d q_{w_i}}(\tau)$ is well-defined. Since $\mc T$ is finite in our setting, this derivative is simply the likelihood ratio $\frac{q_w(\tau)}{q_{w_i}(\tau)}$. Then for any function $f\colon\mc T\to\RR$,
\[
\bb E_{\tau\sim q_w}\bigl[f(\tau)\bigr]
=
\bb E_{\tau\sim q_{w_i}}
\left[
\frac{q_w(\tau)}{q_{w_i}(\tau)}\, f(\tau)
\right].
\]
Applying this with $f=g$ yields
\begin{equation}\label{eq:IS_change_of_measure}
G(w)
=
\bb E_{\tau\sim q_w}\bigl[g(\tau)\bigr]
=
\bb E_{\tau\sim q_{w_i}}
\left[
\frac{q_w(\tau)}{q_{w_i}(\tau)}\, g(\tau)
\right].
\end{equation}
In other words, if $\tau_i\sim q_{w_i}$ then the random variable
\[
\frac{q_w(\tau_i)}{q_{w_i}(\tau_i)}\, g(\tau_i)
\]
is an unbiased estimator of $G(w)$. Averaging across independent (but not identically distributed) samples therefore yields the canonical importance-sampling
estimator
\begin{equation}\label{eq:IS_regret_estimator_here}
G_n(w)
\;=\;
\frac1n\sum_{i=1}^n
\frac{q_w(\tau_i)}{q_{w_i}(\tau_i)}\, g(\tau_i),
\qquad
\text{for which}\quad
\bb E\!\left[G_n(w)\mid w_1,\ldots,w_n\right] \;=\; G(w).
\end{equation}
Observe that $G_n(w)$ can be computed without knowledge of the environment's transition function: writing
\[
q_w(\tau)
=
q_{\mathrm{env}}(\tau)\,\prod_{t=1}^{T_{\max}} \pi_w(a_t | o_{t-1}),
\]
where $q_{\mathrm{env}}(\tau)$ denotes the product of all policy-independent environment terms, we obtain
\[
\frac{q_w(\tau)}{q_{w_i}(\tau)}
=
\prod_{t=1}^{T_{\max}}
\frac{\pi_w(a_t | o_{t-1})}{\pi_{w_t}(a_t | o_{t-1})}.
\]
Thus the importance weight can be computed from action probabilities under the candidate and behavior policies; all environment-dependent factors cancel. Combining \eqref{eq:likehood_candidate} with \eqref{eq:IS_regret_estimator_here} leads to the definition
\begin{equation}
p(w | D_n)
= \frac{1}{Z}
\exp\bigl(-n G_n(w)\bigr)\phi(w)
\end{equation}
where the normalizing constant $Z$ is chosen so that this is a probability distribution. Note that this reduces to the on-policy calculation above when $w_i=w_0$ for all $i$. We will also need to introduce a tempered version of this posterior, and notation for the probability of an open set according to the posterior:

\begin{definition} \label{posterior_def}
 Fix a constant $\beta > 0$ and an analytic prior distribution $\phi$ on $W$.  The \emph{generalized tempered posterior distribution} (or Gibbs posterior) is the probability distribution on $W$ defined by
 \begin{align*}
  \mu_{n}^{\beta}(U) &= \frac {Z_{n,\beta}(U)}{Z_{n,\beta}(W)} \\
  \text{where } Z_{n,\beta}(U) &= \int_U \exp(-n\beta G_n(w)) \phi(w) \d w
 \end{align*}
 for open sets $U \sub W$.  We call $Z_{n,\beta}(U)$ the \emph{evidence} of the subset $U$ and $F_{n,\beta}(U) = -\log Z_{n,\beta}(U)$ the \emph{free energy}. Both of these quantities measure the concentration of the generalized posterior in $U$. 
\end{definition}

For more discussion of the generalized posterior see \cref{section:slt_gen_bayes}.

\subsection{Relationship to Deep Reinforcement Learning}
\label{section:off_policy_learning}

A subtlety in relating the generalized posterior to practical RL training is the appearance of the sequence of behavior parameters $w_1,\dots,w_n$ in the data set $D_n=\{(w_i,\tau_i)\}_{i=1}^n$. From a practical off-policy RL perspective, there is a standard answer to ``where do the $w_i$ come from?'': they are the (possibly stale) policy parameters that were used to \emph{collect} the experience currently being used for learning.

Concretely, modern off-policy RL algorithms decouple data collection from parameter updates by maintaining a replay buffer (or a distributed set of actors) containing trajectories or transitions generated under a sequence of behavior policies.
If at training time-step $k$ the learner has parameters $w^{(k)}$ and collects a batch of trajectories
$\tau^{(k)}_1,\dots,\tau^{(k)}_{B_k}\sim q_{w^{(k)}}$, then the accumulated data set over training naturally has the form
\[
\bigl\{(w^{(k)},\tau^{(k)}_j)\;:\; k\le K,\; 1\le j\le B_k \bigr\},
\]
i.e., it is a union of samples drawn under earlier iterates of the same SGD process.\footnote{In distributed actor--learner architectures one should interpret $w^{(k)}$ as the parameters on the actor at the moment of rollout, which are typically a lagged copy of the learner parameters.}
Thus, even though our theory allows an arbitrary sequence $(w_i)_{i=1}^n$, the ``canonical'' source of such a sequence in practice is simply \emph{the learning algorithm itself}, via checkpoints or lagged actors.

Recall that the RL objective is to maximize expected return $R(w)$, equivalently to minimize regret
$G(w)=R_{\max}-R(w)$ so $\nabla_w R(w) = -\,\nabla_w G(w)$. The score-function identity gives
\begin{equation}\label{eq:score_fn_identity}
\nabla_w R(w) = \nabla_w \bb E_{\tau\sim q_w}[r(\tau)]
=
\bb E_{\tau\sim q_w} \left[r(\tau)\,\nabla_w \log q_w(\tau)\right].
\end{equation}
Using the factorization
$q_w(\tau)=q_{\mathrm{env}}(\tau)\prod_{t=1}^{T}\pi_w(a_t | o_{t-1})$
we obtain
\begin{equation}\label{eq:logq_factorizes}
\nabla_w \log q_w(\tau)
=
\sum_{t=1}^{T}\nabla_w \log \pi_w(a_t| o_{t-1}),
\end{equation}
which is the usual REINFORCE structure. Finally, by the standard ``reward-to-go'' (causality) argument one may replace
the full return $r(\tau)$ in \eqref{eq:score_fn_identity} by $r(\tau_{\ge t})$ at time $t$, yielding an equivalent estimator
\begin{equation}\label{eq:reinforce_rtg_form}
\nabla_w R(w)
=
\bb E_{\tau\sim q_w}\!\left[\sum_{t=1}^{T} r(\tau_{\ge t})\,\nabla_w \log \pi_w(a_t| o_{t-1})\right],
\end{equation}
which matches the on-policy training rule used in this paper (cf., \ref{eq:training_gradient}).

Now consider the off-policy setting where trajectories are generated under behavior policies $\pi_{w_i}$ and we wish to improve a \emph{target} policy $\pi_w$. Applying the same change-of-measure identity used above
\eqref{eq:score_fn_identity} becomes
\begin{equation}\label{eq:offpolicy_score_fn_identity}
\nabla_w R(w)
=
\bb E_{\tau\sim q_{w_i}}\left[
\frac{q_w(\tau)}{q_{w_i}(\tau)}\, r(\tau)\,\nabla_w \log q_w(\tau)
\right],
\end{equation}
and substituting \eqref{eq:logq_factorizes} and the reward-to-go form yields the canonical off-policy REINFORCE estimator
\begin{equation}\label{eq:offpolicy_reinforce_traj_ratio}
\nabla_w R(w)
=
\bb E_{\tau\sim q_{w_i}}\left[
\frac{q_w(\tau)}{q_{w_i}(\tau)}
\sum_{t=1}^{T} r(\tau_{\ge t})\,\nabla_w \log \pi_w(a_t| o_{t-1})
\right].
\end{equation}
The connection to our empirical regret estimator is immediate. Differentiating $G_n$ gives
\begin{align}
\nabla_w G_n(w)
&=
\frac1n\sum_{i=1}^n
\frac{q_w(\tau_i)}{q_{w_i}(\tau_i)} g(\tau_i)\,\nabla_w \log q_w(\tau_i)
\label{eq:grad_Gn_score_form}\\
&=
\frac1n\sum_{i=1}^n
\frac{q_w(\tau_i)}{q_{w_i}(\tau_i)}
\sum_{t=1}^{T_i}
g(\tau_i) \nabla_w \log \pi_w(a_{i,t}| o_{i,t-1}),
\nonumber
\end{align}
and replacing $g(\tau_i)$ by its reward-to-go form yields the trajectory-sampled analogue of \eqref{eq:offpolicy_reinforce_traj_ratio} (up to the overall sign corresponding to maximizing return versus minimizing regret). Thus, stochastic gradient descent on $G_n$ corresponds to an off-policy REINFORCE-style update, while the generalized posterior is the Gibbs distribution associated to the same empirical objective.

Beyond this connection at the level of objectives, it is an open challenge to characterize the precise relationship between concentration of the generalized posterior and SGD dynamics. This represents an important next step for future theoretical work (see \cref{section:future_work}). However, for present purposes, we can immediately analyze the learning dynamics of the idealized case, treat this as a source of \emph{empirical} predictions about SGD training, and then validate these predictions with deep RL experiments. We now turn to this analysis.

\section{The Free Energy Formula}\label{section:theory}

In this section, we outline our generalization of Watanabe's free energy formula and Bayesian information criterion for singular statistical models, \citetext{\citealp[\S6.3]{WatanabeGreen}; \citealp[Thm.~4]{WatanabeWBIC}}. In particular, \cref{main_theorem_main_text} extends these two results to the generalized inference setting developed in \cref{section:setting}.

With the above definition of the generalized posterior $p(w|D_n)$ the study of the Bayesian learning process in RL reduces to the following mathematical question: where does the generalized posterior concentrate? This is captured by $\mu^\beta_n(U)$ which depends on the random variable $D_n$, and we give precise answers in terms of the asymptotic behaviour of the free energy $F_{n,\beta}(U)$. More precisely, we prove that the large $n$ behavior of the generalized posterior is controlled by the singular geometry of the set $W_0 \sub W$ of optima.

\begin{assumption} \label{key_assumption_main_text}
Our main theorem requires imposing the following fundamental condition on the Markov decision problem.  If $w \in W$ is an optimal parameter, we assume that $g(\tau) = 0$ almost-always for the distribution $q_w$.  In other words \emph{optimal policies almost always receive optimal return}.  This is true, for example, if the transition function is deterministic.
\end{assumption}

\begin{theorem} \label{main_theorem_main_text}
 Consider a Markov decision problem satisfying \cref{key_assumption_main_text}.  Then the generalized posterior obeys the following conditions:
 \begin{enumerate}
  \item \emph{Asymptotics of the posterior}: Let $U \sub W$ be an open set.  The generalized tempered posterior has asymptotic behavior
  \[Z_{n,\beta}(U) = \int_U \exp(-n\beta G_n(w)) \phi(w) \d w \sim n^{-\lambda(U)}\log n^{m(U)-1}\]
  where $\lambda(U)$, $m(U)$ are the learning coefficient and multiplicity of $G$ on the set $U$.\footnote{The quantities $\lambda(U)$ and $m(U)$ are referred to respectively by algebraic geometers as the real log canonical threshold and the real log canonical multiplicity of the function $G$ restricted to the set $U$.}
  \item \emph{Expectation of the total loss}: Let $w_0$ be a local minimum of $G$. Let $\bb E^\beta_n$ represent the expected value with respect to the tempered posterior $\mu_n^\beta$.  Let $\beta$ be a positive function on the natural numbers such that $\beta(n)$ converges as $n \to \infty$.  Then there exists an open neighborhood $U'$ of $w_0$ so that for all subneighborhoods $U \sub U'$
  \[\bb E^\beta_n(nG_n(w)|_U) = nG_n(w_0) + \frac{\lambda(U)}{\beta} + o_{\mr P}(\log n).\]
 \end{enumerate}
\end{theorem}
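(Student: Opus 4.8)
The plan is to reduce \cref{main_theorem_main_text} to a generalized, RL‑agnostic form of Watanabe's asymptotic expansion (the abstract version developed in \cref{theory_appendix}) and then to verify that the generalized posterior of \cref{posterior_def} meets its hypotheses. The structural facts that make this work are: (i) since $\mc T$ is finite and $\{\pi_w\}$ is real analytic, both the regret $G$ and the importance‑sampling estimator $G_n$ are real analytic in $w$ on any compact neighbourhood (each is a finite sum of products of $\pi_w$‑factors, weighted by the bounded quantities $g(\tau)$ and $q_w(\tau_i)/q_{w_i}(\tau_i)$); (ii) $G \ge 0$ with zero set exactly the analytic set $W_0$ of optima; and (iii) under \cref{key_assumption_main_text}, if $w \in W_0$ then $g(\tau) \neq 0$ forces $q_w(\tau) = 0$, so each realized summand $\frac{q_w(\tau_i)}{q_{w_i}(\tau_i)} g(\tau_i)$ of $G_n$ vanishes on $W_0$; hence $G_n$ vanishes identically on $W_0$, not merely in expectation. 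Writing $\eta_n(w) = \sqrt n\,(G_n(w) - G(w))$ for the conditionally centred empirical process, so that $G_n(w) = G(w) + n^{-1/2}\eta_n(w)$, the content of (iii) together with analyticity is that $\eta_n$ should inherit the vanishing of $G$ along $W_0$ up to the order required for a \emph{relatively finite variance} bound, which is the engine of the whole argument.

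For part 1 I would apply Hironaka's resolution of singularities to $G$ on a compact neighbourhood of $W_0$: there is a real analytic manifold and a proper map $\rho$ such that, in suitable local coordinates $u$, $G(\rho(u)) = u^{2k}$ and $|\rho'(u)| = b(u)\,|u^{h}|$ with $b>0$ and multi‑indices $k,h$ (whose combinatorics define $\lambda(U)$ and $m(U)$). Pulling the evidence back gives $Z_{n,\beta}(U) = \int e^{-n\beta u^{2k} - \sqrt n\,\beta\,\eta_n(\rho(u))}\,(\phi\circ\rho)(u)\,b(u)\,|u^{h}|\,\d u$. The local lemma to establish is that, on the region where $u^{2k}$ is small, $|\eta_n(\rho(u))| \le C_n\,|u^{k}|$ with $C_n$ tight in probability; this is exactly what one gets from a bound of the form $\mr{Var}_{\tau\sim q_{w_i}}(\frac{q_{\rho(u)}(\tau)}{q_{w_i}(\tau)}\,g(\tau)) \le c\,G(\rho(u))$ uniform in $i$, combined with a maximal inequality for sums of independent (non‑identical) summands, and it is here that fact (iii) is used. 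Given this bound, near $W_0$ the fluctuation exponent is of the same order as the deterministic one and contributes only to an $O_{\mr P}(1)$ prefactor; the standard singular‑learning computation $\int e^{-n\beta u^{2k}}|u^{h}|\,\d u \sim (n\beta)^{-\lambda}(\log(n\beta))^{m-1}$, together with the usual argument that the complement of a neighbourhood of $W_0$ contributes an exponentially small term, then yields $Z_{n,\beta}(U) \sim n^{-\lambda(U)}(\log n)^{m(U)-1}$ (the fixed constant $\beta$ and the $O_{\mr P}(1)$ random prefactor being absorbed into the stated equivalence).

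For part 2, rather than differentiating the part‑1 asymptotic in $\beta$ — which would force us to justify interchanging a derivative with the $n\to\infty$ limit — I would follow the WBIC argument of \citet[Theorem 4]{WatanabeWBIC} directly. Fix the local minimum $w_0$ and a neighbourhood $U'$ small enough that $\inf_{U'} G = G(w_0)$, and on a subneighbourhood $U$ factor $Z_{n,\beta}(U) = e^{-n\beta G_n(w_0)}\int_U e^{-n\beta(G_n(w) - G_n(w_0))}\phi(w)\,\d w$, so that $\bb E^\beta_n(nG_n(w)|_U) = nG_n(w_0) + \bb E^\beta_n(n(G_n(w)-G_n(w_0))|_U)$. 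Applying resolution of singularities to $G - G(w_0)$ at $w_0$, the remaining expectation becomes a ratio $(\int n\,t(u)\,e^{-n\beta t(u)}(\cdots)\,\d u)\big/(\int e^{-n\beta t(u)}(\cdots)\,\d u)$ with $t(u) = u^{2k} + n^{-1/2}(\eta_n(\rho(u)) - \eta_n(w_0))$; using the relatively finite variance bound again to control the fluctuation, this ratio is $\frac{\lambda(U)}{\beta} + o_{\mr P}(\log n)$ by the elementary Gamma‑integral estimate $n\int t^{\lambda}e^{-n\beta t}\,\d t \big/ \int t^{\lambda-1}e^{-n\beta t}\,\d t \to \lambda/\beta$, which holds for any $\beta = \beta(n)$ that converges (and, when $\beta(n) = 1/\log n$, reproduces the usual WBIC‑style $\lambda\log n$ term). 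Substituting back gives the claimed identity.

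The step I expect to be the main obstacle is the empirical‑process analysis behind the relatively finite variance bound in this RL setting, for two reasons. First, the samples $\tau_i \sim q_{w_i}$ are independent but \emph{not} identically distributed, so classical i.i.d.\ empirical‑process machinery does not apply verbatim; one must instead use maximal inequalities for sums of independent non‑identical summands — manageable because the finiteness of $\mc T$ makes the relevant function class finite dimensional, but the pullback of the process under $\rho$ still has to be handled carefully near $W_0$. Second, the importance weights $q_w/q_{w_i}$ enter the variance, so the per‑sample variances are controlled only under a uniform lower bound on $q_{w_i}$ over its support (equivalently, second moments of the weights bounded uniformly in $i$); verifying that \cref{key_assumption_main_text} delivers the genuine order‑matching between $\mr{Var}(\eta_n)$ and $G$ near $W_0$ — so that the pulled‑back fluctuation is really $O_{\mr P}(|u^{k}|)$ and not merely $O_{\mr P}(1)$ — is the delicate point, and is presumably where the bulk of \cref{theory_appendix} is spent.
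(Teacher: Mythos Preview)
Your proposal is correct and follows the same architecture as the paper: resolve $G$, show the empirical fluctuation inherits the vanishing order of $G$ along $W_0$, invoke Watanabe's zeta‑integral asymptotics for part~1, and run the WBIC argument of \citet{WatanabeWBIC} verbatim for part~2; the non‑i.i.d.\ issue is handled, as you anticipate, via Lyapunov's condition for the CLT together with a uniform‑tightness argument for $\psi_n$ in $C^\omega(K)$ on compacts $K\subset\interior W\setminus W_0$.

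The one point worth flagging is how the paper discharges the step you correctly identify as ``delicate''. Rather than a variance‑plus‑maximal‑inequality route to $|\eta_n(\rho(u))|\le C_n|u^k|$, the paper proves the sharper \emph{algebraic} statement that in the standard form $G_n(\varpi(u)) = u^{2k} - u^k\xi_n(u) - \omega_n(u)$ one has $\omega_n\equiv 0$ identically (\cref{omega_zero_prop}). The mechanism is not your fact~(iii) alone --- vanishing of each summand on $W_0$ only yields divisibility by $\prod_{k_j>0}u_j$, not by $u^k$ --- but the \emph{nonnegativity} of $u\mapsto q_{\varpi(u)}(\tau)$: since this function is $\ge 0$, its leading Taylor term is nonnegative, and since its $g$‑weighted integral over $\tau$ equals $u^{2k}$ with $g>0$ on the relevant set (Lemma~\ref{g_positive_given_assumption}), comparison of leading orders forces all Taylor coefficients below $2k$ to vanish pointwise in $\tau$. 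Your proposed variance bound $\mr{Var}\le c\,G$ then drops out as a corollary, so the two routes converge, but the positivity of the trajectory density is the ingredient your sketch does not name.
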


We defer proof to \Cref{theory_appendix}, specifically \ref{free_energy_section} and \ref{WBIC_section}. There, we offer a proof following and extending the arguments from \citet{WatanabeGrey, WatanabeWBIC, WatanabeGreen}. There are many details, but the main extensions are in the application of Lyapounov's central limit theorem to address the use of independently but non-identically distributed data, and the identification of \cref{key_assumption_main_text} as the only obstruction to lifting good local coordinates from $G$ to $G_n$ (this condition plays the role of the \emph{relative finite variance} condition in \citealp{WatanabeGreen}).

In fact, the treatment in \cref{theory_appendix} is more general than the statement above in several ways. First, we relax the finite Markov decision problem restriction. Second, we work in a unified setting that simultaneously encompasses both reinforcement learning and supervised learning. Third, we identify a more general fundamental condition from which we recover \cref{key_assumption_main_text} in the RL setting.

\section{Implications of the Free Energy Formula}
\label{section:predictions}

In this section, we leverage \cref{main_theorem_main_text} to derive qualitative predictions about deep reinforcement learning. We first elaborate on the local learning coefficient (\cref{LLC_sec}) and the free energy formula (\cref{section:fef}) and how they combine to describe stagewise development in the generalized posterior (\cref{section:stagewise_dev}).
Then, in \cref{section:deep-predictions}, we interpret this theory to form empirically testable predictions about how deep reinforcement learning should proceed stagewise from simple, high-regret policies to complex, low-regret policies, with policy complexity measured by the LLC.

\subsection{The Local Learning Coefficient}
\label{LLC_sec}

In the Bayesian setting, the LLC provides a rigorous formalization of the notion of the \emph{complexity} of a particular parameter.
Given a point $w^*$ in parameter space with open neighborhood $U$, according to \cref{main_theorem_main_text} we may identify $\lambda(U)$ to leading order in $n$ with the following expression:
\begin{equation}
     \lambda(U)
     \sim
     n \beta \left[ \bb E_{n,U}^\beta [G_{n}(w)] - G_{n}(\wstar) \right],
     \label{eq:LLC_loss_based}
\end{equation}
where $\bb E_{n,U}^\beta$ is the expectation with respect to the
tempered posterior restricted to the open set $U$.  We may, if we like, obtain the \emph{local} learning coefficient at the point $w^*$ by shrinking the neighborhood $U$, for instance if $W$ is a subspace of $\RR^d$  then we can take a limit over balls of diminishing radius 
\[\lambda(w^*) = \lim_{\eps \to 0} \lambda(B_\eps(w^*)).\] 
More geometrically one can equivalently realize the local learning coefficient in terms of the asymptotic growth rate of basin volumes around a local minimum \citep[see][\S 3.1]{LLC}.

\subsection{The Generalized Free Energy Formula}
\label{section:fef}

\Cref{main_theorem_main_text} explains how the local learning coefficient controls model selection during the learning process. Taking $\beta = 1/\log n$ and writing $F_n = F_{n,1}$, Parts (1) and (2) of \cref{main_theorem_main_text} together imply (see \cref{remark:fef}) the \emph{free energy formula} for a local minimum $w_0$ of $G$
\begin{equation}\label{eq:free_energy_formula}
F_n(U) = n G_n(w_0) + \lambda(U) \log n + o_{\mr P}(\log n)
\end{equation}
where $U$ is a sufficiently small open neighborhood of $w_0$. This formula describes the concentration of the posterior in a region $U$ in terms of two invariants: the empirical regret $G_n(w_0)$ (lower regret means lower $F_n(U)$, hence higher $Z_n(U)$ and thus posterior concentration) and the learning coefficient $\lambda$ (lower $\lambda$ means a simpler parametrized policy, and higher posterior concentration). Note that the prior $\phi(w)$ contributes to the free energy, and thus posterior concentration, only through constant order terms. These terms can certainly play a significant role for low $n$, but we do not consider this explicitly here.

\subsection{Stagewise Development}
\label{section:stagewise_dev}

The central insight for Bayesian deep RL that we draw from the free energy formula \eqref{eq:free_energy_formula} is that learning consists of an evolving tradeoff between \emph{regret} and \emph{complexity}. This is the translation to RL of a perspective due to \citet[\S 7.6; see also \citealp{Balasubramanian,Chen+2023}]{WatanabeGrey}.

Let $w_1, w_2$ be two local minima of the regret $G$ with respective local neighborhoods $U_1, U_2$. To compare the concentration of the posterior in these two regions we compute the likelihood ratio (setting $\beta = 1$)
\[
\frac{\mu_n(U_1)}{\mu_n(U_2)} = \frac{Z_n(U_1)}{Z_n(U_2)}
\]
which is greater or less than one according to the sign of negative logarithm (which is negative if $U_1$ is more preferred than $U_2$), which has asymptotic expansion
\begin{align*}
F_n(U_1) - F_n(U_2) &= (nG_n(w_1) + \lambda(U_1)\log n  - (nG_n(w_2) + \lambda(U_2) \log n + o_{\mr{P}}(\log n) \\
  &= n(G_n(w_1) - G_n(w_2)) + (\lambda(U_1) - \lambda(U_2))\log(n) + o_{\mr{P}}(\log n)\\
  &= n \delta G - \delta \lambda \log n + o_{\mr{P}}(\log n) = \Big( \frac{n}{\log n} \delta G - \delta \lambda + o_{\mr{P}}(1) \Big) \log n\,.
\end{align*}
where $\delta G = G_n(w_1) - G_n(w_2)$ and $\delta \lambda = \lambda(U_2) - \lambda(U_1)$. From this we infer the following:
\begin{itemize}
    \item \textbf{Observation 1}: if $\delta G > 0$ then for $n$ large the posterior prefers $U_2$ (the lower regret solution).
    \item \textbf{Observation 2}: if $\delta G > 0$ and $\delta \lambda > 0$ then the posterior may prefer $U_1$ for small $n$ (a higher regret, but simpler, parametrized policy can outcompete a lower regret policy).
    \item \textbf{Observation 3}: if $\delta G > 0$ and $\delta \lambda > 0$ then there exists a \emph{critical dataset size} $n^*$ such that the posterior prefers $U_1$ for $n < n^*$ and $U_2$ for $n > n^*$. This $n^*$ is the solution\footnote{We are ignoring the $o_{\mr{P}}(1)$ term in this presentation of phase transitions. Whether or not this is reasonable is hard to address theoretically, and we treat it as an empirical question. It is possible to eliminate a Bayesian phase transition, or delay it, by changing the prior in $U_1, U_2$.} of
    \[
    \frac{n^*}{\log n^*} = \frac{\delta \lambda}{\delta G}\,.
    \]
\end{itemize}
Note that since the relationship between the free energy and the posterior is exponential, these switches in the sign of the free energy as a function of $n$ can translate to rapid changes in posterior concentration. Following \citet{WatanabeGreen} and \citet{Chen+2023}, we refer to these sudden shifts in the concentration of the posterior as \emph{Bayesian phase transitions}.

This suggests a picture of the Bayesian learning process as a series of phase transitions, where the posterior jumps between neighborhoods of local minima $w_i$ of $G$, according to a changing tradeoff between $G$ and $\lambda$ in the free energy formula. The most typical transition decreases $G$ and increases $\lambda$ (the posterior is ``willing,'' at some dataset size, to pay the free-energetic cost of higher complexity in exchange for lower regret).

% This is not that surprising if we think of learning as the process of incorporating information from observed trajectories into the network parameter, but framed differently it has a counter-intuitive implication, flagged already in the introduction: given any finite number of observed trajectories, a \emph{more optimal policy} (lower regret) may not be \emph{more optimal from a Bayesian perspective} (lower free energy) according to the concentration of the generalized posterior. This has implications for AI alignment, which we review in \cref{section:relevance_alignment}.

\subsection{Predictions for Deep Reinforcement Learning}
\label{section:deep-predictions}
\label{section:bayes_vs_sgd}

Thus far we have given a formal description of the dynamics of the generalized Bayesian learning process in RL as a sequence of \emph{Bayesian phase transitions} in which the generalized posterior $p(w|D_n) \propto \exp(-nG_n(w)) \phi(w)$ shifts its concentration between local minima of the population regret $G$. Here, $n$ denotes the number of trajectories ``seen'' by the learner and incorporated into the posterior update.

In SGD-based deep RL, the training process draws \emph{for each gradient step} batches of $B$ trajectories. In a policy gradient algorithm like REINFORCE, the gradient estimator is essentially $\nabla_w G_n(w)$ with $n = B$. After $S$ steps of gradient descent, information from $B \cdot S$ trajectories have influenced the current parameter.

To attempt to line these two learning processes up as much as possible, take the sequence of behavioral parameters to be $w_1, \ldots, w_1, \ldots, w_S, \ldots, w_S$ where $w_i$ is the $i$th step of gradient descent and each parameter is repeated $B$ times (cf., \cref{section:off_policy_learning}). Then, take the $\tau_i$ to be the contents of mini-batches sampled during SGD. $D_n$ is then the set of trajectories seen so far by SGD.

Note that the precise relationship between $p(w|D_n)$ and, e.g., the distribution of SGD endpoints (with initialization and sampling providing the randomness) is yet to be formally characterized (see \cref{section:future_work}). However, we can take the theoretical phenomenon of Bayesian phase transitions and derive qualitative predictions about the dynamics of SGD-based RL, which can then be empirically tested:
\begin{enumerate}
    \item We predict the existence of \emph{dynamical transitions} in SGD-based deep RL, such that the optimization trajectory moves from plateau to plateau via sudden drops in the regret.
\end{enumerate}
Stagewise development has been observed in deep RL before \citetext{see, e.g., \citealp{Clift2020Logic, mcgrathzero}; also in neuroscience, e.g., \citealp{StagewiseMice}}. Our theory leads us to a more specific prediction:
\begin{enumerate}
    \setcounter{enumi}{1}
    \item The LLC should rise during these dynamical transitions, corresponding to an intuitive increase in policy complexity.
\end{enumerate}
In summary, we predict that SGD-based deep RL should proceed stagewise from simple, high-regret policies to complex, low-regret policies, with policy complexity measured by the LLC.

These predictions parallel the situation in supervised learning. In particular, \citet{Chen+2023} propose the \emph{Bayesian antecedent hypothesis,} which claims that dynamical transitions are the ``shadow'' of Bayesian phase transitions in the following sense: each plateau corresponds to a region of parameter space around some local minima of the regret (a ``phase'') and if there is a dynamical transition between two plateaus, there should be a corresponding Bayesian phase transition (the Bayesian antecedent) in which the posterior shifts concentration between the corresponding regions.
        
Evidence for dynamical transitions in which the LLC increases have been presented by \citet{Chen+2023} in the case of small auto-encoders, and by \citet{Hoogland+2025} and \citet{TransientRidge} for transformers trained on language data or synthetic in-context linear regression data.
 
\section{Empirical Results}\label{section:empirical_results} 

Building upon the theoretical foundations, we wish to demonstrate that estimators for 
the local learning coefficient can be used to provide an 
effective complexity measure that tracks stagewise development of reinforcement 
learning models.  We will demonstrate this in a toy model that nevertheless 
exhibits phase transitions during training and with respect to the 
variation of hyperparameters.

\subsection{Summary} \label{sec:env_summary}
We study a simple gridworld reinforcement learning problem in which an agent is incentivized to find a shortest path between two locations in a grid, and study the training dynamics of a deep neural network parameterizing this problem under policy gradient optimization.  There are two key hyperparameters that we vary.
\begin{itemize}
 \item $\gamma \in [0,1]$, the \emph{discount rate}.
 \item $\alpha \in [0,1]$, the mixing parameter for our initial state distribution.  The initial state distribution is set to $\alpha \Lambda_{\mr{uniform}} + (1-\alpha)\Lambda_{\mr{corner}}$ where in $\Lambda_{\mr{uniform}}$ the goal is uniformly distributed over all possible squares in the grid and in $\Lambda_{\mr{corner}}$ it is always in the top-left corner.
\end{itemize}

During training we see stagewise development in policy space: the model transitions between plateaus in which the regret is roughly constant, the policy remains in a fixed region with qualitatively consistent behavior and the transitions between plateaus are relatively rapid.  We refer to periods where the regret is approximately constant as \emph{phases} and transitions between them as \emph{phase transitions} (and one might compare this phenomenon to the phase transition seen in a toy model of supervised learning in \citealp{Chen+2023}).  For example, we will discuss some of the following common phases in policy space below.
\begin{itemize}
 \item \emph{Phase 1}: in all cases move up with probability $0.5$ and left with probability $0.5$.
 \item \emph{Phase 2a}: move in a deterministic direction either up or left towards the goal, independent of the goal location.
 \item \emph{Phase 2b}: move in a deterministic direction either up or left in a way that always moves towards to goal if possible.
 \item \emph{Phase 3}: move deterministically towards the goal location.
\end{itemize}
We characterize a model as behaving according to one of these phases automatically -- each phase describes a subspace of policy space and we characterize a phase as present when the $L^2$ distance from this region is less than a cutoff value ($0.15$ times the maximum distance in the policy polytope.)

\begin{remark}
 As $\gamma$ becomes smaller transitioning away from phase 1 becomes increasingly incentivized, since a smaller $\gamma$ leads to $\gamma^n$ much more rapidly
 decaying to zero as a function of $n$. This means that paths that are much shorter dominate.
 
 As $\alpha$ becomes larger, transitioning away from phases 1 and 2a is increasingly incentivized since it is much more likely that the goal location will be distributed uniformly rather than only found in the top-left corner. So too for transitioning from phase 2b to phase 3.
\end{remark}

In our experiments we estimate how the value of the LLC (which acts 
as our measure of complexity) changes during training and how it is affected by variation in hyperparameters $\alpha$ and $\gamma$.  We observe the following and refer to \cref{phase_transition_section} for more details.
\begin{enumerate}
 \item Phase transitions are accompanied by rapid increases in the LLC estimator, and later phases correspond to larger LLCs.
 \item We see gradual decrease in the LLC estimate within phases 2 and 3 -- the policy, and therefore the regret, does not change significantly but the representation of the policy within parameter space does change.  The LLC of phase 1 exhibits a consistent value of $29.54\pm6.35$ during training across independent seeds.
\end{enumerate}

\subsection{Environment and Agent}\label{section:environment_model_training}

\subsubsection{Environment}

We make use of a simplified version of the \emph{Cheese in the Corner} (CITC) environment described in \citet{Misgen}. The environment is a grid of size $13 \times 13$ with a border of walls around the edge (leaving a $11 \times 11$ grid of empty cells that can be
navigated). One of the central $11 \times 11$ cells contains the goal location (the \emph{cheese}),
and another contains the initial location for the agent (the \emph{mouse}). All of the other cells are empty.
The mouse can be moved in any one of the four cardinal
directions, and if the move would cause the mouse to collide with a wall, the location of the mouse is unchanged.
The objective is to move the mouse to the cell containing the cheese.

The episode ends either when the mouse moves into the same cell as the cheese,
and receives $+1$ return for doing so, or when the environment times
out after $T_{\mr{max}}$ timesteps. On all other timesteps the return is $0$.
The agent is incentivized to maximize the standard expected discounted return, given by
$
\mathbb{E}_{\tau \sim q_w} \left[ \sum_{t=1}^{\infty} \gamma^{t-1} r(s_t) \right]
 = \mathbb{E}_{\tau \sim q_w} \left[ r(s_1) + \gamma r(s_2) + \ldots \right]
$
where $\gamma$ is the discount factor and $r(s_t)$ is the return associated to the state at timestep $t$.
Therefore for a trajectory that reaches the cheese after $T$ steps the return received by the agent
is $\gamma^{T-1}$ if $T \leq T_{\mr{max}}$ and $0$ otherwise.

The state seen by the agent is a tensor of shape $(13 \times 13 \times 3)$,
where the third dimension is a one-hot encoding of the presence 
of a wall ($[1, 0, 0]$), the mouse ($[0, 1, 0]$), and the cheese ($[0, 0, 1]$).
An empty cell is encoded as $[0, 0, 0]$. There are $(11 \times 11) \times (11 \times 11 - 1) = 14,520$ possible states the CITC environment can be in (though given that the cheese cannot move, the state
space partitions into $11^2$ possible connected subsets). We vary the distribution over the choice of cheese location to see the effects on the policies learned by the model during training.

We consider the initial state distributions:
\begin{itemize}
\item $\Lambda_\text{corner}$: The goal always spawns in the top left corner and the starting location is distributed uniformly over all other cells.
\item $\Lambda_\text{uniform}$: The goal and the starting location are distributed uniformly over all distinct pairs of cells.
\item $\Lambda_\alpha$: The initial state is sampled
from $\Lambda_\text{corner}$ with probability $1-\alpha$ and from $\Lambda_\text{uniform}$ with probability $\alpha$. We will write
$\Lambda_\alpha = (1-\alpha) \Lambda_\text{corner} + \alpha \Lambda_\text{uniform}$
to indicate that $\Lambda_\alpha$ is a mixture distribution.
\end{itemize}
We call $\alpha$ the \emph{mixing parameter} for the initial state distribution.

\subsubsection{Model}

The architecture of the model directly follows that used for \citet{Misgen}, which itself is a modified version of the \emph{Large} architecture described in \citet{IMPALA}: A convolutional network with 15 convolutional layers, but the  LSTM that normally follows is replaced  by a simple feedforward network that takes the output from the convolutional block.  Full details of the architecture are given in \citet{IMPALA}.

\begin{figure}[t]
\centering
\includegraphics[width=0.22\textwidth]{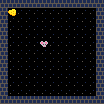}
\hspace{0.3cm}
\includegraphics[width=0.22\textwidth]{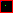}
\hspace{0.3cm}
\includegraphics[width=0.22\textwidth]{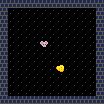}
\hspace{0.3cm}
\includegraphics[width=0.22\textwidth]{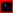}
\caption{Two possible states the environment can be in, together with the observation
seen by the agent.}
\label{fig:cheese_envs}
\end{figure}

The model itself is trained using standard vanilla REINFORCE with no baseline.
A batch of levels is sampled from the environmental distribution $\Lambda_\alpha$,
for which a trajectory for each is generated via interaction with the model.
Actions are sampled on-distribution from the output of the model.
The policy is then updated from the sampled trajectories using the gradient estimator
\begin{equation}\label{eq:training_gradient}
\frac{1}{B} \sum_{i=1}^B \sum_{t=1}^{T} r(\tau_{i,\ge t}) \log \pi_w(a_{i,t} | s_{i,t}, a_{i,t-1})
\end{equation}
where $r(\tau_{i,\ge t}) = \sum_{j=t}^{T} \gamma^{j-t} r(s_{i,j}, a_{i,j-1})$ is the \emph{reward-to-go} at time $t$ associated to the $i^{\text{th}}$ trajectory $\tau_i = (s_{i,1},a_{i,1}, \ldots, s_{i,T})$.
We vary both the environmental distribution parameter $\alpha$ as well as the discount
rate $\gamma$ to study the effects on the policies learned by the model during training. The batch size $B$ and trajectory length $T$ are given in \cref{table:hyperparams}.

Given the simplicity of the environment and given that there are only $121 \times 120 = 14520$ states we can analytically compute the 
optimal reward $R_{\mr{max}}$ and the exact reward $R(w)$ 
 of a given current policy $\pi_w$. This allows us to analytically compute the regret,
 $G(w) = R_{\mr{max}} - R(w)$ during various stages of training, as well
 as visualize the policy for each possible location of the agent given a goal location.

\subsection{Details of LLC Estimation}\label{section:llc_estimation_main}

 We define an \emph{estimator} for the local learning coefficient (see \cref{LLC_sec}) using the preconditioned stochastic gradient Langevin dynamics (pSGLD) algorithm~\citep{li2015preconditionedstochasticgradientlangevin} to estimate sampling from the Gibbs posterior. This combines RMSNorm-style adaptive step sizes with SGLD~\citep{WellingTeh}.  Rather than imposing the sharp cutoff associated to the restriction to a neighborhood we use a Gaussian prior centered at $w^*$ with variance $\sigma^2$.  The Gibbs posterior then takes the form
\begin{equation}\label{eq:tempered_posterior_defn_llc}
    p(w|\wstar, \beta, \sigma^2)
    \propto
    \exp \left\{
         -n \beta G_{n,\alpha}(w) - \frac{1}{2\sigma^2} \|w-\wstar\|^2_2
    \right\}\,.
\end{equation}
The hyperparameters are the sample size $n$, the inverse temperature $\beta$, which controls the contribution of the regret, and the variance $\sigma^2$, which controls proximity to
$\wstar$. For more details on these hyperparameters, see \citet{WatanabeWBIC,LLC,Hoogland+2025}.  In \textit{loc.\ cit.\ }the inverse of the variance $\sigma^2$ is referred to as the \emph{localization strength}. Details of the hyperparameters used for LLC estimation can be found in \cref{appendix:llc_estimation}.

\begin{remark}[Off-distribution LLC estimation] \label{off_dist_rmk}
 The interpretation of the LLC estimator as an approximation to the true LLC makes sense near any local minimum $w^* \in W$ of the loss function $G$.  We observe that even though our loss (the regret function) depends on the hyperparameters $\alpha$ and $\gamma$, as long as $\alpha,\gamma \in (0,1)$ the subspace of global minima, and the subspaces associated to the intermediate phases discussed below, will be independent of the hyperparameters.  Therefore it makes sense to compute the LLC estimator of a trained model with respect to \emph{different} values of $\alpha,\gamma$ to those that were used during training.  In particular we may make direct comparisons of the LLC estimates of models trained with different hyperparameter values by estimating LLC using a fixed loss function.
\end{remark}

\subsection{Phase Transitions} \label{phase_transition_section}

An \emph{optimal policy} $\pi^*$ for this environment is any policy where the agent moves closer to the goal with probability one. Any policy that performs
a different strategy must be suboptimal for all mixing parameters
$\alpha$ such that $0 < \alpha \leq 1$ and for all 
discount rates $0 < \gamma < 1$.\footnote{If $\gamma=0$ then 
the behavior of the policy more than one square away from the goal is irrelevant, and for $\alpha=0$ the policy only need to be optimal if the goal is in the top left corner.}
We note that optimal policies are not unique for this environment, 
as there are in general many possible
paths from the agent to the goal.

We are interested in the dynamics of the training process itself and how the learned policy changes over time as a function of the mixing parameter $\alpha$
and the discount rate $\gamma$. We observed that the training process itself exhibits \emph{phase transitions}
in policy space: the policy is stable for a long period of training before abruptly
changing to a new policy as a better strategy is found, rather than continuously moving towards
an optimal policy.

Over many training runs we find that policies can be categorized into one of the following classes:
\begin{itemize}
    \item $\pi^\text{0}$: A uniformly random policy (which is the default at the start of training).
    \item $\pi^\text{1}$: Policy is constant with respect to the state, and moves left with probability $0.5$, or
    up with probability $0.5$.
    \item $\pi^\text{2a}$: Policy is constant with respect to the state, and moves towards
    the top left corner along a shortest path.
    \item $\pi^\text{2b}$: Same as $\pi^\text{2a}$ but choosing a shortest path that passes through the goal on the way to the corner when possible.
    \item $\pi^\text{3} = \pi^*$: An optimal policy that moves the agent along a shortest path to the goal square regardless of where the goal is located.
\end{itemize}

Not all phases are present in every training run, and the dynamics of how policies
transition between phases depend strongly on the choices of $\alpha$ and $\gamma$.
If $\alpha = 0$ then the model has only ever seen environments where the goal square 
is located in the top left corner,
and as shown in \citet{Misgen}, the model learns to \emph{goal misgeneralize},
and will learn to desire only the top left corner, even when the goal square is placed elsewhere.
If $\alpha=1$ then there is no
pressure to choose the top left corner over any other state, and the model will
not learn intermediate phases like $\pi^\text{1}$,$\pi^\text{2a}$,$\pi^\text{2b}$
that privilege the top left corner. The interesting behavior occurs 
for intermediate values of $\alpha$, where the model goes through a sequence of phases, starting out with  $\pi^\text{1}$, then developing to gradually more complex phases such as $\pi^\text{2b}$ and eventually $\pi^\text{3}$.

A training run with $\alpha=0.6,\gamma=0.98$ is shown in \cref{fig:LLC_example}.
We plot the regret, the difference between the value of the current policy,
and that of an optimal policy, as a function of the number of environment steps 
seen during training. Three chosen points during training are illustrated with depictions of the learned policy in each phase.

With the exception of the default policy $\pi^\text{0}$, training plateaus in the intermediate suboptimal policies $\pi^\text{1}$ and $\pi^\text{2b}$ for a long period during training, and the transitions between them are relatively rapid.

\subsubsection{Automatic Phase Detection} \label{subsec:auto_phase_detect}
We will automatically detect when a model lies within a particular phase using an $L^2$ distance in policy space.  Recall that we may identify the space $\Pi$ of policies in the cheese-in-the-corner model as a product of 3-simplices, since a policy must specify a probability distribution on the set $\mc A$ of actions (move up, down, left, right), or a point in the 3-simplex $\Delta(\mc A) = \Delta(\{U,D,L,R\})$, for each state.  In other words
\[\Pi = (\Delta(\mc A)^\mc S) \sub \RR^{4|\mc S|}\]
where the set $\mc S$ of states has $11^4 - 11^2 = 14520$ elements.  Each phase may be identified as follows.
\begin{itemize}
 \item Phase 1 corresponds to a specific point in $\Pi$ -- the point associated to the probability distribution $\pi(U) = \pi(L) = 1/2$ across all states.  We may compute the $L^2$ distance from this point directly.
 \item Phase 2a corresponds to the intersection of $\Pi$ with the linear subspace where $\pi(R) = \pi(D) = 0$ in all states, $\pi(U) = 0$ in states where the agent is on the top row and $\pi(L) = 0$ in states where the agent is on the left row.  We use the $L^2$ distance from this subspace.
 \item Phase 2b corresponds to the subspace of phase 2a where, in addition, if the agent is directly below the goal $\pi(L) = 0$ and if the agent is directly to the right of the goal $\pi(U) = 0$.  We again use the $L^2$ distance from this subspace.
 \item Phase 3 corresponds to the intersection of $\Pi$ with the linear subspace where $\pi(U) = 0$ when the agents vertical position is greater than or equal to that of the goal, $\pi(D) = 0$ when the agents vertical position is less than or equal to that of the goal,  $\pi(R) = 0$ when the agents horizontal position is greater than or equal to that of the goal and $\pi(L) = 0$ when the agents horizontal position is less than or equal to that of the goal.  We again use the $L^2$ distance from this subspace.
\end{itemize}

Given a phase $P$ let $d_P$ denote the maximum $L^2$-distance from the associated subspace (for instance if $P$ is a $D$-dimensional subspace then $d_P = \sqrt{4|\mc S| - D}$).  We let $\delta = 0.15$ and say that a point $w \in W$ in parameter space is approximately in phase $P$ if
\[\mr{min}_{\pi \in P} \|\pi_w - \pi\|_2 < \delta d_P.\]
The value of $\delta$ is chosen to be smaller than the normalized distance between phase 1 and phase 2b in order to detect the transition between those phases.
The precise choice of $\delta = 0.15$ is not critical for the location of 
the phase transitions, as there is a range over which we could choose $\delta$ with only minor changes
to the corresponding locations of phase transitions (see \Cref{app:delta_range}).

\subsubsection{Distribution of Phase Transitions}\label{section:distribution_phase_transitions}

The training trajectory -- and therefore the time steps at which phase transitions occur -- are random variables, in this section we discuss their empirical distribution.  Specifically we will present statistics in twelve cases: where the model was trained with $\alpha = 0.4, 0.5, 0.6$ and $0.7$ and with $\gamma = 0.97, 0.98$ and $0.99$.  In each hyperparameter case we trained models with these hyperparameters and $\sim 25$ independent random seeds.  In Figure \ref{fig:phase_transition_dist} we exhibit the distribution of the training steps at which phase transitions occur as a function of $\gamma$ for each of the four sampled $\alpha$-values.  The models transition into phase 1 consistently early in training, the transition from phase 1 to phase 2b depends on $\gamma$ as we will discuss below.

\begin{figure}[htbp]
\centering
\includesvg[width=\linewidth]{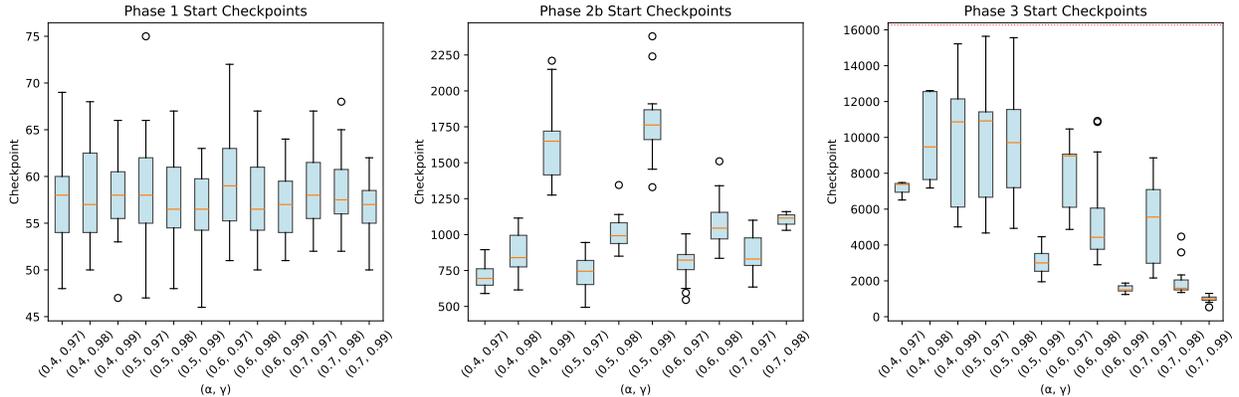}\\[0.4em]
\caption{A series of plots illustrating the model transitions into phases $\pi^1$, $\pi^{2b}$ and $\pi^3$ respectively, as a function of 
mixing parameter $\alpha$ and discount rate $\gamma$. We report the number of checkpoints (policy gradient steps) since training began before we enter each phase. The red line in phase 3 represents when training runs were terminated.} 
\label{fig:phase_transition_dist}
\end{figure}

We may also study the distribution of the observed phase transitions as a function of $\alpha$ and $\gamma$. 
We can note the following observations: 
\begin{enumerate}
    \item The number of training steps before transitioning into $\pi^1$ is very low, and unrelated to the choice of $\alpha$,$\gamma$ presented.
    \item The higher the discount rate, the longer it takes for the model to converge to $\pi^{2b}$, if at all.
    For $\alpha,\gamma = 0.6, 0.99$ and $0.7, 0.99$, the policy region $\pi^{2b}$ is never observed, but is skipped entirely, transitioning
    from $\pi^1 \to \pi^3 $ directly without visiting $\pi^{2b}$ first.  For $\alpha,\gamma = 0.7, 0.98$, during training the policy would sometimes, but not always, skip $\pi^{2b}$ and transition directly from $\pi^1 \to \pi^3$.
\end{enumerate}

We can also try to predict the phase transitions based on the hyperparameters with which it was trained, see \cref{appendix:fit_transition_step}.

\subsubsection{LLC Estimates and Phases}
\Cref{fig:LLC_example} shows the LLC estimate and regret across training for a representative model trained with $\alpha = 0.6$ and $\gamma = 0.98$, exhibiting two phase transitions.  We present all the estimated LLC curves in \cref{LLC_plot_appendix}, \cref{fig:llc_grid_alpha_0.5_l2_norms}.  We see rapid increases in the LLC estimate at each phase transition, and the increasingly complex phases are associated with increasingly large LLC estimates.  We also note a gradual decrease of the LLC estimates in the second halves of phases 2 and 3 as stochastic gradient descent converges to simpler representations of the corresponding local minimum. 

Next we discuss the distribution of the LLC estimates within each phase.  We performed LLC estimation for 25 independent models trained with $\alpha = 0.6$ and $\gamma = 0.98$ -- those hyperparameters being selected to maximize the proportion of runs exhibiting two phase transitions.  We estimated the LLC at six equally-logarithmically-spaced checkpoints in each of the three phases. \Cref{fig:LLC_statistics} shows the distribution of the LLC estimate at these points during training. Using the four internal points within each phase we see mean LLC estimates of 29.54 in phase one (with sample variance $\hat \sigma = 6.35$), 125.53 in phase two ($\hat \sigma = 46.47$) and 328.31 in phase three ($\hat \sigma = 84.67$).

\begin{figure}[htbp]
\centering
\includegraphics[width=0.6\linewidth]{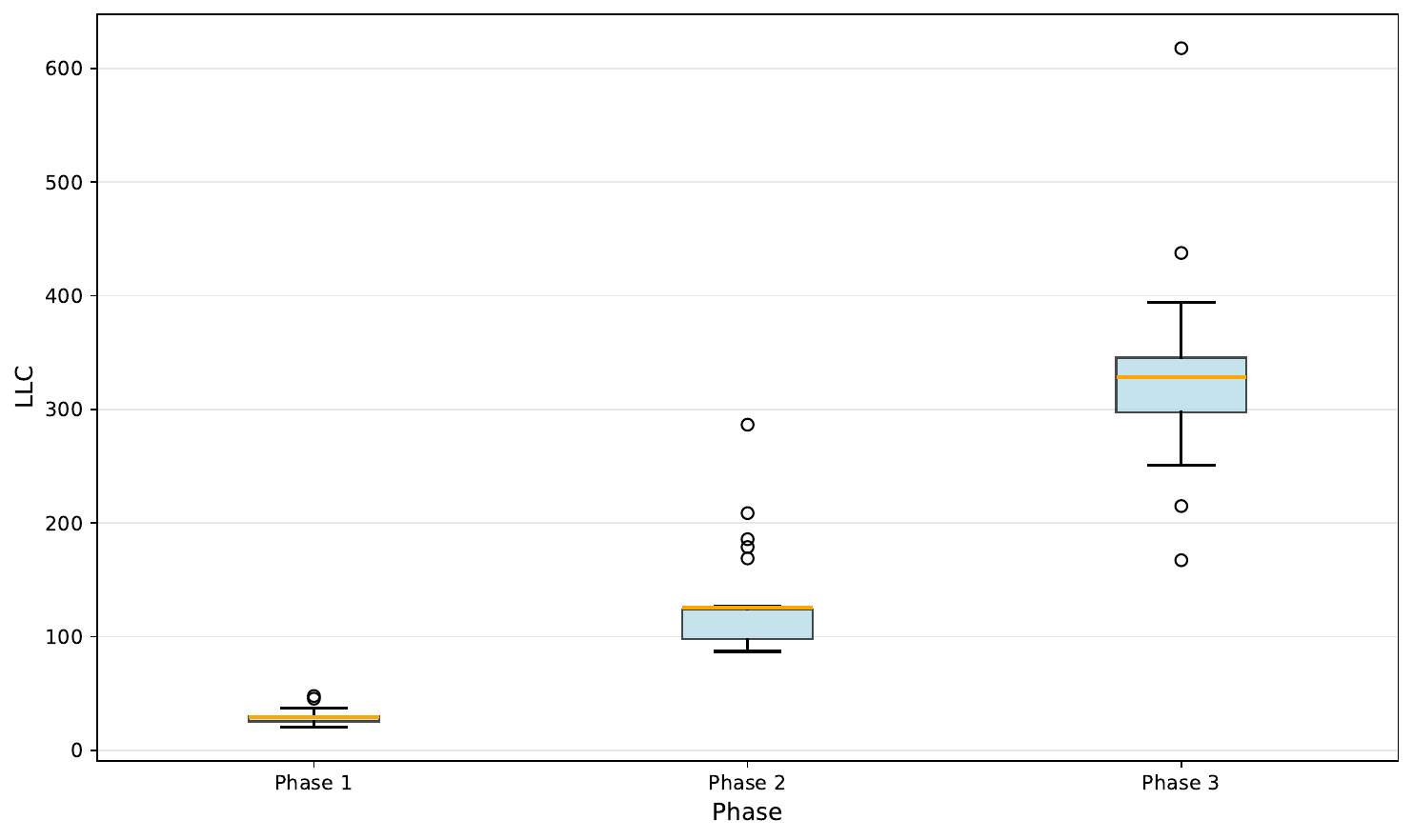}
\caption{Distribution of average LLC estimates across phases. The LLC has been estimated on distribution with $\alpha=0.6$ and $\gamma=0.98$.}
\label{fig:LLC_statistics}
\end{figure}

\section{Discussion}

In this section, we discuss several implications of our results. In \cref{section:relevance_alignment}, we discuss some implication of stagewise learning dynamics for the important problem of AI alignment. In \cref{section:complexity_in_RL}, we survey related work on model complexity and RL. In \cref{section:future_work}, we discuss open questions and new directions raised now that we have extended SLT to RL.

\subsection{Relevance to Alignment}\label{section:relevance_alignment}

Our results have implications for AI alignment. There is a subtle but important distinction between the prevailing (albeit implicit and underdeveloped) view on how policies of deep RL agents develop over training, and the SLT-informed view that this paper provides:

\begin{itemize}
\item \textbf{Standard view}: policies earlier in training are qualitatively similar to policies later in training, which are obtained by \emph{refining} and \emph{adding to} earlier policies. If we haven't converged to an optimal policy it's not for some fundamental reason, it's just a flaw in the optimization process or lack of compute.
\item \textbf{SLT-informed view}: policies earlier in training can be qualitatively different to policies later in training and phase transitions can involve fundamental changes in the underlying algorithm. If we haven't converged to an optimal policy, it may be because it is being screened by a lower complexity solution that is Bayes optimal at the given dataset size.
\end{itemize}

By the logic of \cref{section:stagewise_dev} a lower complexity solution can be preferred to a lower regret solution if
\begin{equation}\label{eq:deltalambda_deltaG}
\delta \lambda > \frac{n}{\log n} \delta G\,.
\end{equation}
In cases where there are any theoretical guarantees at all, alignment assumes ``near enough'' convergence to an optimal policy, which is safe \citep{hadfieldmenell2024cooperativeinversereinforcementlearning,kosoy2023learningtheoretic}. Let $\delta G$ be the allowed gap to this optimal safe policy and let us refer to any policy with higher regret as ``misaligned''. We analogize Bayesian learning to SGD training, under some relation between dataset size $n$ and the number of trajectories seen over training $B \cdot S$ as in \cref{section:bayes_vs_sgd}, and let $C = n/\log n$ be determined by the available compute. Then by \eqref{eq:deltalambda_deltaG} the generalized posterior can prefer a local minimum of $G$ which is \emph{misaligned but simpler} than the optimal (aligned) policy by more than $C \delta G$ \citep{YAWYE}.

Goal misgeneralization \citep{pmlr-v162-langosco22a,Shah+2022,Misgen} is arguably an example of this phenomena. Indeed \citet[\S 2.2]{pmlr-v162-langosco22a} give inductive bias towards simplicity as their main suggestion for the origin of goal misgeneralization. Our experimental setting is a simplified form of the one studied in that work, and in the training run presented in \cref{fig:LLC_example} if training had stopped at checkpoint 2k, the policy would be ``misaligned'' in the sense that the designer of the environment might have intended a trained agent to seek the cheese (this is after all the optimal policy). 

Similar logic applies to reward hacking in RLHF \citep{lambert2026reinforcementlearninghumanfeedback}. Reward models trained on human preference data may learn simple correlations (e.g., ``longer responses are better'' \citealp{shen2023looselipssinkships} or ``lists are preferred'' \citealp{zhang2025listsemojisformatbias}) rather than complex representations of genuine quality \citep{gao2022scalinglawsrewardmodel}. If the simpler pattern achieves comparable loss on the preference dataset, the free energy formula (for supervised learning) predicts it will be favored during training. When this reward model is used to train the policy, these biases are amplified and also interact with the simplicity bias of the RL training process.

There is also a connection to instrumental convergence \citep{omohundro2018basic,bostrom2014superintelligence}. Consider an RL agent trained on a large set of diverse environments. If the environment is easily recognizable, the optimal policy $\pi^*$ will detect the environment and then act in a way tailored to that specific task. However, acquiring resources or control of the environment may be a commonly represented pattern in successful policies across a broad range of environment, and \emph{these behaviors may be simpler to represent} than a large number of tailored policies. Hence a parameter which parametrizes an \emph{instrumentally convergent} policy $\pi^{\text{inst}}$ might be preferred by the posterior even if it is higher regret than $\pi^*$.

\subsection{Model Complexity in Reinforcement Learning}\label{section:complexity_in_RL}

In the (generalized) Bayesian setting, SLT tells us that the LLC provides a rigorous formalization of the notion of complexity of models. Our empirical results are evidence that this notion of complexity is also applicable in the context of stochastic optimization and deep RL. In this section, we survey related work on model complexity and reinforcement learning.

Measurements of complexity, including those quantifying flatness of a local minimum in the loss landscape have been widely studied for neural networks \citep{Hochreiter, Keskar, Neyshabur, DMRB}. Likewise many authors have explored the idea of simplicity bias in neural networks as an implicit regularization affecting generalization \citep{Shah, Teney, Perez, Morwani}. The notion of the loss-complexity tradeoffs and transitions between different solution classes is studied in \citet{Singh, MunnWei, TransientRidge, Wurgaft}.

The direct study of complexity bias in reinforcement learning specifically has been less explored.  The geometry of the reward landscape was explored visually in many examples in \citet{CliffDiving}, which demonstrated the appearance in low-dimensional cross-sections of ``cliffs'' and ``plateaus'' in the landscape.  \citet{Boucher} studied the inductive bias introduced by the technique of maximum entropy learning \citep{MaxEntropy} using a range of complexity measures including those that quantified local flatness of a minimum.  This idea of flatness specifically as a bias favoring more robust optima was investigated in \citet{FlatReward}.  \citet{Boopathy} studied inductive biases in a range of contexts including reinforcement learning using an information-theoretic measure of complexity.  Other authors have applied complexity bias to improve learning performance in reinforcement learning problems \citep{TERL, SimBa}, while others have studied the generalization of RL agents in grid world similar to the environments considered in this paper \citep{RLGeneralization,Misgen}.

The approach taken in this paper of characterizing complexity of a point in parameter space in terms of the local geometry of the regret function is complementary to work that characterizes the complexity of the function class represented by parameter space (including those inspired by Rademacher complexity and Vapnik--Chernovenkis dimension in supervised learning).  This latter approach is taken for reinforcement learning in the complexity measures developed in \citet{BellmanRank, Eluder}.  The \emph{decision-estimation coefficient} defined in \citet{Foster} also follows this approach but with a structure that permits localized estimation near a point in parameter space.

We lastly mention some of the theoretical literature on the dynamics of reinforcement learning.  The dynamics under policy gradient learning was studied in an illustrative model for reinforcement learning in \citet{RLPerceptron}.  The connection between complexity and stagewise development has recently been explored in the supervised learning case, particularly for large language models \citep{WeiEmergent,ChenSudden,ICL1,Hoogland+2025}.

\subsection{Open Questions and Opportunities}
\label{section:future_work}

In this section, we give a discussion of some further directions that the extension of singular learning theory to the reinforcement learning context allows one to pursue.

\subsubsection{Dynamics of Deep Reinforcement Learning}

Understanding the inductive biases of deep RL agents is a fundamental challenge for the science of deep learning.

In this paper, we have bridged the first part of the gap in our current understanding. By extending singular learning theory to an RL setting, we offer the first theory that accounts for the role of geometric degeneracy in reinforcement learning. The qualitative correspondence between stagewise development in our theoretical model and our experimental results is evidence that our theoretical framework has captured something of the fundamental role played by degeneracy in SGD-based deep RL.

This is progress in the right direction, but, as we have discussed, it remains an open question to develop a complete theoretical account of the inductive biases of deep RL proper. In particular, we still need to account for the role played by a local, gradient-based, stochastic optimisation, as compared to the idealized learning process of progressive generalized posterior updates. Our results suggest that a promising path forward lies in studying the geometry of degenerate critical points, such as with the algebraic geometer's toolkit.

\subsubsection{Fundamental Conditions and Stochastic Transitions}

Another important theoretical question is the following. To what extent is it possible to relax the assumption that the transition functions of the Markov problem are deterministic?

We may immediately relax this condition a little to the condition given in \cref{key_assumption} -- that optimal policies may obtain optimal reward with probability one -- but new techniques will be required to go beyond this.  Note that this condition is analogous to a hierarchy of assumptions that appear in Watanabe's work in the context of distribution learning from iid data (essential uniqueness, renormalizability as in \citealp{WatanabeRenormalizable} or the relative finite variance condition in \citealp{WatanabeGreen}).  An approach to generalize beyond this might build on the non-essentially-unique example of \citet{NagayasuWatanabe}, in which a more significant term appears in the free energy.

\subsubsection{Interpretability and Patterning via Weight-Refined LLCs and Susceptibilities}

The extension of SLT to the RL setting also opens the door to some promising practical research directions, based on recent progress applying SLT to develop interpretability and training tools in the supervised learning setting.

\citet{Wang} demonstrated that one can detect the development of structure in parameter space during training by studying so-called \emph{weight-refined} LLCs -- LLCs for the restriction of the loss function to a subspace of parameter space.  They study transformer models and detect differentiation of attention head roles during training using this method. Our work shows that these techniques are also applicable in the reinforcement learning context.  By estimating refined LLCs for, for instance, individual layers of the deep neural network in deep reinforcement learning we may detect development of internal structure that is not visible from looking at the policy alone.

The techniques of singular learning theory may be applied to the expectation values of more general observables with respect to the Gibbs posterior.  What's more, if the loss function one considers depends on a hyperparameter (in our setting we saw the dependence of the regret on the hyperparameters $\alpha,\gamma$) one can study the infinitesimal change in these expectation values with respect to the variation of these hyperparameters.  \citet{Susceptibilities} study exactly this setting for supervised learning: their \emph{susceptibilities} correspond to the variation of weight-refined LLCs under infinitesimal perturbations of the data distribution.  In the reinforcement learning setting one can do the same thing, where one varies either the reward function of the Markov problem or the initial state distribution.

One can take this a step further.  \citet{Wang+Murfet2026} introduce a technique, \emph{patterning}, by which one may use susceptibilities to \emph{steer the training process} in an optimal direction for the promotion of desired behaviors, as measured by the maximization or minimization of the posterior expectation value of specified observable quantities.  In other words, one may take advantage of the ability of susceptibilities to measure the dependence of the local geometry of the loss landscape on parameters -- such as the mixing parameter in a mixture of data distributions -- to modify this geometry in order to promote desirable optima.  In the reinforcement learning context one can use the results we have developed in order to apply the same techniques to modify the favorability of different regions in the space of optima by perturbing structures of the Markov problem such as the initial state distribution. The result would be a method of automatically adapting the RL agent's training distribution so as to promote certain desirable kinds of agent behavior, comparable to unsupervised environment design, as used to elicit robust capability- or goal-generalization \citep[cf.,][]{Dennis+2020,Misgen}.

\section{Conclusion}\label{section:conclusion}

We have extended singular learning theory to deep reinforcement learning, proving that the concentration of the generalized posterior over parameters for policies is governed by the local learning coefficient -- an invariant of the geometry of the regret function (\cref{main_theorem_main_text}). This provides a precise theoretical foundation for the intuitive concept of simplicity bias in deep RL: the free energy formula characterizes Bayesian learning as an evolving tradeoff between regret and complexity, predicting that phase transitions should proceed from simple, high-regret policies to complex, low-regret policies.

Our main theoretical insight is that \emph{a more optimal policy is not always more optimal from a Bayesian perspective}. Depending on the number of trajectories seen, a Bayesian learner may prefer a simpler policy with higher regret over a complex policy with lower regret. This is characterized precisely by the critical dataset size $n^*$ at which the posterior shifts concentration between competing solutions. One practical implication is that for any finite training process, simpler but less aligned policies may be favored -- a phenomenon we connect to goal misgeneralization, reward hacking, and potentially instrumental convergence (\cref{section:relevance_alignment}).

Our empirical results validate the theory's predictions. In a gridworld environment exhibiting stagewise learning, we observe ``opposing staircases'' where phase transitions manifest as rapid decreases in regret accompanied by rapid increases in the LLC.  

The extension of singular learning theory to reinforcement learning opens several directions for future work. First, carefully quantifying how Bayesian posterior concentration relates to SGD dynamics remains an important open problem. Second, the techniques developed here -- and related methods for measuring and manipulating model structure \citep{Wang, Susceptibilities, BIF} -- should be applicable to alignment-relevant phenomena in more complex reinforcement learning settings including RL training of large language models. Given the centrality of reinforcement learning to the training of frontier AI systems, we believe that understanding and measuring complexity-regret tradeoffs is an important direction for AI safety research.

\section*{Acknowledgments}
We would like to thank Edmund Lau for his involvement in the early stages of this project and for many helpful suggestions.   We are very grateful to Marcus Hutter and Vanessa Kosoy for productive conversations on the topic of this paper during the Mathematical Science of AI Safety focus period at SMRI.  We are also grateful to Rohan Hitchcock for sharing the argument used in the proof of \cref{lemma:rohan} and to Shaowei Lin for providing guidance concerning the control as influence literature.  Finally we would like to thank Zach Furman, Philipp Alexander Kreer and Rumi Salazar for their feedback on an earlier draft.  

This project is funded by the Advanced Research + Invention Agency (ARIA).

\bibliographystyle{abbrvnat}
\bibliography{RL.bib}

\clearpage

\appendix

\section{LLC Estimation}\label{appendix:llc_estimation}

Here we include additional details relevant to \cref{section:llc_estimation_main}.

When running the LLC estimation, we run 5 SGLD regret chains, with 6000 steps per chain and 4800 levels per step. We use a localization strength of 200 (so $\sigma^2 = 1/200$), $n\beta$ of 1000, learning rate of $10^{-6}$. When selecting the SGLD hyperparameters we aim to make the SGLD chains long enough that the mean chain regrets converge while avoiding the following pitfalls.
\begin{itemize}
    \item The model during SGLD sampling spends a significant portion of the time with better performance than at $\wstar$. This would correspond to the SGLD sampler training the model. This happens if the localization is small, $n\beta$ is high and the learning rate is small.
    \item The model during SGLD sampling spends most of the time with regret comparable to that of a generic policy. This pitfall is not present to the same extent in supervised learning since the cross-entropy loss function is unbounded above and gives non-trivial information even in regions of parameter space in which the loss is high.  This pitfall happens when the localization is small, $n\beta$ is small and the learning rate is high.
    \item The model during SGLD sampling stabilizes in a region with loss near $G(\wstar)$, so $G(w) - G(\wstar)$ is small and dominated by noise. This happens when the localization is large and $n\beta$ is small.
\end{itemize}
Since the variance of the mean chain regret is larger in the reinforcement learning setting than in supervised learning we use 5 chains instead of the standard 4 in supervised learning.

\section{Predicting transitions}\label{appendix:fit_transition_step}

In the context of \cref{section:distribution_phase_transitions} we try to predict transitions based on the hyperparameters $\alpha, \gamma$.

Now, all the values of
$\gamma \in \{0.97, 0.98, 0.99\}$ are relatively close, but using the value of $\gamma$ directly as a feature doesn't 
reflect the hyperbolic nature of the discount factor, as $\gamma \to 1$, the agent becomes ``infinitely'' far-sighted.
So, we instead compute the \emph{effective horizon} $h = \frac{1}{1-\gamma}$, and model the number of gradient
steps taken for the transition $\pi^1 \to \pi^{2b}$ linearly as
$$
c_1 + c_2 \alpha + c_3 h
$$
for learned coefficients $c_1,c_2,c_3$. We find (\cref{fig:phase_analysis}) the best fit to be $-180.2 + 687.9 \alpha + 15.1 h$, with a coefficient
of determination of $R^2 = 0.993$, and a relative error of at worst $\pm 9.1\%$.

\begin{figure}[t]
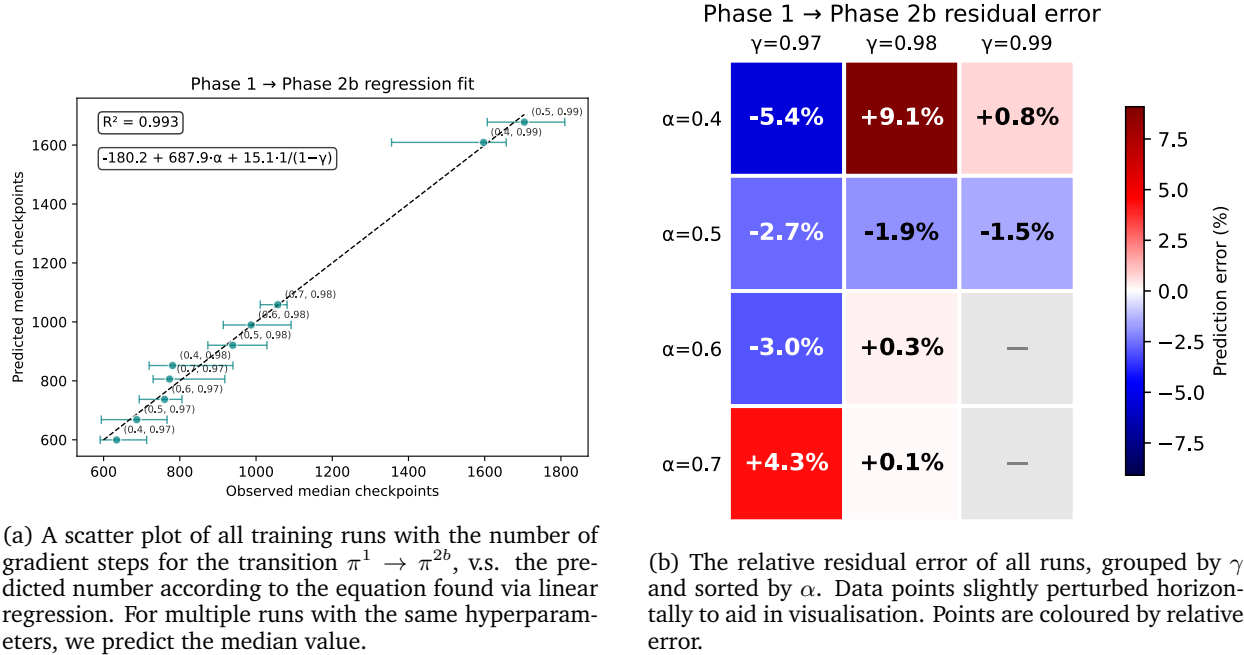

    \centering
    \begin{subfigure}[b]{0.48\textwidth}
        \centering
        \includesvg[width=\textwidth]{img2/phase_regress_fit_Phase_1_to_Phase_2b}
        \caption{A scatter plot of all training runs with the number of 
        gradient steps for the transition $\pi^1 \to \pi^{2b}$, v.s. the predicted number according to
        the equation found via linear regression. 
        For multiple runs with the same hyperparameters, we predict the median value. }
        \label{fig:2d_regress}
    \end{subfigure}
    \hfill
    \begin{subfigure}[b]{0.48\textwidth}
        \centering
        \includesvg[width=\textwidth]{img2/phase_regress_residual_Phase_1_to_Phase_2b}
        \caption{The relative residual error of all runs, grouped by $\gamma$ and sorted by $\alpha$. Data points slightly perturbed horizontally to aid in visualisation. Points are coloured by relative error.}
        \label{fig:phase_regress_error}
    \end{subfigure}
    \caption{Comparison of 2D regression and phase regression error.}
    \label{fig:phase_analysis}
\end{figure}

\section{Model Architecture}

\begin{figure}[htbp!]
\centering
\includesvg[width={0.8\textwidth}]{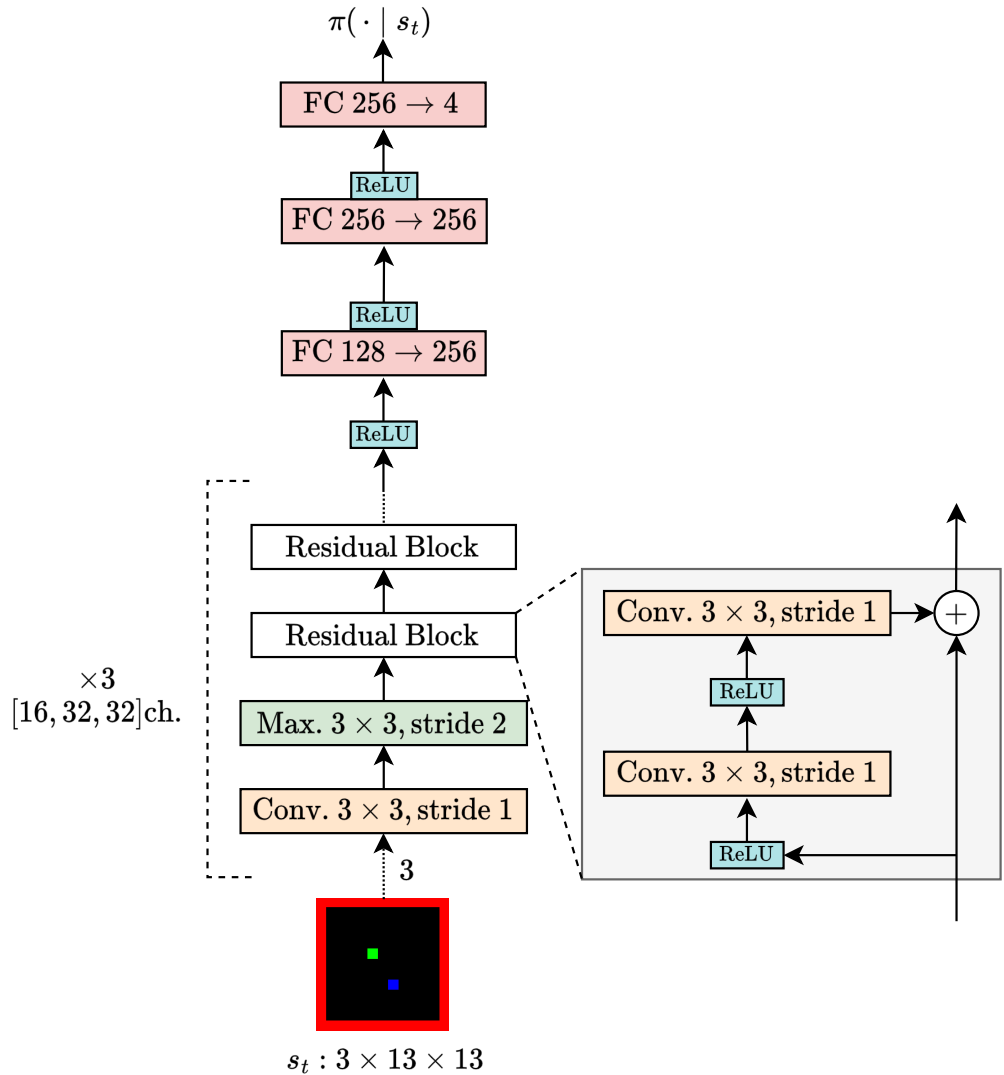}
\caption{Model architecture diagram. Diagram replicated from \citet{IMPALA} with modifications.}
\label{fig:model_architecture}
\end{figure}

The model follows an IMPALA-style convolutional encoder with 
residual blocks presented in \citet{Misgen}, see Figure \ref{fig:model_architecture}.
This then is fed into a small feedforward network (in place of a LSTM)
which then feeds into a policy head, from which we obtain the logits over the actions.

The original network in \citet{Misgen} also adds a value head, which has been
removed as we make no use of it. At a high level the network maps an input
observation \(x \in \mathbb{R}^{C\times H\times W}\)
to policy logits \(\pi(\cdot | s_t) \in \mathbb{R}^{4}\), which can be softmaxed 
to obtain a probability distribution over actions.

\subsection{Convolutional trunk}
We let $\mathrm{Conv}^{c_{\mathrm{in}}\to c_{\mathrm{out}}}$ denote a $3\times3$ convolution mapping an image with $c_{\mathrm{in}}$ many input channels to an image with $c_{\mathrm{out}}$ output channels. The kernel is always $3\times 3$, with a stride of 1, and padding of 1. We let $\mathrm{MaxPool}$ denote a $3\times3$ max pooling operation with a stride of 2.

The model is then defined as follows.
\begin{align*}
\mathrm{Model}(s_t, a_{t-1}) &= \mathrm{Encoder}(s_t), \mathrm{Embedding}(a_{t-1}) \to \mathrm{Linear}(260 \to 256) \\
&\quad \to \mathrm{ReLU} \to \mathrm{Linear}(256 \to 4) \\
\mathrm{Encoder} &= \mathrm{Block}(3,16) \to \mathrm{Block}(16,32) \to \mathrm{Block}(32,32) \\
&\quad \to \mathrm{ReLU} \to \mathrm{Flatten} \to \mathrm{Linear}(128 \to 256) \to \mathrm{ReLU} \\
\mathrm{Block}(c_{\mathrm{in}}, c_{\mathrm{out}}) &= \mathrm{Conv}^{c_{\mathrm{in}}\to c_{\mathrm{out}}} \to \mathrm{MaxPool} \to \mathrm{ResBlock}(c_{\mathrm{out}}) \to \mathrm{ResBlock}(c_{\mathrm{out}}) \\
\mathrm{ResBlock}(c)(x) &= x + \mathrm{Conv}^{c\to c}\Big(\mathrm{ReLU}(\mathrm{Conv}^{c\to c}(\mathrm{ReLU}(x)))\Big)
\end{align*}

\subsection{Heads}
One linear head reads from the 256-dimensional embedding, and outputs the logits for the actions.
Outputs preserve the leading batch/time dimensions of the input, which are not notated for brevity.

\subsection{Hyperparameters}

We provide our hyperparameter selections in \cref{hparam_table}.

\begin{table}[h!]
\caption{\label{table:hyperparams} Hyperparameters used for all methods and environments.}
\begin{center}
\begin{tabular}{lrl}
\toprule
\textbf{Parameter}          & Value     & Notes/exceptions \\
\midrule
\emph{Rollouts}                         \\
\# parallel environments    & 9600       \\
Rollout length              & 64       \\
\# environment steps per gradient step
                            & 614.4k   & (\# parallel environments $\times$ rollout length) \\
Discount factor, $\gamma$   & $\in \{$0.9, 0.95, 0.975, 0.99$\}$     & Was varied over experiments.  \\

\midrule
\emph{REINFORCE updates} \\
Number of env steps per run & 5B         \\
Number of batches per run   & 8138       & (\# $\lfloor \text{env steps per run} / \text{env steps per batch} \rfloor$ )\\
Adam learning rate          & 5e-5      \\
Learning rate schedule      & constant  \\

\bottomrule
\end{tabular}
\end{center}
\label{hparam_table}
\end{table}

\subsection{Compute}

We perform each training run on a single NVIDIA RTX-4090 24GB GPU.
The training process runs at about ~450k env steps per second, and each run takes roughly 3 hours. Hyperparameter sweeping for the results used in the paper
was roughly 200 runs, at a cost of roughly 600 RTX-4090 hours. LLC estimation was performed on NVIDIA RTX-5090 32GB GPUs, taking roughly 1 hour per sgld chain, meaning 5 GPU hours per checkpoint. 

\section{Theoretical Foundations of Singular Learning Theory in Reinforcement Learning}
\label{theory_appendix}

In this appendix, we prove the results stated in \cref{section:theory}.
\Cref{MarkovSection} contains preliminary definitions, including a formulation of analytic Markov decision processes that relaxes the finiteness conditions adopted in the main text for brevity.
\Cref{section:slt_gen_bayes} introduces a unified definition of an inference problem that subsumes both the generalized Bayesian inference for RL developed in the main text and also the supervised learning setting of Watanabe, among other settings, and then proves the main results in this unified setting.

\subsection{Markov Problems and the Regret} \label{MarkovSection}

\subsubsection{Initial Setup}

In this section, by \emph{space} we mean a locally compact Hausdorff space equipped with its Borel $\sigma$-algebra.  Most of the spaces that we consider will be real analytic manifolds (or more generally real analytic manifolds with corners).  If $M$ is a real analytic manifold we will write $C^\omega(M)$ for the vector space of real analytic functions on $M$.  This is a locally convex topological vector space when equipped with the compact-open topology.

Fix a \emph{partially observable Markov decision process} with real analytic manifolds $\mc S$ of states, $\mc A$ of actions and $\mc O$ of observations equipped with volume forms $\d s, \d a, \d o$ respectively.  Suppose that the expectation value of the reward exists for all state action pairs $(s,a)$  and denote it by $r(s,a)$.  Assume that the function $r$ is real analytic.  We fix a natural number $T_{\mr{max}}$ and require that the Markov decision process terminates after $T_{\mr{max}}$ actions.

\begin{example}
 Recall that the Markov decision process is called \emph{finite} if $\mc S, \mc A, \mc O$ are all finite sets equipped with their counting measures.
\end{example}

\begin{remark}
 \citet{ToussaintStorkey} study Markov decision problems of unbounded length by considering an infinite mixture of bounded length decision problems.  We will not pursue this approach in the formalization considered below but we expect that the techniques will generalize to their setting.
\end{remark}

\begin{definition}
A \emph{trajectory} is a finite sequence
\[\tau = (s_0, o_0, a_1, s_1, o_1, \ldots, a_{T_{\mr{max}}}, s_{T_{\mr{max}}}, o_{T_{\mr{max}}})\]
where $s_i \in \mc S, o_i \in \mc O$ and $a_i \in \mc A$.  We write $\mc T$ for the real analytic manifold of all trajectories with canonical volume form $\d \tau$.
\end{definition}

If we fix a discount factor $\gamma \in [0,1]$, the \emph{reward of a trajectory} is given by
\[r(\tau) = \sum_{i=1}^{T_{\mr{max}}} \gamma^i r(s_{i-1},a_i).\]
Note that we reuse the symbol $r$ here.  We will no longer need to refer to the reward of a single state-action pair so there will be no ambiguity.

\begin{definition}
 A \emph{policy} is a measurable function $\pi \colon \mc O \to \Delta(\mc A)$ assigning a probability distribution over actions to each observation.  We equip $\Delta(\mc A)$ with its weak${}^*$ topology and write $\Pi$ for the (non-locally-compact) space of all policies equipped with the compact-open topology.
\end{definition}

Policies determine probability distributions on the space $\mc T$ of trajectories as follows.
\begin{definition}
 Fix a policy $\pi \in \Pi$ and an initial distribution $\Lambda \in \Delta(\mc S)$.  If we write $p(o_i|s_i)$ and $p(s_i|a_i)$ for the transition probabilities in the Markov process then we may define a probability measure $q_\pi \d \tau$ on $\mc T$ with density function given by
 \[q_{\pi}(\tau) = \Lambda(s_0)p(o_0|s_0)\prod_{i=1}^{T_{\mr{max}}} \pi(a_i|o_{i-1})p(s_i|a_i)p(o_i|s_i).\]
\end{definition}

\begin{definition}
A \emph{model} consists of a manifold with corners $W$ (the \emph{parameter space} of the model) equipped with a volume form $\d w$ and a continuous function $F \colon W \to \Pi$ (the \emph{parameterization}).  We will sometimes write $\pi_w$ for the image $F(w)$ of a point under the parameterization map and $q_w$ for the density function $q_{F(w)}$.
\end{definition}

In order to generalize Watanabe's singular learning theory arguments we will require a \emph{real analyticity} condition on the functions that appear in our optimization problem.  Let us clarify what this means.
\begin{definition}
 If $X, Y$ are real analytic manifolds (possibly with corners), a function $f \colon X \to \Delta(Y)$ is \emph{real analytic} if for all finite subsets$\{y_1, \ldots, y_m\} \sub Y$ the function $\mr{proj}_y \circ f \colon X \to \Delta^m$ is real analytic, where $\mr{proj}_y$ is the projection onto the $m$-simplex spanned by $\{y_1,\ldots, y_m\}$.
\end{definition}

\begin{assumption} \label{analytic_assumption}
 We say our Markov decision problem and its model are \emph{real analytic} if the reward function $r$, the transition functions of the Markov problem and the parameterization map $F$ are all real analytic.  From now on all Markov decision problems are assumed to be real analytic.
\end{assumption}

\begin{example}
 Finite Markov decision problems are automatically real analytic, and if we parameterize such a problem by a neural network with real analytic activation function (for instance linear or polynomial activations, sigmoids, or the GeLU or swish functions) then the model is also real analytic.
\end{example}

We may now introduce the optimization objective in our reinforcement learning problem, the \emph{regret} function.
\begin{definition} \label{reward_def}
 The \emph{expected reward} at parameter $w \in W$ is
 \[R(w) = \bb E_{\tau \sim q_w}(r(\tau)).\]
 Suppose that $R \colon W \to \RR$ is bounded above and let $R_{\mr{max}} = \sup_{w \in W} R(w)$.  The \emph{regret} at parameter $w \in W$ is
 \begin{align*}
 G(w) &= R_{\mr{max}} - R(w) \\
 &= \bb E_{\tau \sim q_w}(g(\tau))
 \end{align*}
 where $g(\tau) = R_{\mr{max}} - r(\tau)$.
\end{definition}

\begin{remark}
If we take $W$ to be compact then $R$ is automatically bounded and attains its bounds since it is continuous.  Therefore $G$ is bounded below by zero and attains this lower bound along some non-empty subspace $W_0 \sub W$.
\end{remark}

\subsubsection{Importance Weighting Estimators}
For $n \in \bb N$ let $w_1, \ldots, w_n$ be a sequence of points in $W$.  Let $\tau_1, \ldots, \tau_n$ be a sequence of random variables in $\mc T$ where $\tau_i$ is sampled from the distribution $q_{w_i}$; in particular suppose that $q_{w_i}(\tau_i)>0$ for all $i=1, \ldots, n$.  We will define a statistic valued in continuous functions on $W$ that estimates the regret.

\begin{definition}
Define a random variable, the \emph{importance sampling regret estimator}, as follows.
\begin{align*}
 G_n(w) &= \frac 1n \sum_{i=1}^n \frac{q_w(\tau_i)} {q_{w_i}(\tau_i)} g(\tau_i) \\
 &= \frac 1n \sum_{i=1}^n \prod_{j=1}^{T_{\mr{max}}} \frac{\pi_w(a_{j,i}|o_{j-1,i})}{\pi_{w_i}(a_{j,i}|o_{j-1,i})}  g(\tau_i)
\end{align*}
where $a_{j,i}$ represents action $j$ in trajectory $\tau_i$ and similarly for the observation $o_{j-1,i}$.
\end{definition}

Observe that Assumption \ref{analytic_assumption} guarantees that the random variables $G_n(w)$ are real analytic function valued.  That is, $G_n$ is real analytic as a function
\[W \times W^n \times \mc T^n \to \RR\]
defined away from the subspace on which $q_{w_i}(\tau_i) = 0$, with poles as this subspace is approached.

We will want to refer to the importance weighted regret of a trajectory frequently enough in the subsequent section that it is worth establishing consistent notation for it.
\begin{definition}
For $w,w' \in W$ and $\tau \in \mc T$ such that $q_{w'}(\tau) > 0$ we define
\[f(w,w',\tau) = \frac {q_w(\tau)}{q_{w'}(\tau)}g(\tau).\]
\end{definition}

\subsection{Singular Learning Theory for Generalized Bayesian Inference}\label{section:slt_gen_bayes}

We will introduce at this point an additional assumption.
\begin{assumption}
 Assume that the parameter space $W$ is compact.
\end{assumption}
In particular the set $W_0 = G^{-1}(0)$ of global minima of the regret is non-empty.  Recall that $W$ is a manifold with corners; we use $\interior W$ to denote its interior.

\subsubsection{Setup for Generalized Bayesian Inference}

In this section we will describe a general setting for generalized Bayesian inference that allows us to study singular learning theory for supervised learning and reinforcement learning, among other situations, in a unified fashion.  We will see shortly in Example \ref{MDP_example} that the Markov decision problem setting introduced in the previous section is an example of the situation we will describe now.  Our basic setting will comprise the following.
\begin{itemize}
 \item Let $X$ be a real analytic manifold equipped with a volume form $\d x$.
 \item Let $w \mapsto q_w \d x$ be a real analytic family of signed measures on $X$ parameterized by $w \in W$.  Assume that if $w \in \interior W$ then $q_w(x) \ne 0$ for all $x \in X$.
 \item Let $g \colon X \to \RR$ and $h \colon X \to \RR_{\ge 0}$ be real analytic functions.
\end{itemize}

\medskip

\begin{definition}
 The \emph{loss function} $G \colon W \to \RR$ is defined to be:
 \[G(w) = \int (g(x) q_w(x) - h(x)) \d x.\]
\end{definition}

\begin{assumption} \label{key_assumption}
 Suppose that $\mr{min}(G(w)) = 0$.  We will assume that if $w \in W$ is a parameter so that $G(w) = 0$ (an \emph{optimal parameter}) then the set
 \[\{x \colon g(x)q_w(x) - h(x) \ne 0\}\]
 has Lebesgue measure zero.
\end{assumption}

\begin{remark}
 If we're interested in optimization for $G(w)$ observe that the choice of $h$ only changes $G$ by an additive constant, so one can try to choose $h$ in such a way that the assumption is satisfied.
\end{remark}

\begin{definition} \label{importance_sampling_def}
 Given a sequence of points $w_1, \ldots, w_n$ in $\interior W$ and samples $x_i \sim q_{w_i}$ for $i= 1, \ldots, n$, define a random variable, the \emph{importance sampling estimator}, by
 \begin{align} \label{eq:additive}
  G_n(w) &= \frac 1n \sum_{i=1}^n f(w,w_i,x_i) \\
  \text{where } f(w,w_i,x_i) &=  \frac{q_w(x_i)}{q_{w_i}(x_i)}g(x_i) -  \frac 1{q_{w_i}(x_i)}h(x_i). \nonumber
 \end{align}
\end{definition}

\begin{example} \label{supervised_example}
Our first example is considered in Watanabe's original work \citep{WatanabeGrey}: supervised learning for a model where the true distribution is realizable.  Choose any compact parameter space $W$ and sample space $X$ and specify a real analytic family $p_w$ of probability distributions on $X$.

  Let $q_w(x) = -\log p_w(x)$.  Fix a point $w^* \in W$ and let $g(x) = p_{w^*}(x)$ and $h(x) = p_{w^*}(x) \log p_{w^*}(x)$.
  Then
  \[G(w) = \int p_{w^*}(x) \log \frac{p_{w^*}(x)}{p_{w}(x)} \d x = \mr{KL}(p_{w^*}\parallel p_w),\]
  that is, the loss agrees with the Kullback--Leibler divergence from the ``true distribution'' $p_{w^*}$.  A point $w$ is optimal if $p_w = p_{w^*}$.  If so then $g(x)q_w(x) - h(x) = 0$ for all $x$, so Assumption \ref{key_assumption} holds automatically. 
  
  For the estimators in this example we generalize Definition \ref{importance_sampling_def} slightly in this case and instead consider $x_i$ samples from $p_{w^*}$ for $i=1, \ldots n$ with
  \[f(w,w^*,x_i) = \frac{q_w(x_i)}{p_{w^*}(x_i)}g(x_i) -  \frac 1{p_{w^*}(x_i)}h(x_i).\]
  so that
  \[G_n(w) = \frac 1n \sum_{i=1}^n g(x_i) f(w,w^*,x_i) =  -\frac 1n \sum_{i=1}^n \log \frac{p_w(x_i)}{p_{w^*}(x_i)}.\]
  The proof of the main results continues to hold for this modified definition of $f$ but we do not track this modification carefully in the text below; in this case our results coincide with those in Watanabe's original work.
\end{example}

\begin{example} \label{MDP_example}
Now let us consider a Markov decision problem as in \ref{MarkovSection} and let $X=\mc T$ be the space of trajectories.  Let $\Pi$ be the space of policies, and choose an analytic parameterization map $F \colon W \to \Pi$ sending $w$ to $\pi_w$.  Let $q_w$ be the distribution on $\mc T$ associated to this choice of policy.  If $\tau \in \mc T$, let $g(\tau)$ be defined as in Definition \ref{reward_def} and let $h(\tau) = 0$.

Assumption \ref{key_assumption} is not necessarily satisfied; it holds if optimal policies pursue optimal trajectories with probability one.  We will discuss this further in Section \ref{free_energy_section} below.
\end{example}

Note that in the Markov decision problem setting Assumption \ref{key_assumption} holds in particular if the decision problem has deterministic transition functions, meaning that the distributions $p(o|s)$ and $p(s|a)$ are all delta distributions.  Say a trajectory $\tau$ is \emph{reachable} for an MDP if there exists $w \in W$ so that $q(w,\tau) > 0$.

\begin{prop} \label{deterministic_positive_prop}
Let $\pi_w$ be an optimal policy in a Markov decision problem.  If the problem has deterministic transition functions then $g(\tau) \ge 0$ for all reachable trajectories $\tau \in \mc T$ and $q_w( \tau) = 0$ if and only if $g(\tau) > 0$.  In particular Assumption \ref{key_assumption} holds.
\end{prop}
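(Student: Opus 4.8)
The plan is to exploit the deterministic transitions to reduce every assertion to a statement about individual trajectories, and then to feed in the hypothesis $G(w)=0$. First I would nail down reachability in this setting: because $p(o_i|s_i)$ and $p(s_i|a_i)$ are delta distributions, for any parameter $w'$ the density $q_{w'}(\tau)$ equals $\Lambda(s_0)\prod_{i=1}^{T_{\mr{max}}}\pi_{w'}(a_i|o_{i-1})$ when the $o_i,s_i$ occurring in $\tau$ are exactly the states and observations forced by $s_0$ and the action sequence $a_1,\dots,a_{T_{\mr{max}}}$, and vanishes otherwise. Hence $\tau$ is reachable exactly when $\Lambda(s_0)>0$, $\tau$ is dynamically consistent from $s_0$, and some policy $\pi_{w'}$ in the family has $\pi_{w'}(a_i|o_{i-1})>0$ for every $i$; moreover, for each initial state $s_0$ of positive $\Lambda$-mass there is a well-defined deterministic-planning value $v^*(s_0):=\sup r(\tau)$, the supremum over trajectories dynamically consistent from $s_0$, and every reachable $\tau$ through $s_0$ satisfies $r(\tau)\le v^*(s_0)$.

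Next I would establish the sign statement $g(\tau)\ge 0$ for reachable $\tau$, which is equivalent to $r(\tau)\le R_{\mr{max}}$, i.e.\ to $R_{\mr{max}}=\sup\{\,r(\tau):\tau\text{ reachable}\,\}$. One inequality is immediate: every $R(w')=\bb E_{\tau\sim q_{w'}}(r(\tau))$ is an average of returns of trajectories in the support of $q_{w'}$, all of which are reachable, so $R(w')\le\sup_\tau r(\tau)$ and therefore $R_{\mr{max}}\le\sup_\tau r(\tau)$. For the reverse inequality I would take a reachable $\tau$ and produce a policy $\pi_{w^\sharp}$ in the family whose expected return attains, or approaches, $r(\tau)$: with deterministic dynamics a policy that ``commits'' to the action sequence of $\tau$ on the observations appearing along $\tau$ realises the return $r(\tau)$ exactly when started from $s_0$. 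This is the step I expect to be the main obstacle, since $R(w^\sharp)=\bb E_{s_0\sim\Lambda}[\cdots]$ averages over all initial states, not just over $s_0$; to make it rigorous one needs either that the parameterisation is expressive enough to contain a policy simultaneously optimal from every initial state of positive $\Lambda$-mass, or an argument that the planning values $v^*(s_0)$ are constant on the support of $\Lambda$, so that some policy attains $R_{\mr{max}}=v^*(s_0)$ and then $r(\tau)\le v^*(s_0)=R_{\mr{max}}$. Granting this, $g(\tau)=R_{\mr{max}}-r(\tau)\ge 0$ on all reachable trajectories.

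Finally I would read off the ``if and only if'' and Assumption \ref{key_assumption}. The support of $q_w$ consists of reachable trajectories, so the previous step gives $g\ge 0$ there; and since $\pi_w$ is optimal, $0=G(w)=\bb E_{\tau\sim q_w}(g(\tau))$ is a sum (or integral) of nonnegative terms $g(\tau)q_w(\tau)$, which forces $g(\tau)q_w(\tau)=0$ for $q_w$-almost every $\tau$ — equivalently $q_w(\tau)>0\Rightarrow g(\tau)=0$, one direction of the claimed equivalence. Taking $h=0$ as in Example \ref{MDP_example}, this is exactly the assertion that $\{\tau:g(\tau)q_w(\tau)\ne 0\}$ has measure zero, so Assumption \ref{key_assumption} holds. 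For the converse direction I would argue that if $\tau$ is reachable with $g(\tau)=0$ then $r(\tau)=R_{\mr{max}}=v^*(s_0)$, so $\tau$ is an optimal continuation from $s_0$; since $\pi_w$ is optimal and the dynamics are deterministic, $\pi_w$ must place positive probability on each optimal action met along $\tau$, so $q_w(\tau)>0$. The one caveat I would flag is that this last implication uses that optimal policies support all optimal actions, so if ties among equally good continuations are broken deterministically the statement should be read for a suitable choice of optimal $\pi_w$. Combining the two directions yields $q_w(\tau)=0\iff g(\tau)>0$ on reachable trajectories.
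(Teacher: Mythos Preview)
Your approach matches the paper's: realise a reachable $\tau$ by a deterministic policy $w_0$, use $G(w_0)\ge 0$ to obtain $g(\tau)\ge 0$, then read off $g(\tau)\,q_w(\tau)=0$ from $G(w)=0$ together with nonnegativity. The subtleties you flag --- the averaging over $\Lambda$ when passing from $R(w_0)$ to $r(\tau)$, and the converse direction of the biconditional --- are glossed over in the paper as well (it simply asserts $q(w_0,\tau)=1$ without discussing $\Lambda$, and only explicitly argues $g(\tau)>0\Rightarrow q_w(\tau)=0$, which is all that Assumption~\ref{key_assumption} requires), so your caution is appropriate but does not indicate any departure from the intended argument.
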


\begin{proof}
 If the deterministic transition condition holds then for all policies $\pi_w$ either $q_w(\tau) = 0$ or
 \[q_w(\tau) = \Lambda(s_1)\prod_{i=1}^{N-1} \pi_w(a_i|s_i)\]
 where $\Lambda$ denote the distribution on initial states.  In particular
 \[G(w) = \sum_{\tau \colon q_w(\tau) \ne 0} \Lambda(s_0)\left(\prod_{i=1}^{N-1} \pi_w(a_i|o_{i-1})\right)g(\tau).\]
 If $\tau$ is a reachable trajectory we may, using the deterministic transition condition, choose a deterministic policy $w_0$ so that $q(w_0,\tau) = 1$.  In particular $G(w) = q_w(\tau)g(\tau)$, which implies $g(\tau) \ge 0$ with equality if and only if $w$ is optimal.  On the other hand, if $w$ is an optimal policy then $G(w) = 0$, so the fact that $g(\tau) \ge 0$ for all summands implies $q_w(\tau) = 0$ if $g(\tau)>0$.
\end{proof}

Given the above setup, we will study \emph{generalized Bayesian inference} in the sense of \citet{Zhang, BHW} for the loss function $G$.  To that end, fix an analytic prior probability distribution $\phi(w) \d w$ on $W$ and a real number $\beta > 0$.
\begin{definition}
 The \emph{generalized tempered posterior} or \emph{Gibbs posterior} associated to $\phi, \beta$ and the random variable $G_n$ is the probability distribution
 \[\mu_n^\beta(U) \propto \int_U \exp(-n\beta G_n(w)) \phi(w) \d w.\]
\end{definition}

\begin{remark}
The generalized posterior with $\beta = 1$ is determined canonically by Theorem 1 of \citet{BHW}.  Indeed, given a prior distribution $\phi(w)\d w$ and the loss function $f(w,w',x)$ we consider the collection of all probability measures of the form
\[\mu_{(\psi)} \propto \psi(f(w,w',x), \phi(w))\d w,\]
where $\psi$ is a continuous function of two real variables.  Bissiri, Holmes and Walker's argument shows that the generalized posterior is uniquely characterized within this family by the condition that
\[\psi(f(w,w',x_2), \psi(f(w,w',x_2), \phi(w)))) = \psi(f(w,w',x_1)+f(w,w',x_2), \phi(w)).\]
Their proof applies in the case where $W$ is countable and discrete, but we note that since the function $\psi(f(w,w',x_2), \psi(f(w,w',x_2), \phi(w)))) - \psi(f(w,w',x_1)+f(w,w',x_2), \phi(w))$ is continuous in $w \in W$ it is enough to show that it vanishes on a countable dense subset of $W$.  Together with Remark \ref{importance_weight_canonical_remark} below we see that the generalized tempered posterior is canonically characterized by the loss function under natural assumptions.
\end{remark}

Our goal is to prove theorems about the properties of this posterior distribution, generalizing the arguments of Watanabe.  Our main results are as follows.

\begin{theorem} \label{main_theorem}
 \begin{enumerate}
  \item \emph{Standard form of the estimator}: If $w_0 \in W$ is an optimal parameter then we may find a log resolution $\varpi$ of $G$ locally near $w_0$ and a local coordinate $u$ for which
  \[G_n(\varpi(u)) = u^{2k} - n^{-1/2}u^k\xi_n(u)\]
  where $\xi_n(u)$ converges in distribution to a standard Gaussian.
  \item \emph{Asymptotics of the posterior}: Let $U \sub W$ be an open set.  The generalized tempered posterior has asymptotic behavior
  \[\int_U \exp(-n\beta G_n(w)) \phi(w) \d w \sim n^{-\lambda}\log n^{m +1}\]
  where $\lambda, m$ are the log canonical threshold and multiplicity of $G$ on the set $U$.
\item \emph{Expectation of the total loss}: Let $w_0$ be a local minimum of $G$. Let $\bb E^\beta_n$ represent the expected value with respect to the tempered posterior $\mu_n^\beta$.  Let $\beta$ be a positive function on the natural numbers such that $\beta(n)$ converges as $n \to \infty$.  Then there exists an open neighborhood $U'$ of $w_0$ so that for all subneighborhoods $U \sub U'$
  \[\bb E^\beta_n(nG_n(w)|_U) = nG_n(w_0) + \frac \lambda \beta + o_{\mr P}(\log n).\]
 \end{enumerate}
\end{theorem}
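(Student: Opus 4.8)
The plan is to transcribe Watanabe's program for singular learning theory (\cite{WatanabeGrey, WatanabeGreen}) to the generalized Bayesian setting of Section~\ref{section:slt_gen_bayes}, with two modifications: the log-likelihood ratio is replaced by the importance-weighted loss $f(w,w',x) = N(w,x)/q_{w'}(x)$ where $N(w,x) = g(x) q_w(x) - h(x)$, and the samples $x_i \sim q_{w_i}$ are independent but not identically distributed. Assumption~\ref{key_assumption} plays the role of realizability: at an optimal parameter $w_0$ it forces $N(w_0,\cdot)=0$ almost everywhere, hence $f(w_0,w',x)=0$ for $q_{w'}$-almost every $x$ (using $q_{w'}>0$ on $\interior W$). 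For Part~1, I would first invoke resolution of singularities for the non-negative analytic function $G$ on a neighborhood of $w_0$: there is a proper real analytic map $\varpi$ and local coordinates $u$ in which $G(\varpi(u)) = u^{2k}$ (absorbing a positive analytic unit into the coordinate change) with Jacobian $b(u)|u^{h}|$, $b$ non-vanishing. The key lemma is a divisibility statement: $f(\varpi(u),w',x) = u^{k}\,a(w',x,u)$ for a real analytic $a$ with $\sup_u \bb E_{x\sim q_{w'}}[a(w',x,u)^2] < \infty$. This uses (i) $f(\varpi(0),w',x) = f(w_0,w',x) = 0$ a.e.\ by Assumption~\ref{key_assumption}; (ii) the population identity $\bb E_{x\sim q_{w'}}[f(\varpi(u),w',x)] = G(\varpi(u)) = u^{2k}$, which forces $\bb E[a] = u^{k}$; and (iii) an $L^2$-control step, which in the finite / bounded-reward case follows from a uniform version of Watanabe's division lemma ($f$ being bounded analytic) and in general requires a relative-finite-variance bound $\bb E_{x\sim q_{w'}}[f(\varpi(u),w',x)^2] = O(u^{2k})$. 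Granting the lemma, and using $\bb E_{x_i\sim q_{w_i}}[a(w_i,x_i,u)] = u^{k}$ for each $i$,
\[
G_n(\varpi(u)) = \frac1n\sum_{i=1}^n u^{k} a(w_i,x_i,u) = u^{2k} - \frac{1}{\sqrt n}\, u^{k}\,\xi_n(u),
\qquad
\xi_n(u) = \frac{1}{\sqrt n}\sum_{i=1}^n\bigl(u^{k} - a(w_i,x_i,u)\bigr),
\]
a normalized sum of independent mean-zero terms; a triangular-array (Lindeberg--Lyapunov) central limit theorem gives convergence of $\xi_n(u)$ to a Gaussian (after normalization by the pooled variance), with the uniform second-moment bound upgrading this to tightness of $\xi_n$ as a process in $u$.

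For Part~2, I would substitute the normal form into $Z_{n,\beta}(U) = \int_U e^{-n\beta G_n(w)}\phi(w)\,\d w$. The part of $U$ bounded away from $W_0$ contributes $e^{-cn}$ in probability (there $G_n$ is bounded below), so one localizes near $W_0 \cap \overline U$ and pulls back through $\varpi$, obtaining a finite sum of chart integrals $\int b(u)|u^{h}|\exp(-n\beta u^{2k} + \sqrt n\,\beta\, u^{k}\xi_n(u))\,\d u$. Rescaling $u_j \mapsto (n\beta)^{-1/(2k_j)} u_j$ concentrates each chart integral where $u^{2k}\sim (n\beta)^{-1}$, on which scale the fluctuation term $\sqrt n\,\beta\, u^{k}\xi_n(u)$ is $O_{\mr P}(1)$ by tightness of $\xi_n$ and therefore only affects the constant prefactor; the remaining one-parameter asymptotic integral is exactly the one analyzed in \cite{WatanabeGreen} via the Mellin transform / state-density function, yielding $Z_{n,\beta}(U) \sim n^{-\lambda(U)}(\log n)^{m(U)-1}$ with $\lambda(U) = \min_j (h_j+1)/(2k_j)$ over charts the real log canonical threshold of $G|_U$ and $m(U)$ the associated multiplicity.

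For Part~3, for a local (not necessarily global) minimum $w_0$ I would pass to the shifted objective $\widetilde G = G - G(w_0)$, non-negative near $w_0$ with minimum $0$, and the shifted estimator $\widetilde G_n(w) = G_n(w) - G_n(w_0)$, which is the importance-sampling estimator for $\widetilde G$ built from $\widetilde f(w,w',x) = f(w,w',x) - f(w_0,w',x)$ (vanishing at $u=0$). Applying Parts~1--2 to $\widetilde G$ gives, on a small neighborhood $U$ of $w_0$, $\widetilde Z(\beta) := \int_U e^{-n\beta \widetilde G_n(w)}\phi(w)\,\d w \sim C\,(n\beta)^{-\lambda(U)}(\log(n\beta))^{m(U)-1}$. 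Since $\widetilde Z(\beta) = e^{n\beta G_n(w_0)} Z_{n,\beta}(U)$ and $-\partial_\beta \log \widetilde Z(\beta) = \bb E^\beta_n[\,n(G_n(w) - G_n(w_0))\,|_U\,]$, Watanabe's WBIC argument --- differentiating (equivalently, integrating in temperature and using) the Part~2 expansion, uniformly over a neighborhood of the limiting value of $\beta(n)$ --- gives $-\partial_\beta\log\widetilde Z(\beta) = \lambda(U)/\beta - (m(U)-1)/(\beta\log(n\beta)) + o(1) = \lambda(U)/\beta + o_{\mr P}(\log n)$, which rearranges to $\bb E^\beta_n[nG_n(w)|_U] = nG_n(w_0) + \lambda(U)/\beta + o_{\mr P}(\log n)$; the convergence of $\beta(n)$ is exactly what makes the temperature-differentiation legitimate in the limit.

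The main obstacle is the divisibility / $L^2$-control lemma of Part~1 in its two incarnations. First, establishing $f(\varpi(u),w',x) = u^{k}a(w',x,u)$ with uniform control on $a$ needs either the finiteness and boundedness of the environment (so $f$ is bounded analytic and a uniform division lemma applies) or an explicit relative-finite-variance hypothesis of the kind appearing in \cite{WatanabeGreen} --- precisely the hypothesis whose relaxation is flagged as open in the ``Stochastic Transitions'' outlook. Second, Part~3 needs the same control for the shifted loss $\widetilde f$ at a merely-local minimum, where the summands $N(w,x) - N(w_0,x)$ need not be sign-definite, so $\bb E[\widetilde f^2]$ cannot be bounded termwise by $\widetilde G$ and one must instead combine the normal-crossing structure of $\widetilde G$ after resolution with the joint analyticity of $\widetilde f$ in $(w,u)$. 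Everything else --- the Laplace/Mellin asymptotics of Part~2 and the temperature-derivative trick of Part~3 --- is a faithful transcription of Watanabe's arguments once the non-i.i.d.\ central limit theorem for $\xi_n$ is available.
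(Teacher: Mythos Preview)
Your proposal follows essentially the same architecture as the paper's proof: resolve $G$, put $G_n$ into the normal form $u^{2k}-n^{-1/2}u^k\xi_n(u)-\omega_n(u)$ via Taylor expansion of $f(\varpi(u),w',x)$, establish a non-i.i.d.\ CLT for $\xi_n$ via Lyapunov's condition plus a tightness argument in $C^\omega$, kill $\omega_n$, and then invoke Watanabe's Laplace/Mellin asymptotics and WBIC argument verbatim. The logical skeleton and the identification of ``$\omega_n=0$'' as the crux are exactly what the paper does.

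The one substantive difference is the \emph{mechanism} by which $\omega_n=0$ is obtained. You propose a division/relative-finite-variance route: bound $\bb E_{q_{w'}}[f(\varpi(u),w',x)^2]=O(u^{2k})$ and then divide. The paper instead gives a direct positivity argument that avoids any second-moment hypothesis: writing $q_{\varpi(u)}(x)=\sum_\alpha b_\alpha(x)u^\alpha$, it uses (i) Assumption~\ref{key_assumption} to restrict to $X_{w^*}=\{x:g(x)q_{w^*}(x)=h(x)\}$, (ii) a short lemma that $g>0$ a.e.\ on $X_{w^*}$, and (iii) non-negativity of $q_{\varpi(u)}$ to conclude that every Taylor coefficient of $f(\varpi(u),w',x)$ with multi-index $0<\alpha<2k$ vanishes identically, so in fact $f$ is divisible by $u^{2k}$ (stronger than the $u^k$ you aim for). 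The paper then remarks that this positivity argument implies relative finite variance, so your proposed route and the paper's are equivalent under the standing hypotheses, but the paper's is more self-contained: it never needs the boundedness/finiteness of the environment or an explicit second-moment assumption, only that $q_w$ is a genuine probability density. Your tightness step (``uniform second-moment bound'') is also lighter than the paper's, which uses $L^s$ bounds for even $s>2$ together with Arzel\`a--Ascoli to get compactness in $C^\omega(K)$; a second-moment bound alone does not give tightness of the process. Finally, you correctly flag the local-minimum issue in Part~3 (the shifted $\widetilde f$ is not sign-definite so Assumption~\ref{key_assumption} does not literally apply); the paper handles this only by subtracting $G_0=G(w_0)$ and asserting that the earlier machinery ``continues to apply'', so your concern is well-placed rather than a gap in your own plan.
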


\subsubsection{Convergence of Error Terms}
As the first step in our proof of Theorem \ref{main_theorem} we will need to establish a version of the results from \citet[\S 5]{WatanabeGrey} that applies in the generalized Bayesian inference setting in which our samples $x_i$ are not identically distributed.  We will first establish that a version of the central limit theorem my be applied to our importance weighted regret estimators.

\begin{lemma} \label{importance_weighting_mean_lemma}
 The expectation value of the importance weighted loss estimator is $\bb E(G_n(w)) = G(w)$ for all $n$.
\end{lemma}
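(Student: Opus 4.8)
The plan is to prove this by a direct change-of-measure computation. By linearity of expectation it suffices to show that each summand $f(w,w_i,x_i)$ appearing in $G_n(w) = \frac1n\sum_{i=1}^n f(w,w_i,x_i)$ has expectation $G(w)$; the claim then follows immediately.

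So first I would fix $i$ and $w \in W$. Since $w_i \in \interior W$ we have $q_{w_i}(x) \ne 0$ for all $x$, and, being the law of $x_i$, the measure $q_{w_i}\,\d x$ is a genuine probability density; hence $q_w(x)/q_{w_i}(x)$ and $h(x)/q_{w_i}(x)$ are defined pointwise and
\[
\bb E_{x_i \sim q_{w_i}}\!\bigl[f(w,w_i,x_i)\bigr]
= \int_X \Bigl( \frac{q_w(x)}{q_{w_i}(x)}\,g(x) - \frac{1}{q_{w_i}(x)}\,h(x) \Bigr)\, q_{w_i}(x)\, \d x
= \int_X \bigl( g(x)\,q_w(x) - h(x) \bigr)\, \d x
= G(w),
\]
where the factor $q_{w_i}(x)$ in the measure cancels the denominators (valid pointwise because $q_{w_i}(x)\ne 0$) and the last equality is the definition of the loss function. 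Summing over $i$ and using linearity,
\[
\bb E\bigl(G_n(w)\bigr) = \frac1n\sum_{i=1}^n \bb E_{x_i\sim q_{w_i}}\!\bigl[f(w,w_i,x_i)\bigr] = \frac1n\sum_{i=1}^n G(w) = G(w),
\]
for every $n$, as required. Note that independence of the $x_i$ plays no role: only linearity of expectation and the marginal law $x_i \sim q_{w_i}$ are used.

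There is no serious obstacle here; the only point requiring care is well-definedness of the integrals, so that the expectation above is not of the indeterminate form $\infty - \infty$. Concretely one wants $g\,q_w - h \in L^1(X,\d x)$, equivalently $\bb E_{x_i\sim q_{w_i}}[\,|f(w,w_i,x_i)|\,] < \infty$. In the finite Markov decision setting this is automatic since $\mc T$ is a finite set; in the general setting it follows from the standing hypotheses that $R_{\mr{max}} < \infty$ and that the expected return is finite for each state--action pair, whence $\bb E_{\tau\sim q_w}[|g(\tau)|] < \infty$ for each $w$ (using that $T_{\mr{max}}$ is finite). Finally, although $q_w\,\d x$ is only assumed to be a signed measure in the general setup, this causes no difficulty: the only density appearing in a denominator is the sampling density $q_{w_i}$, which is positive, so no sign issues arise in the cancellation above.
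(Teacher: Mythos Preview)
Your proof is correct and follows essentially the same route as the paper: both use linearity of expectation to reduce to a single summand and then observe that the sampling density $q_{w_i}$ cancels in the integral, leaving $\int(g\,q_w - h)\,\d x = G(w)$. Your additional remarks on integrability and the signed-measure case are sound and go slightly beyond the paper's terse computation, but the core argument is identical.
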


\begin{proof}
 This is a straightforward computation:
 \begin{align*}
  \bb E(G_n(w)) &= \frac 1n \sum_{i=1}^n \bb E_{x_i \sim q_{w_i}}\left(\frac{q_w(x_i)} {q_{w_i}(x_i)} g(x_i) -  \frac 1{q_{w_i}(x_i)}h(x_i) \right) \\
  &= \frac 1n \sum_{i=1}^n \left(\int (g(x_i) q_w(x_i) - h(x_i)) \d x_i \right) \\
  &= G(w).
 \end{align*}
\end{proof}

\begin{remark} \label{importance_weight_canonical_remark}
This property, together with the \emph{additivity} condition in the statistics $G_n$ given by equation \ref{eq:additive} naturally characterizes the importance weighted loss estimator.  Indeed, consider the case $n=1$ (where we usually denote $G_1$ by $f$).  The condition $\bb E(f(w,w')) = G(w)$ may be rewritten as
\[\int \left(g(x) \frac{q_w(x)}{q_{w'}(x)} - h(x) \frac 1{q_{w'}(x)} \right) - f(w,w',x) \d x = 0.\]
So our definition of the importance weighted estimator $f(w,w')$ is the unique function for which the integrand identically vanishes.
\end{remark}

For the rest of this section we will impose some conditions on our importance weighting estimator.  Suppose $w, w_i \in \interior W \bs W_0$ and $w_i \to w^*$ as $i \to \infty$ for some $w^* \in \interior W \bs W_0$. Suppose that the importance weighted regret $f(w,w',x)$ has finite positive variance $0 <\sigma(w,w')< \infty$ for all $w' \in \interior W \bs W_0$.  Suppose also that there exists $\delta > 0$ so that the $2+\delta$-moment of $f(w,w',x)$ exists for all $w' \in \interior W \bs W_0$.

\begin{prop} \label{psi_CLT}
 The random variables
 \[
  \psi_n(w) = \frac{\sqrt n ( G(w) - G_n(w))}{\sigma(w, w^*)}
 \]
 converge in distribution to $N(0,1)$ as $n \to \infty$.
\end{prop}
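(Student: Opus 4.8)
The plan is to recognize $\sqrt n\,(G(w)-G_n(w))$ as a normalized sum of independent, mean-zero, but non-identically-distributed random variables and invoke a Lyapunov-type central limit theorem. Concretely, set $Z_i = f(w,w_i,x_i) - G(w)$ for $i=1,\dots,n$. By Lemma \ref{importance_weighting_mean_lemma} (its proof shows $\bb E_{x_i\sim q_{w_i}} f(w,w_i,x_i) = G(w)$ for each $i$) the $Z_i$ have mean zero; they are independent because the $x_i$ are; and $\mathrm{Var}(Z_i) = \sigma(w,w_i)^2$. Since $\sqrt n\,(G(w)-G_n(w)) = -n^{-1/2}\sum_{i=1}^n Z_i$, it suffices to show $n^{-1/2}\sum_{i=1}^n Z_i$ converges in distribution to $N(0,\sigma(w,w^*)^2)$; dividing by $\sigma(w,w^*)$ and using symmetry of the Gaussian to absorb the sign then gives the claim.

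First I would establish that $w'\mapsto\sigma(w,w')^2$ is continuous on $\interior W\bs W_0$ and that the $(2+\delta)$-moments are locally uniformly bounded there. This follows from real analyticity of $f(w,\cdot,x)$ in its second argument (Assumption \ref{analytic_assumption}) together with a domination argument: because $w^*\in\interior W\bs W_0$ we may choose a compact neighborhood $K\subset\interior W\bs W_0$ of $w^*$ on which the denominators $q_{w'}(x)$ are bounded away from zero uniformly and on which the assumed $(2+\delta)$-moments $\bb E_{x\sim q_{w'}}|f(w,w',x)|^{2+\delta}$ are finite; dominated convergence then yields both continuity of $w'\mapsto\sigma(w,w')^2$ and a finite bound $M := \sup_{w'\in K}\bb E_{x\sim q_{w'}}|f(w,w',x)|^{2+\delta}$. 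Since $w_i\to w^*$, for $i$ large we have $w_i\in K$, hence $\sigma_i^2 := \sigma(w,w_i)^2 \to \sigma(w,w^*)^2$ and $\bb E|Z_i|^{2+\delta}\le 2^{2+\delta}M$.

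Next I would check the Lyapunov condition for the array $\{Z_i\}$. Put $s_n^2 = \sum_{i=1}^n \sigma_i^2$. As $\sigma_i^2\to\sigma(w,w^*)^2>0$, the Cesàro lemma gives $s_n^2/n\to\sigma(w,w^*)^2$, so $s_n^2\to\infty$ and $s_n/\sqrt n\to\sigma(w,w^*)$. The uniform moment bound gives $\sum_{i=1}^n\bb E|Z_i|^{2+\delta} = O(n)$ while $s_n^{2+\delta}\sim\sigma(w,w^*)^{2+\delta}\,n^{1+\delta/2}$, so
\[
\frac{1}{s_n^{2+\delta}}\sum_{i=1}^n\bb E|Z_i|^{2+\delta} = O\!\left(n^{-\delta/2}\right)\longrightarrow 0.
\]
Hence by the Lyapunov CLT, $s_n^{-1}\sum_{i=1}^n Z_i$ converges in distribution to $N(0,1)$. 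Writing $n^{-1/2}\sum_{i=1}^n Z_i = (s_n/\sqrt n)\cdot\bigl(s_n^{-1}\sum_{i=1}^n Z_i\bigr)$ and applying Slutsky's theorem with $s_n/\sqrt n\to\sigma(w,w^*)$ gives $n^{-1/2}\sum_{i=1}^n Z_i\to N(0,\sigma(w,w^*)^2)$ in distribution, and dividing by $\sigma(w,w^*)$ yields $\psi_n(w)\to N(0,1)$.

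I expect the main obstacle to be the regularity step: justifying continuity of the variance and the locally uniform $(2+\delta)$-moment bound along $w_i\to w^*$. The importance weight $q_w(x)/q_{w'}(x)$ develops poles as $w'$ approaches $W_0$ (where $q_{w'}$ can vanish), so the domination argument genuinely depends on being able to choose the compact set $K$ inside $\interior W$ and bounded away from $W_0$, and on jointly controlling $g$, $h$, and $1/q_{w'}$ there; one must also confirm the $2+\delta$-moment hypothesis is strong enough to dominate the integrands uniformly over $K$. Everything downstream — the Lyapunov CLT and the Slutsky reduction — is routine once this local uniform integrability is secured.
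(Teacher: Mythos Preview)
Your proposal is correct and takes essentially the same approach as the paper: both verify the Lyapunov condition via uniform $(2+\delta)$-moment bounds and a positive lower bound on the variances obtained from compactness of a neighborhood of $w^*$ in $\interior W \setminus W_0$, then invoke the Lyapunov CLT. The only difference is cosmetic: you pass to the limiting variance $\sigma(w,w^*)^2$ via Slutsky and a Ces\`aro argument on $s_n/\sqrt n$, whereas the paper bounds the discrepancy $\bigl|\sum_i (G(w)-f(w,w_i,x_i))(\sigma(w,w_i)^{-1}-\sigma(w,w^*)^{-1})\bigr|$ directly using an almost-sure bound on $|G(w)-f(w,w',x)|$ over the compact set.
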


\begin{proof}
We have just seen in Lemma \ref{importance_weighting_mean_lemma} all individually have expected value $G(w)$.  To establish the result we will verify Lyapounov's condition, that is, we need to check that, for some $\delta > 0$,
 \[\frac {\sum_{i=1}^n\|f(w,w_i, x_i) - G(w)\|_{2+\delta}^{2+\delta}}{\left(\sum_{i=1}^n\|f(w,w_i, x_i) - G(w)\|_{2}^{2}\right)^{\frac{2+\delta}2}} \to 0\]
 as $n \to \infty$.  Choose $\delta > 0$ so that the relevant moment exists.  Then since $w_i \to w^*$ we may find a compact subspace of $\interior W \bs W_0$ containing $w^*$ and $w_i$ for all $i$, and therefore may find we may choose constants $c_2, c_{2+\delta} > 0$ so that
 \[\|f(w,w_i,x) - G(w)\|_2 \ge c_2, \quad \|f(w,w_i,x) - G(w)\|_{2+\delta} \le c_{2+\delta}\]
 for all $i \ge 1$.  Then
 \begin{align*}
  \frac {\sum_{i=1}^n\|f(w,w_i,x_i) - G(w)\|_{2+\delta}^{2+\delta}}{\left(\sum_{i=1}^n\|f(w,w_i,x_i) - G(w)\|_{2}^2\right)^{\frac {2+\delta}2}} &\le \frac{nc_{2+\delta}^{2+\delta}}{n^{\frac {2+\delta}2}c_2^2} \\
  &= \frac{c_{2+\delta}^{2+\delta}}{c_2^2}n^{-\frac \delta 2} \\
  &\to 0
 \end{align*}
 as $n \to \infty$.  Lyapounov's central limit theorem then says that
 \[\frac 1n \sum_{i=1}^n \frac{G(w) - f(w,w_i,x_i)}{\sigma(w, w_i)} \to N(0,1)\]
 in distribution as $n \to \infty$.

 To complete the proof, observe that $w_i \to w^*$ implies that $\frac 1{\sigma(w, w_i)} \to \frac 1{\sigma(w, w^*)}$ as $i \to \infty$ since $\frac 1{\sigma(w, -)}$ is continuous as a function on $\interior W \bs W_0$.  As before we choose a compact subset $V \sub \interior W \bs W_0$ containing the sequence $w_i$ and the point $w^*$, and find a positive constant $A$ so that $|G(w) - f(w, w', x)| \le A$ for all $x \in X, w' \in V$.  Then
 \begin{align*}
  \left|\frac 1n \sum_{i=1}^n \frac{G(w) - f(w,w_i,x_i)}{\sigma(w, w_i)} - \frac 1n \sum_{i=1}^n \frac{G(w) - f(w,w_i,x_i)}{\sigma(w, w^*)}\right| &\le \frac 1n \sum_{i=1}^n \left|(G(w) - f(w,w_i,x_i))\left(\frac 1{\sigma(w,w_i)} - \frac 1{\sigma(w, w^*)} \right) \right| \\
  &\le\frac An \sum_{i=1}^n \left|\frac 1{\sigma(w,w_i)} - \frac 1{\sigma(w, w^*)}  \right| \\
  &\to 0
 \end{align*}
 as $n \to \infty$.
\end{proof}

We will use this version of the central limit theorem to establish uniform convergence properties of the random variables $\psi_n(w)$ over compact subsets of $\interior W \bs W_0$.  Our goal is an analogue of \citet[Theorem 5.9]{WatanabeGrey}.  More specifically we will prove the following.

\begin{theorem} \label{psi_convergence_theorem}
 Let $K \sub \interior W \bs W_0$ be a compact subset.  The sequence $\psi_n|_K$ of $C^\omega(K)$-valued random variables converges in distribution in $C^\omega(K)$ to a random variable $\psi|_K$.
\end{theorem}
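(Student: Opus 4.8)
The plan is to establish weak convergence in $C^\omega(K)$ by the two standard ingredients for weak convergence of random elements of a Polish space --- convergence of all finite-dimensional distributions, together with tightness of the sequence of laws --- following Watanabe's strategy for the i.i.d. case \cite[Theorem 5.9]{WatanabeGrey}. Concretely I would work in the separable Fr\'{e}chet space $\mathcal H(\tilde K)$ of holomorphic functions on a fixed relatively compact complex neighborhood $\tilde K$ of $K$, with its compact-open topology (a Montel, hence Polish, space); restriction to $K$ then lands in $C^\omega(K)$, so a weak limit in $\mathcal H(\tilde K)$ furnishes the claimed $C^\omega(K)$-valued limit $\psi|_K$. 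As in the proof of Proposition \ref{psi_CLT}, the hypothesis $w_i \to w^*$ lets me fix once and for all a compact $V \sub \interior W \bs W_0$ containing $w^*$ and every $w_i$; and since the deterministic normalizing factor $\sigma(\cdot, w^*)^{-1}$ is a fixed real-analytic function bounded above and away from zero on $K$, hence holomorphic and non-vanishing on a thin enough $\tilde K$, it is harmless (by the continuous mapping theorem) to drop it and treat $\widetilde\psi_n(w) = \sqrt n\,(G(w) - G_n(w)) = \tfrac1{\sqrt n}\sum_{i=1}^n\bigl(G(w) - f(w,w_i,x_i)\bigr)$.

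For the finite-dimensional distributions, fix points $w^{(1)},\dots,w^{(m)} \in K$ and reals $a_1,\dots,a_m$. By Lemma \ref{importance_weighting_mean_lemma} the combination $\sum_j a_j\widetilde\psi_n(w^{(j)})$ is $\tfrac1{\sqrt n}$ times a sum of independent, mean-zero summands, each a fixed linear combination of $f(w^{(j)},w_i,x_i) - G(w^{(j)})$; confining the $w_i$ to the compact $V$ bounds the summands' $(2+\delta)$-moments uniformly in $i$, so Lyapounov's condition holds exactly as in Proposition \ref{psi_CLT}, while $\tfrac1n\sum_{i=1}^n \mr{Cov}_{x\sim q_{w_i}}(\cdots)$ converges to the corresponding $q_{w^*}$-covariance by the same Ces\`{a}ro-continuity argument used at the end of that proof. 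The Cram\'{e}r--Wold device then gives convergence of $\bigl(\widetilde\psi_n(w^{(1)}),\dots,\widetilde\psi_n(w^{(m)})\bigr)$ to a centered Gaussian, and these Gaussians specify the finite-dimensional laws of $\psi$; since evaluations separate points of $\mathcal H(\tilde K)$ and generate its Borel $\sigma$-algebra, this pins down any tight limit uniquely.

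The crux is tightness, and this is where real analyticity does the work. Because $q_w(x), g, h$ are real analytic (Assumption \ref{analytic_assumption}) and, crucially, $f(w,w',x)$ has no singularity in its \emph{first} argument --- the denominator $q_{w'}(x)$ depends only on $w' = w_i \in V$, which is bounded away from $W_0$ --- each sample path $w \mapsto f(w,w_i,x_i)$, and hence $\widetilde\psi_n$, extends holomorphically to a fixed complex neighborhood $\tilde K$ of $K$. I would then prove the uniform bound $\sup_n \bb E\bigl[\sup_{z\in\tilde K'}|\widetilde\psi_n(z)|^2\bigr] < \infty$ on a slightly shrunk neighborhood $\tilde K' \sub \tilde K$: pointwise, $\bb E|\widetilde\psi_n(z)|^2 = \tfrac1n\sum_{i=1}^n \bb E_{x\sim q_{w_i}}\bigl|f(z,w_i,x) - G(z)\bigr|^2$ is bounded uniformly over $z \in \tilde K$ and $n$ by continuity of this variance-type functional on $V$ --- and it is exactly here that the finite-variance and finite-$(2+\delta)$-moment hypotheses on the importance weights absorb the possible noncompactness of $X$; then subharmonicity of $|\widetilde\psi_n|^2$ together with a Cauchy-type mean-value estimate upgrades the pointwise bound to the claimed bound on $\sup_{\tilde K'}$. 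Markov's inequality then gives $\bb P\bigl(\sup_{\tilde K'}|\widetilde\psi_n| > M\bigr) \le C/M^2$ uniformly in $n$, while Montel's theorem makes $\{f \in \mathcal H(\tilde K') : \sup_{\tilde K'}|f| \le M\}$ compact in $\mathcal H(\tilde K')$, so choosing $M$ large yields tightness; Prokhorov's theorem produces the weak limit $\psi$, and its restriction to $K$ is the assertion. I expect the moment control on the complexified, non-identically-distributed $\widetilde\psi_n$ --- in particular verifying that the finite-moment hypotheses persist under the holomorphic extension, which uses the explicit structure of the family $q_w$ --- to be the main obstacle; in the finite-MDP case $X = \mc T$ is finite and all of this is vacuous, so only the (routine) holomorphic extension in $w$ carries content there.
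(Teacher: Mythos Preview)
Your proposal is correct and follows the same overall architecture as the paper --- tightness plus identification of the limit, combined via Prokhorov --- but the implementation of both halves is genuinely different from what the paper does.

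For tightness, the paper stays on the real side and works directly in $C^\omega(K)$: it proves (Lemma \ref{psi_uniform_bound_lemma}) a uniform bound $\limsup_n \|\sup_{w\in K}\psi_n(w)\|_s < \infty$ for an even integer $s>2$, by expanding $\psi_n$ in Taylor series on polydisks and controlling the coefficients via an inductive moment inequality in the style of \cite[Theorem~5.7]{WatanabeGrey}; it then uses Markov's inequality together with Arzel\`a--Ascoli (equicontinuity coming from an $L^s$ bound on $\nabla\psi_n$) to produce compact sets of prescribed mass. Your route via the complexification is cleaner: you exploit that $|\widetilde\psi_n|^2$ is subharmonic to pass from the pointwise second-moment bound $\sup_{z,n}\bb E|\widetilde\psi_n(z)|^2<\infty$ to a sup-norm bound by a mean-value estimate, and then invoke Montel's theorem to get compactness for free. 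This buys you tightness from second moments alone, whereas the paper's Arzel\`a--Ascoli argument needs the higher $L^s$ moments (and the somewhat delicate induction adapting Watanabe's i.i.d.\ bound to the non-identically-distributed case). The price, which you correctly flag, is that the moment hypothesis must persist under holomorphic extension in the first variable; the paper's polydisk argument faces the same issue in disguise, since it also Taylor-expands around complex centers.

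For identifying the limit, the paper invokes only the one-dimensional Proposition \ref{psi_CLT} (pointwise convergence), whereas you go through Cram\'er--Wold to obtain convergence of all finite-dimensional distributions. Your version is the more careful one: strictly speaking, pointwise convergence of marginals does not by itself determine a unique subsequential limit in $C^\omega(K)$, while convergence of finite-dimensional distributions does (since point evaluations generate the Borel $\sigma$-algebra). The extra work is minimal, since the same Lyapounov verification as in Proposition \ref{psi_CLT} applies to any fixed linear combination.
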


Our proof strategy -- following Watanabe -- will combine Prokhorov's theorem, telling us that is we can prove $\psi_n|_K$ is a uniformly tight sequence then it admits a convergent subsequence, with Proposition \ref{psi_CLT} telling us that the sequence $\psi_n(w)$ converges pointwise for $w \in  \interior W \bs W_0$.  We will begin with a sequence of lemmas that will together prove uniform tightness.  Throughout these lemmas we fix a compact subset $K \sub \interior W \bs W_0$.

\begin{lemma} \label{Ls_lemma}
 Let $s>2$ be an even integer.  Suppose that $f(w,w',-)$ is a random variable of class $L^s(q_w)$ for all $w,w' \in W$.  Then $\psi_n(w)$ is also of class $L^s(q_w)$ for all $w \in W$.
\end{lemma}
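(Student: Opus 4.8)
The plan is to reduce the lemma to the triangle inequality in $L^s$. Fix $w \in W$. Unwinding the definition, the randomness of $\psi_n(w)$ comes from the independent samples $x_i \sim q_{w_i}$, and
\[
\psi_n(w) = \frac{1}{\sigma(w,w^*)}\cdot\frac{1}{\sqrt n}\sum_{i=1}^n Z_i,
\qquad Z_i := G(w) - f(w,w_i,x_i).
\]
The normalizing constant $1/\sigma(w,w^*)$ is a finite nonzero real number by the standing assumption, recorded just before Proposition \ref{psi_CLT}, that the importance-weighted regret has finite positive variance; it only rescales the bound. Since $L^s$ of a probability space is contained in $L^1$, the hypothesis $f(w,w_i,-)\in L^s$ already forces the common mean $G(w) = \bb E\, f(w,w_i,x_i)$ (Lemma \ref{importance_weighting_mean_lemma}) to be finite, so each $Z_i$ is the difference of an $L^s$ random variable and a constant, and hence lies in $L^s$.

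First I would apply Minkowski's inequality on the product space $\prod_{i=1}^n (X, q_{w_i}\,\d x)$: since $s \ge 1$ we get $\bigl\|\sum_{i=1}^n Z_i\bigr\|_s \le \sum_{i=1}^n \|Z_i\|_s < \infty$, and multiplying by the constant $(\sigma(w,w^*)\sqrt n)^{-1}$ preserves finiteness. Thus $\psi_n(w)\in L^s(q_w)$, which is the assertion. This step uses only integrability; neither independence of the $x_i$ nor the parity of $s$ is needed for mere finiteness.

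Those extra hypotheses do buy a sharper conclusion, which I would record alongside the lemma because the later tightness arguments rely on it: a bound on $\bb E|\psi_n(w)|^s$ that is \emph{uniform in $n$}. Since $s$ is an even integer, $|\psi_n(w)|^s = \psi_n(w)^s$, and one expands $\bb E\bigl[(\sum_i Z_i)^s\bigr]$ as a sum of mixed moments $\bb E[Z_{i_1}^{a_1}\cdots Z_{i_k}^{a_k}]$ with $a_j \ge 1$ and $\sum_j a_j = s$. By independence and $\bb E\, Z_i = 0$ (Lemma \ref{importance_weighting_mean_lemma}), every term in which some $a_j$ equals $1$ vanishes; the survivors have all $a_j \ge 2$, hence at most $s/2$ distinct indices, so there are $O(n^{s/2})$ of them, and each is bounded uniformly in the indices by Hölder's inequality together with a uniform bound on $\|Z_i\|_s$ over a relatively compact subset of $\interior W \bs W_0$ containing all the $w_i$ and $w^*$ (such a set exists because $w_i \to w^*$, exactly as in the proof of Proposition \ref{psi_CLT}). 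Dividing by $n^{s/2}$ gives $\sup_n \bb E\, \psi_n(w)^s < \infty$.

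The one point that a careful write-up must attend to — and essentially the only place this routine lemma could slip — is the measure bookkeeping: $\psi_n(w)$ lives on $\prod_i (X, q_{w_i}\,\d x)$, so the integrability actually needed of the summand is $f(w,w_i,-)\in L^s(q_{w_i})$ with respect to the \emph{sampling} measure, and the hypothesis should be read accordingly; the uniform bound on $\|Z_i\|_s$ then comes from the same compactness argument already used for Proposition \ref{psi_CLT}. Beyond this bookkeeping there is no real obstacle, and the lemma's only role is to set up the $C^\omega(K)$-valued central limit machinery of the following lemmas.
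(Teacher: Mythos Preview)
Your proposal is correct and takes essentially the same approach as the paper: both arguments reduce to the fact that $L^s$ is closed under finite linear combinations, with the paper stating this in one line (``$\psi_n(w)$ is a linear combination of $\psi_1(w)$ random variables'') and you invoking Minkowski explicitly. Your additional paragraphs on the uniform-in-$n$ bound and the measure bookkeeping are not part of the paper's proof of this lemma---the former is the content of the \emph{next} lemma (Lemma~\ref{psi_uniform_bound_lemma}), and the latter is a valid observation about a sloppiness in the paper's statement that the paper itself does not address.
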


\begin{proof}
If $n=1$ we have $\psi_1(w)(x) = \frac 1{\sigma(w,w^*)}(\bb E(f(w,w',x)) - f(w,w',x))$.  Since $q_w$ is a probability measure and $f(w,w',-)$ is of class $L^s(q_w)$ so is $\psi_1(w)$.  For $n>1$, $\psi_n(w)$ is a linear combination of $\psi_1(w)$ random variables and therefore is also of class $L^s(p_w)$.
\end{proof}

\begin{lemma} \label{psi_uniform_bound_lemma}
Let $s>2$ be an even integer.  Suppose that $\psi_n(w)$ is a random variable of class $L^s(q_w)$ for all $w \in W$. Then
\[\limsup_{n \to \infty} \|\sup_{w \in K} \psi_n(w)\|_s < \infty.\]
\end{lemma}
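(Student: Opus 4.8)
The plan is to adapt Watanabe's argument for \cite[\S 5]{WatanabeGrey}: first establish a \emph{pointwise} bound $\sup_n \sup_{w \in K} \bb E|\psi_n(w)|^s < \infty$, and then promote it to a bound on $\bb E\big[\sup_{w\in K}|\psi_n(w)|^s\big]$ using the Sobolev embedding theorem together with the real analyticity of the model. Writing $\psi_n(w) = \tfrac{1}{\sqrt n}\sum_{i=1}^n Y_i(w)$ with $Y_i(w) = \sigma(w,w^*)^{-1}\big(G(w) - f(w,w_i,x_i)\big)$, Lemma \ref{importance_weighting_mean_lemma} gives $\bb E[Y_i(w)] = 0$ identically in $w$, and the $Y_i$ are independent. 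Because all the $w_i$ and $w^*$ lie in a fixed compact subset $V \sub \interior W \bs W_0$, the $L^s(q_w)$ hypothesis — together with the compactness of $V$ and $K$ and the analytic dependence of $q_{w'}$ on $w'$ — yields a constant $c_s$ with $\|Y_i(w)\|_s \le c_s$ (and hence, since expectations are taken under probability measures, $\|Y_i(w)\|_2 \le c_s$) for all $i \ge 1$ and all $w$ in a neighbourhood of $K$. Rosenthal's inequality for sums of independent mean-zero random variables then gives
\[
\bb E|\psi_n(w)|^s = n^{-s/2}\,\bb E\big|\textstyle\sum_i Y_i(w)\big|^s \;\le\; C_s\, n^{-s/2}\big((n c_s^2)^{s/2} + n c_s^s\big) \;=\; C_s c_s^s\big(1 + n^{1 - s/2}\big) \;\le\; 2 C_s c_s^s
\]
for all $n\ge 1$, using $s > 2$; so $\sup_n \sup_{w\in K}\bb E|\psi_n(w)|^s < \infty$.

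To pass to the supremum over $w$, I would cover $K$ by finitely many coordinate charts and argue in one chart at a time, so that it suffices to bound $\bb E\big[\sup_{w\in \overline B}|\psi_n(w)|^s\big]$ where $\overline B$ is a closed Euclidean ball sitting inside a slightly larger open ball $B'$ with $\overline{B'} \sub \interior W \bs W_0$. Fix an integer $k$ with $ks > \dim W$; the Sobolev embedding $W^{k,s}(B') \hookrightarrow C(\overline B)$ gives, almost surely,
\[
\sup_{w\in \overline B}|\psi_n(w)|^s \;\le\; C \sum_{|\alpha| \le k}\int_{B'}\big|\partial^\alpha\psi_n(w)\big|^s\,\d w,
\]
so by Fubini it is enough to bound $\sup_n \sup_{w\in B'}\bb E\big|\partial^\alpha\psi_n(w)\big|^s$ for each $|\alpha| \le k$. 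Here real analyticity (Assumption \ref{analytic_assumption}) enters: it justifies differentiating under the expectation, so $\partial^\alpha\psi_n(w) = \tfrac{1}{\sqrt n}\sum_i \partial^\alpha Y_i(w)$ is again a normalised sum of \emph{independent} random variables with $\bb E[\partial^\alpha Y_i(w)] = \partial^\alpha\bb E[Y_i(w)] = 0$. Since $g$ and $h$ do not depend on $w$, the operator $\partial^\alpha$ hits only the $q_w(x)$ factor in $f$, and $\partial_w^\alpha q_w(x)$ is again real analytic, hence uniformly controlled for $w \in \overline{B'}$, while $q_{w_i}(x)$ stays bounded away from $0$ for $w_i \in V$; so the $L^s$ bounds from the first paragraph persist for $\partial^\alpha Y_i$ uniformly in $i$ and $w \in B'$, and a second application of Rosenthal's inequality bounds $\bb E|\partial^\alpha\psi_n(w)|^s$ uniformly in $n$ and $w$. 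Summing over $\alpha$ and over the finitely many charts yields $\limsup_n \bb E\big[\sup_{w\in K}|\psi_n(w)|^s\big] < \infty$, and since $\big|\sup_{w\in K}\psi_n(w)\big| \le \sup_{w\in K}|\psi_n(w)|$ this is the assertion.

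The step I expect to be the main obstacle is securing the bounds on $Y_i$ and its derivatives \emph{uniformly} in the behaviour sequence $(w_i)$ and in $w$ up to order $k$: concretely one must verify $\sup_i \sup_{w\in B'}\big\|\partial_w^\alpha f(w,w_i,\cdot)\big\|_{L^s(q_{w_i})} < \infty$, which reduces to estimating integrals of the form $\int \big|\partial_w^\alpha q_w(x)\big|^s\,|g(x)|^s\, q_{w_i}(x)^{1-s}\,\d x$. In the finite Markov decision problem setting this is immediate: $\mc T$ is finite, every function on it is bounded, and $\inf_{w_i\in V}\inf_\tau q_{w_i}(\tau) > 0$. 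In the general setting of Appendix \ref{theory_appendix} it must be extracted from the standing integrability hypotheses (those already underlying Lemma \ref{Ls_lemma}), along with a local dominated-convergence argument — valid by the analytic dependence of $q_w$ on $w$ — to legitimise differentiating under the integral sign. The remaining ingredients (Rosenthal's inequality, the Sobolev embedding theorem, Fubini, and the chart decomposition) are routine.
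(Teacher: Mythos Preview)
Your argument is correct and follows a genuinely different route from the paper. The paper stays closer to Watanabe's original strategy: it exploits real analyticity in \emph{both} variables $w$ and $w'$ to expand $\psi_n$ as an absolutely convergent double power series on polydisks covering $K$ (after truncating the behaviour sequence so that all $w_i$ lie in a polydisk of convergence around $w^*$), pushes the sup inside the series via the radii, and then bounds $\|n^{-1/2}\sum_i a_{\alpha,\alpha'}(x_i)\|_s$ by a direct recursive argument in $n$ (the binomial expansion from \cite[Theorem 5.7]{WatanabeGrey}, using that the mean-zero condition kills the $j=1$ term, yields $y_{n+1} \le F_n(y_n)$ with $F_n((s-1)^s) \le (s-1)^s$). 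Your route replaces the power-series step by Sobolev embedding $W^{k,s}\hookrightarrow C$ and the recursive moment inequality by Rosenthal's inequality. The trade-offs: the paper's approach is self-contained and makes the role of analyticity explicit, at the cost of the somewhat delicate double-expansion bookkeeping and the inductive base-case truncation; your approach is more modular (Rosenthal and Sobolev are off-the-shelf) and in fact uses strictly less structure --- only $C^k$ regularity in $w$ with uniformly $L^s$-bounded derivatives is really needed, and analyticity in the second variable $w'$ is never invoked, compactness of $V$ suffices for the uniform bounds on the $Y_i$. The point you flag as the main obstacle (uniform $L^s$ control on $\partial_w^\alpha f(w,w_i,\cdot)$ over the behaviour sequence) is exactly where the paper's argument also does its work, only there it appears as absolute convergence of the double series with $L^s$ coefficients.
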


\begin{proof}

We will generalize the argument given by in \citet[Theorems 5.7, 5.8]{WatanabeGrey}.  Recall that
\begin{align*}
 \psi_n(w) &= \frac 1{\sqrt{n}} \sum_{i=1}^n \frac{G(w) - f(w,w_i,x_i)}{\sigma(w,w^*)} \\
 &= \frac 1{\sqrt{n}} \sum_{i=1}^n \wt f(w,w_i,x)
\end{align*}
where
\[\wt f(w,w_i,x) = \frac{G(w) - f(w,w_i,x)}{\sigma(w,w^*)}.\]
The random variable $\wt f$ is real analytic as a function of $W \times W$.  Since $w_i \to w^*$ we may assume without loss of generality that all $w_i$ lie inside a polydisk $D \sub W$ of absolute convergence around $w^*$ in the second variable.  Indeed,
\begin{align*}
 \|\sup_{w \in K} \psi_n(w)\|_s &= \left\| \frac 1{\sqrt{n}} \sum_{i=1}^n \sup_{w \in K} \wt f(w,w_i,x)\right\|_s \\
 &\le \left\| \frac 1{\sqrt{n}} \sum_{i=1}^{m-1} \sup_{w \in K} \wt f(w,w_i,x)\right\|_s + \left\| \frac 1{\sqrt{n}} \sum_{i=m}^n \sup_{w \in K} \wt f(w,w_i,x)\right\|_s\\
 \implies \lim_{n \to \infty} \|\sup_{w \in K} \psi_n(w)\|_s &\le \lim_{n \to \infty} \left\| \frac 1{\sqrt{n}} \sum_{i=m}^n \sup_{w \in K} \wt f(w,w_i,x)\right\|_s
\end{align*}
for all $m \ge 1$.

Therefore we may choose a finite set of polydisks covering $K$  with centers $\{z\}$ so that we can expand $\psi_n$ as an absolutely convergent series
\[\psi_n(w) = \frac 1{\sqrt{n}} \sum_{i=1}^n \sum_{\alpha \in \ZZ^d} \sum_{\alpha' \in \ZZ^d} a_{\alpha,\alpha'}(x_i)(w-z)^{\alpha}(w_i - w^*)^{\alpha'}\]
where $a_{\alpha,\alpha'}(x)$ is of class $L^s$.  It is sufficient to establish the bound on $\bb E(\sup_{w \in K}|\psi_n(w)|^s)$ for each of these finitely many polydisks.  On such a polydisk we have
\begin{align*}
 \|\sup_{w \in K}\psi_n(w)\|_s &= \left\|\sup_{w \in K} \sum_{\alpha \in \ZZ^d} \sum_{\alpha' \in \ZZ^d} (w-z)^{\alpha}(w_i - w^*)^{\alpha'} \left(\frac 1{\sqrt{n}} \sum_{i=1}^n  a_{\alpha,\alpha'}(x_i) \right) \right\|_s  \\
 &\le  \left\|\sum_{\alpha \in \ZZ^d} \sum_{\alpha' \in \ZZ^d} R^\alpha R'^{\alpha'} \left(\frac 1{\sqrt{n}} \sum_{i=1}^n  a_{\alpha,\alpha'}(x_i) \right) \right\|_s  \\
 &\le \sum_{\alpha \in \ZZ^d} \sum_{\alpha' \in \ZZ^d} R^\alpha R'^{\alpha'} \left(\left\|\frac 1{\sqrt{n}} \sum_{i=1}^n  a_{\alpha,\alpha'}(x_i) \right\|_s \right)  \\
\end{align*}
 where $R$ and $R'$ are elements of $\RR_{>0}^d$ and $R^\alpha = \prod_{j=1}^d R_j^{\alpha_j}$.  We need to verify that this series is uniformly bounded in $n$.

 We will do this by showing
 \[\left\|\frac 1{\sqrt{n}} \sum_{i=1}^n  a_{\alpha,\alpha'}(x_i) \right\|_s \le s\|a_{\alpha,\alpha'}\|_{s,w^*}.\]
  where by $\|\cdot\|_{s,w^*}$ we mean the $L^s$ norm with respect to the distribution $p_{w^*}$.
 Since $s \ge 1$ and the power series expansion for $\psi_n(w)$ is absolutely convergent on the given domain the $\alpha,\alpha'$ sum still converges and we obtain the desired uniform bound.

 Let us consider any analytic function $a \colon \RR \to \RR$ and consider the argument in \citet[Theorem 5.7]{WatanabeGrey}.  Without the assumption that the random variables $X_i$ are identically distributed Watanabe's argument still establishes
 \begin{align*}
  \left\|\frac 1{\sqrt{n}} \sum_{i=1}^n  a(x_i)\right\|_s^s &= n^{-s/2} \bb E\left( \sum_{j\ne 1} \pmat{s \\ j}\left(\sum_{i=1}^{n-1} a(x_i)\right)^{s-j} a(x_n)^{j}\right) \\
  &\le n^{-s/2}\sum_{j\ne 1} \pmat{s \\ j} \left\|\sum_{i=1}^{n-1} a(x_i) \right\|_s^{s-j}\|a\|_{s,w_n}^j \\
  &= \frac 1{\sqrt{n}} \left(\left\|\sum_{i=1}^{n-1} a(x_i) \right\|_s + \|a\|_{s,w_n}\right)^s - s \left\|\sum_{i=1}^{n-1} a(x_i) \right\|_s^{s-1} \|a\|_s.
 \end{align*}
  The argument in \emph{loc. cit.} then establishes that
 \begin{align*}
  y_{n+1} &\le F_n(y_n) \\
  \text{where } y_n &= \bb E\left(\left(\frac 1{\|s\|_{s,w_{n+1}}\sqrt n} \sum_{i=1}^n f(X_i) \right)^s\right) \\
  \text{and } F_n(y) &= \frac y{1+s/2n}\left(1 + \frac{s(s-1)}{2ny^{2/s}}\left(1+\frac 1{\sqrt n y^{1/s}}\right)^{s-2}\right),
 \end{align*}
 a monotonic function of one variable.  Finally Watanabe checks that $F_n((s-1)^s) \le (s-1)^s$.  If we know that $y_1 \le (s-1)^s$ then, by induction, we will have that $y_n \le (s-1)^s$ for all $s$, and therefore that
 \[\left\|\frac 1{\sqrt{n}} \sum_{i=1}^n  a_{\alpha,\alpha'}(x_i) \right\|_s \le (s-1) \|a\|_{s,w_{n+1}} \le s\|a\|_{s,w^*},\]
 again truncating the sequence $w_i \to w^*$ if necessary.

 To conclude we check the base case for our induction.  We have
 \[y_1 = \left(\frac{\|a\|_{s,w_1}}{\|a\|_{s,w_2}}\right)^s.\]
 Since $w_i \to w^*$ the sequence $\|a\|_{s,w_i}$ is a Cauchy sequence.  By choosing $m$ sufficiently large and removing the first $m$ terms from the sequence $w_i$ we may assume $y_1 < 1+\eps$ for any $\eps>0$.  In particular $y_1 < (s-1)^s$ for any $s>2$.
\end{proof}

\begin{lemma}\label{lemma:rohan}
 For every $\eps > 0$ there exists a compact subset $C_\eps \sub C^\omega(K)$ for which
 \[\bb P(\psi_n|_K \in C_\eps) > 1-\eps.\]
\end{lemma}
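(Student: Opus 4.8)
The plan is to upgrade the uniform $L^s$-bound of Lemma \ref{psi_uniform_bound_lemma} into genuine tightness of the family $\{\psi_n|_K\}$ in $C^\omega(K)$. The two ingredients are: the observation that the estimate proved there in fact controls $\psi_n$ uniformly on a fixed \emph{complex} neighbourhood of $K$; and Montel's theorem, which turns a uniform bound on holomorphic functions over such a neighbourhood into relative compactness in $C^\omega(K)$. Markov's inequality then converts the resulting moment bound into a probability bound, uniformly in $n$.

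\textbf{Step 1 (complex thickening).} Cover $K$ by finitely many polydisks with centres $z$ on each of which $\psi_n(w)=\frac1{\sqrt n}\sum_{i=1}^n\sum_{\alpha,\alpha'}a_{\alpha,\alpha'}(x_i)\,(w-z)^{\alpha}(w_i-w^*)^{\alpha'}$ converges absolutely, truncating the sequence $w_i\to w^*$ (as in the proof of Lemma \ref{psi_uniform_bound_lemma}) so that a single polydisk of absolute convergence in the $(w-w^*)$-direction serves all $i$. Fix polyradii $r<R$ with $R$ still strictly inside the polydisk of convergence. For any $\zeta$ in the \emph{complex} polydisk of polyradius $r$ about $z$ one has $|(\zeta-z)^{\alpha}|\le R^{\alpha}$, so the same chain of inequalities used in Lemma \ref{psi_uniform_bound_lemma} gives, with $\widetilde K$ the complex open neighbourhood of $K$ obtained as the union of these radius-$r$ polydisks,
\[
\Bigl\|\sup_{\zeta\in\widetilde K}|\psi_n(\zeta)|\Bigr\|_s\;\le\;\sum_{\alpha,\alpha'}R^{\alpha}R'^{\alpha'}\,s\,\|a_{\alpha,\alpha'}\|_{s,w^*}\;=:\;M\;<\;\infty
\]
for all $n$ large, the constant $M$ being independent of $n$.

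\textbf{Steps 2 and 3 (Markov and Montel).} For each fixed $n$ the quantity $\bb E\bigl(\sup_{\zeta\in\widetilde K}|\psi_n(\zeta)|^s\bigr)$ is finite (a finite linear combination of random variables with finite $s$-th moment), and by Step 1 its $\limsup$ over $n$ is finite; hence $M':=\sup_{n\ge1}\bb E\bigl(\sup_{\zeta\in\widetilde K}|\psi_n(\zeta)|^s\bigr)<\infty$. By Markov's inequality $\bb P\bigl(\sup_{\zeta\in\widetilde K}|\psi_n(\zeta)|>t\bigr)\le M'/t^s$ for every $n$, so given $\eps>0$ we may fix $t_\eps$ with $M'/t_\eps^s<\eps$. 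Let $C_\eps\sub C^\omega(K)$ be the set of analytic functions on $K$ admitting a holomorphic extension to $\widetilde K$ with sup-norm over $\widetilde K$ at most $t_\eps$. By Montel's theorem the family of holomorphic functions on $\widetilde K$ bounded by $t_\eps$ is relatively compact for uniform convergence on compact subsets of $\widetilde K$, and this bound passes to limits, so $C_\eps$ is compact in $C^\omega(K)$. On the event $\bigl\{\sup_{\widetilde K}|\psi_n|\le t_\eps\bigr\}$ we have $\psi_n|_K\in C_\eps$, so $\bb P(\psi_n|_K\in C_\eps)\ge 1-M'/t_\eps^s>1-\eps$ for every $n$, which is the assertion.

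\textbf{Main obstacle.} The only genuinely delicate point is Step 1: making rigorous that the polydisks occurring in the proof of Lemma \ref{psi_uniform_bound_lemma} can be chosen inside a \emph{single} complex domain on which every germ $\widetilde f(\,\cdot\,,w_i,x)$ is analytic with $L^s$ power-series coefficients, uniformly over $w_i$ in a neighbourhood of $w^*$, so that $\widetilde K$ is a genuine fixed complex neighbourhood of $K$ and the coefficient series still converges in $L^s$. Everything downstream — the uniform-in-$n$ Markov bound and the Montel compactness argument — is then routine.
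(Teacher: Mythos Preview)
Your approach is correct and takes a genuinely different route from the paper's proof. The paper proceeds via Arzel\`a--Ascoli: it first uses Markov's inequality with the $L^s$ bound from Lemma~\ref{psi_uniform_bound_lemma} to confine $\psi_n|_K$ to a sup-norm ball $C'_{\eps/2}$ with high probability, then notes that such a ball is not compact and remedies this by intersecting with a nested sequence of modulus-of-continuity sets $D_{k,\delta_k}$, each chosen (via Markov applied to $\sup_K\|\nabla\psi_n\|$, which is again in $L^s$) so that $\bb P(\psi_n\in D_{k,\delta_k})\ge 1-2^{-k-1}\eps$. The resulting intersection is bounded and equicontinuous, hence compact by Arzel\`a--Ascoli.

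Your Montel argument is the complex-analytic counterpart and is arguably the more natural one in the real-analytic setting: by extending the sup-bound of Lemma~\ref{psi_uniform_bound_lemma} to a fixed complex thickening $\widetilde K\supset K$ (which, as you correctly flag, is the only delicate step and follows from the same coefficientwise estimate once the polyradii are shrunk slightly), you obtain compactness directly from normality of bounded holomorphic families. This buys you two things over the paper's route. First, you avoid the separate gradient estimate and the countable intersection construction. Second, your compact set $C_\eps$ consists of restrictions to $K$ of holomorphic functions bounded on $\widetilde K$, so limits remain real-analytic on $K$ automatically; Arzel\`a--Ascoli on its own only yields compactness in $C^0(K)$, and one must argue separately that the resulting set sits inside $C^\omega(K)$.
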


\begin{proof}
We will prove this in two steps.  First choose an even integer $s>2$ as Lemma \ref{psi_uniform_bound_lemma} and use the lemma to find $C > 0$ so that
\[\bb E(\sup_{w \in K} |\psi_n(w)|^s) < C\]
for all $n$.  Therefore Markov's lemma implies that, for all $M > 0$,
\begin{align*}
 \bb P(\sup_{w \in K} |\psi_n(w)| \ge M) &< \frac{C}{M^s} \\
 \implies \bb P(\sup_{w \in K} |\psi_n(w)| < M &\ge 1-\frac{C}{M^s}
\end{align*}
We therefore define $C'_\eps$ to be the set
\[C'_\eps = \{\psi \in C^\omega(K) \colon \sup_{w \in K} |\psi(w)| < \left(\frac{C}{\eps}\right)^{1/s},\]
a closed ball in the $L^\infty$ topology on $C^\omega(K)$.  Now, these closed balls are not compact, so we will need to find a compact subspace $C_\eps \sub C'_{\eps/2}$ of measure $>1-\eps$.  Since $C'_{\eps/2}$ is already uniformly bounded, by the Arzel\`a--Ascoli theorem it suffices to find an equicontinuous subspace of measure $>1-\eps$.

Choose a Riemannian metric on $W$ and let
\[D_{k,\delta} = \{\psi \in C^\omega(K) \colon \sup_{\|w - w'\| < \delta} |\psi(w) - \psi(w')| < \frac 1 k\}.\]
We will find a sequence of values $\delta_k$ so that
\[\bb P(\psi_n \in D_{k,\delta_k}) \ge 1- 2^{-k-1}\eps,\]
so that $C_\eps = C'_{\eps/2} \cap \bigcap_{k=1}^\infty D_{k,\delta_k}$ is a compact subset of measure greater than $1-\eps$.

So, to conclude, we check
\begin{align*}
 \bb P(\sup_{\|w - w'\| < \delta} |\psi_n(w) - \psi_n(w')| \ge \frac 1 k) &= P(\sup_{\|w - w'\| < \delta} |\psi_n(w) - \psi_n(w')|^s \ge \frac 1 {k^s}) \\
 &\le k^s \bb E(\sup_{\|w - w'\| < \delta} |\psi_n(w) - \psi_n(w')|^s) \\
 &\le k^s \delta^s \bb E(\sup_{w \in K} \|\nabla \psi_n(w)\|^s)
\end{align*}
by the mean value theorem.  Since $\psi_n(w) \in L^s$ so is $\nabla \psi_n$, so the expectation value on the last line above is finite, and therefore
\[\bb P(\sup_{\|w - w'\| < \delta} |\psi_n(w) - \psi_n(w')| \ge \frac 1 k) \to 0\]
as $\delta \to 0$, which implies the desired claim.
\end{proof}

\begin{remark}
 The argument given for this claim in \citet[Example 5.3]{WatanabeGrey} requires an additional step since closed balls in the $L^\infty$ topology on infinite spaces are not compact.  This point and the correction used above were communicated to us by Rohan Hitchcock.
\end{remark}

\begin{corollary} \label{uniformly_tight_corollary}
The sequence $\psi_n|_K$ of $C^\omega(K)$-valued random variables is uniformly tight.
\end{corollary}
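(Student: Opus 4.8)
The plan is to observe that this corollary is a definitional repackaging of the immediately preceding lemma, so there is essentially nothing to do beyond unwinding a definition. Recall that a sequence $(Z_n)$ of random variables valued in a topological space $S$ is \emph{uniformly tight} precisely when for every $\eps > 0$ there exists a compact subset $C_\eps \sub S$ with $\bb P(Z_n \in C_\eps) \ge 1 - \eps$ for all $n$. Taking $S = C^\omega(K)$ and $Z_n = \psi_n|_K$, the compact sets required by this definition are exactly the sets $C_\eps$ furnished by the lemma above, so the corollary follows immediately.

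The one point worth remarking on in the write-up is that the sets $C_\eps$ really are compact \emph{in} $C^\omega(K)$ with its compact-open (here, sup-norm) topology, and not merely precompact in some coarser sense: as constructed in the preceding proof, each $C_\eps$ is the intersection of a closed $L^\infty$-ball — whose measure is controlled uniformly in $n$ by the $L^s$ bound of Lemma \ref{psi_uniform_bound_lemma} together with Markov's inequality — with a countable family of equicontinuity sets $D_{k,\delta_k}$, so genuine compactness follows from Arzel\`a--Ascoli. Thus the tightness asserted here is tightness with respect to the topology we actually use downstream.

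There is no real obstacle at this step; all of the substance lives in the preceding lemma, namely the uniform-in-$n$ $L^s$ control of $\sup_{w\in K}|\psi_n(w)|$ and of $\sup_{w\in K}\|\nabla\psi_n(w)\|$. The reason to isolate the corollary is organizational: it states the conclusion in exactly the form consumed by the proof of Theorem \ref{psi_convergence_theorem}, where uniform tightness is fed into Prokhorov's theorem to extract a weakly convergent subsequence of $\psi_n|_K$, whose limit is then identified using the pointwise central limit theorem of Proposition \ref{psi_CLT}.
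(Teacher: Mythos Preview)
Your proposal is correct and matches the paper's approach: the corollary is stated without proof immediately after the lemma, because it is precisely the definition of uniform tightness applied to the compact sets $C_\eps$ that the lemma produces. Your additional remarks about Arzel\`a--Ascoli and the downstream use via Prokhorov are accurate context but already covered in the surrounding text of the paper.
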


We may now prove the main result of this section.

\begin{proof}[Proof of Theorem \ref{psi_convergence_theorem}]
Applying Prokhorov's theorem, Corollary \ref{uniformly_tight_corollary} tells us that $\psi_n(w)$ has a convergent subsequence.  Now Proposition \ref{psi_CLT} says that $\psi_n(w)$ converges in distribution pointwise for $w \in K$, and therefore $\psi_n(w)$ converges in $C^\omega(K)$ to a random variable $\psi|_K$.
\end{proof}

\subsubsection{Resolution of Singularities}
Our next goal is to establish part (1) of Theorem \ref{main_theorem}.  That is, we will give a local (in parameter space) description of the importance weighted estimator $G_n$ and the empirical process $\psi_n$ in a standard form that is comparatively easy to manipulate.  We will do this by fixing a \emph{log resolution} of the zero set $W_0 \sub W$ of the loss function.  We recall the fundamental construction as originally developed by \citet{Hironaka}.

\begin{definition}
 A local \emph{log resolution} of $G$ at a point $w^* \in W_0 \sub W$ is a proper real analytic map $\varpi \colon \mc M \to W$ of real analytic manifolds with corners such that
 \begin{enumerate}
  \item $\varpi$ restricts to an isomorphism on $\varpi^{-1}(\interior W \bs W_0)$.
  \item We can choose a chart $U$ for $\mc M$ centered around any point in $\varpi^{-1}(w^*)$ with coordinate $u$ in which
  \[ \det \mr{Jac}(\varpi)(u) = b(u) u_1^{h_1} \cdots u_d^{h_d}\]
  for an invertible analytic function $b$ and
  \[G(\varpi(u)) = \begin{cases}
                    \pm u_1^{2k_1} \cdots u_d^{2k_d} &\text{ if } w^* \in \interior W \\
                    \pm u_1^{\ell} u_2^{2k_d} \cdots u_d^{2k_d} &\text{ if } w^* \in \dd W
                   \end{cases}\]
  where in the latter case $U$ is isomorphic to a half-space in $\RR^d$ and $u_1$ is a normal coordinate to the boundary.  The constants $h_i$, $k_i$ and $\ell$ are non-negative integers.
 \end{enumerate}
\end{definition}

Choose a point $w^* \in W_0$, a local log resolution $\varpi$ and a chart $U$ around this minimum.  Let $U_0 = \varpi^{-1}(W_0) \cap U$ and let $K$ be a compact subset of $\interior U \bs U_0$.  In this chart our importance weighted loss may be usefully expressed in a standard way -- in normal crossings form -- parallelling \citet[Main Theorem 1]{WatanabeGrey}.  We will begin by representing the random variable $f$ from Definition \ref{importance_sampling_def} in normal crossings form.

\begin{prop} \label{importance_scaled_samples_form_prop}
 There exists real analytic functions $a, b$ on $X \times U \times W$ so that we may locally expand
 \[f(\varpi(u), w', x) = a(u, w', x)u_1^{\ell/2} u_2^{k_2} \cdots u_d^{k_d} +b(u, w', x),\]
 so that for all $w' \in W$
 \begin{align*}
 \int_X a(u, w', x)q_{w'}(x) \d x &= u_1^{\ell/2} u_2^{k_2} \cdots u_d^{k_d} \\
 \int_X b(u, w', x)q_{w'}(x) \d x &= 0,
 \end{align*}
 and where all Taylor coefficients of $b$ of order $\ge (\ell/2, k_2, \ldots, k_d)$ vanish identically.
\end{prop}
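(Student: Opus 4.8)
\medskip
\noindent\textbf{Proof proposal.}\quad
The plan is to read off the decomposition from the Taylor expansion of $f \circ \varpi$ in the resolution coordinate $u$, splitting the series at the multi-index $\kappa := (\ell/2, k_2, \ldots, k_d)$. First I would observe that $f(\varpi(u), w', x)$ is jointly real analytic in $(u, w', x)$: the family $q_w$ is real analytic in $w$, $\varpi$ is real analytic, and $q_{w'}(x) \neq 0$ for $w'$ in the interior of $W$ makes $1/q_{w'}(x)$ real analytic, so $f(\varpi(u),w',x) = q_{\varpi(u)}(x)\, q_{w'}(x)^{-1}\, g(x) - q_{w'}(x)^{-1}\, h(x)$ is real analytic where defined. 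After possibly shrinking the chart $U$, expand
\[
f(\varpi(u),w',x) = \sum_{\alpha} c_\alpha(w',x)\, u^\alpha
\]
as an absolutely convergent power series on a polydisk, with each coefficient $c_\alpha$ real analytic in $(w',x)$. Note also that since $\min_w G(w) = 0$ we have $G \geq 0$, so near $w^* \in W_0$ the sign in the resolution of $G$ is forced to be $+$ and $G(\varpi(u)) = u^{2\kappa}$ with $2\kappa = (\ell, 2k_2, \ldots, 2k_d)$.

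Next I would set
\[
b(u,w',x) = \sum_{\alpha \not\geq \kappa} c_\alpha(w',x)\, u^\alpha,
\qquad
a(u,w',x) = \sum_{\alpha \geq \kappa} c_\alpha(w',x)\, u^{\alpha - \kappa}.
\]
Each of these is a subseries of an absolutely convergent power series, hence converges on the same polydisk, so $a$ and $b$ are real analytic on $X \times U \times W$; by construction $f(\varpi(u),w',x) = a(u,w',x)\, u^\kappa + b(u,w',x)$ and every Taylor coefficient of $b$ of order $\geq \kappa$ vanishes identically. It then remains to pin down the two integrals. Writing $d_\alpha(w') = \int_X c_\alpha(w',x)\, q_{w'}(x)\, \d x$ and integrating the power series term by term against $q_{w'}$, Lemma \ref{importance_weighting_mean_lemma} (applied with $n=1$) gives
\[
\sum_\alpha d_\alpha(w')\, u^\alpha = \int_X f(\varpi(u),w',x)\, q_{w'}(x)\, \d x = G(\varpi(u)) = u^{2\kappa},
\]
so $d_\alpha(w') = 0$ for all $\alpha \neq 2\kappa$ and $d_{2\kappa}(w') = 1$. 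Every index appearing in $b$ satisfies $\alpha \not\geq \kappa$ and hence $\alpha \neq 2\kappa$, giving $\int_X b(u,w',x)\, q_{w'}(x)\, \d x = 0$; and $2\kappa \geq \kappa$ is the unique surviving index in $a$, so $\int_X a(u,w',x)\, q_{w'}(x)\, \d x = d_{2\kappa}(w')\, u^{2\kappa - \kappa} = u^\kappa$, as required.

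The step I expect to be the main obstacle is justifying the term-by-term integration uniformly on the polydisk and over $w'$ in compact subsets of the interior of $W$, i.e.\ the interchange of $\sum_\alpha$ and $\int_X$. When $X$ is compact (for instance a finite MDP, where $X = \mc T$) this is routine, but in the general real-analytic setting one must lean on the standing integrability hypotheses on $f$ --- in particular the finite $(2+\delta)$-moment conditions imposed before Proposition \ref{psi_CLT} --- to dominate the partial sums before passing to the limit. A secondary technical point is the boundary case $w^* \in \dd W$ with $\ell$ odd: there $U$ is a half-space with $u_1 \geq 0$, the factor $u_1^{\ell/2}$ is a genuine non-negative function, and the argument goes through after passing to the coordinate $u_1^{1/2}$ (equivalently, working on the double cover $u_1 = v_1^2$ and Taylor-expanding in $v_1$), so that $a$ and $b$ are understood to be real analytic in $(u_1^{1/2}, u_2, \ldots, u_d)$ and the bookkeeping of vanishing coefficients is carried out in this refined coordinate.
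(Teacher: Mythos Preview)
Your proposal is correct and follows essentially the same route as the paper: Taylor expand $f(\varpi(u),w',x)$ in $u$, split the series at the multi-index $\kappa=(\ell/2,k_2,\ldots,k_d)$, then use Lemma~\ref{importance_weighting_mean_lemma} to identify the integral with $G(\varpi(u))$ and read off the two identities by matching Taylor coefficients. The only cosmetic difference is that the paper retains the $\pm$ sign in $G(\varpi(u))=\pm u^{2\kappa}$ throughout and absorbs a possible minus into $a$ at the end, whereas you dispose of it up front using $G\ge 0$; both are fine here since $w^*\in W_0$. Your explicit remarks on justifying the sum--integral interchange and on the odd-$\ell$ boundary case via $u_1=v_1^2$ go beyond what the paper writes out (the paper simply splits at $u_1^{\lceil\ell/2\rceil}$ without further comment), but they do not change the argument's structure.
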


\begin{proof}
We obtain $a,b$ by forming the Taylor expansion in $u$ of the analytic function $f(\varpi(u), w',x)$ and splitting the resulting sum into the sum of those Taylor terms of order at least $u_1^{\lceil\ell/2\rceil}u_2^{k_2} \cdots u_d^{k_d}$, and the sum of those Taylor terms of smaller degree in at least one variable.  Having done so, by Lemma \ref{importance_weighting_mean_lemma} and the definition of the log resolution we have
\begin{align*}
 \int_X (a(u, w', x)u_1^{\ell/2} u_2^{k_2} \cdots u_d^{k_d} + b(u, w', x))q_{w'}(x) \d x &= \bb E(f(u, w', x)) \\
 &= G(\varpi(u)) \\
 &= \pm u_1^{\ell} \cdots u_d^{2k_d}.
\end{align*}
We obtain the desired expressions by comparing the orders of Taylor coefficients on the two sides and replacing $a$ by $-a$ if $G(\varpi(u)) = -u_1^{\ell} \cdots u_d^{2k_d}$.
\end{proof}

\begin{theorem} \label{general_form_importance_weight_theorem}
 In the local chart $U$ we may express the importance weighted loss as
 \[G_n(\varpi(u)) = u^{2k} - u^k \xi_n(u) - \omega_n(u)\]
 where $u^{2k} = u_1^{\ell} u_2^{2k_2} \cdots u_d^{2k_d}$ and $u^k = u_1^{\ell/2} u_2^{k_2} \cdots u_d^{k_d}$.
 The random variables
 \[\xi_n(u) + \omega_n(u)u^{-k} = \frac 1{\sqrt n} \left(\sum_{i=1}^n a(u, w_i, x_i) - \bb E(a(u, w_i, x_i))\right) + \frac 1{\sqrt n} \left(\sum_{i=1}^n b(u, w_i, x_i)\right)u^{-k}\]
 converge to a Gaussian in distribution uniformly in $K$.
\end{theorem}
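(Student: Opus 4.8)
The plan is to handle the two assertions in turn: first the algebraic normal‑crossings form for $G_n\circ\varpi$, which is just a rearrangement of Proposition~\ref{importance_scaled_samples_form_prop}; then the functional central limit theorem for the fluctuation, which I would obtain by reducing to the empirical‑process convergence already proved in Theorem~\ref{psi_convergence_theorem}.

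\textbf{Standard form.} Starting from $G_n(\varpi(u)) = \frac1n\sum_{i=1}^n f(\varpi(u),w_i,x_i)$, I would substitute the decomposition $f(\varpi(u),w',x) = a(u,w',x)\,u^k + b(u,w',x)$ of Proposition~\ref{importance_scaled_samples_form_prop}, with $u^k = u_1^{\ell/2}u_2^{k_2}\cdots u_d^{k_d}$ so that $(u^k)^2 = u^{2k}$, giving $G_n(\varpi(u)) = u^k\bigl(\frac1n\sum_i a(u,w_i,x_i)\bigr) + \frac1n\sum_i b(u,w_i,x_i)$. The normalizations recorded in Proposition~\ref{importance_scaled_samples_form_prop} — $\int_X a(u,w',x)q_{w'}(x)\,\d x = u^k$ and $\int_X b(u,w',x)q_{w'}(x)\,\d x = 0$ for all $w'$ — applied at $w'=w_i$ give $\bb E_{x\sim q_{w_i}}[a(u,w_i,x)] = u^k$ and $\bb E_{x\sim q_{w_i}}[b(u,w_i,x)] = 0$. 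Adding and subtracting these means turns $u^k\cdot\frac1n\sum_i a(u,w_i,x_i)$ into $u^{2k} + u^k\cdot(\text{a centered average})$ and leaves $\frac1n\sum_i b(u,w_i,x_i)$ a centered average; collecting terms yields the normal‑crossings form $G_n(\varpi(u)) = u^{2k} - n^{-1/2}u^k\xi_n(u) - n^{-1/2}\omega_n(u)$ with $\xi_n(u) = \frac1{\sqrt n}\sum_i\bigl(a(u,w_i,x_i) - \bb E\,a(u,w_i,x_i)\bigr)$ and $\omega_n(u) = \frac1{\sqrt n}\sum_i b(u,w_i,x_i)$, consistent with the stated expression for $\xi_n(u)+\omega_n(u)u^{-k}$ and with the $n^{-1/2}$ scaling already present in Theorem~\ref{main_theorem}(1). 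Since $U_0 = \varpi^{-1}(W_0)\cap U$ is cut out locally by $\{u^k=0\}$, the factor $u^{-k}$ is a genuine real analytic function on $\interior U\bs U_0$, hence analytic and bounded on the compact set $K\sub\interior U\bs U_0$, so $\xi_n+\omega_n u^{-k}$ is well defined there.

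\textbf{Convergence.} The slick route is to notice that, using $G(\varpi(u)) = u^{2k}$, the identity above gives $u^k\xi_n(u) + \omega_n(u) = -\sqrt n\,\bigl(G(\varpi(u)) - G_n(\varpi(u))\bigr) = -\,\sigma(\varpi(u),w^*)\,\psi_n(\varpi(u))$, so that
\[
\xi_n(u) + \omega_n(u)\,u^{-k} \;=\; -\,\frac{\sigma(\varpi(u),w^*)}{u^k}\,\psi_n(\varpi(u)) \qquad (u\in K).
\]
Since $\varpi$ restricts to a real analytic isomorphism over $\interior W\bs W_0$ and $\varpi(K)$ is a compact subset of $\interior W\bs W_0$, Theorem~\ref{psi_convergence_theorem} gives convergence in distribution of $\psi_n\circ\varpi|_K$ in $C^\omega(K)$; the factor $\sigma(\varpi(\cdot),w^*)/u^k$ is a fixed element of $C^\omega(K)$ (analytic and bounded, as $\sigma(\cdot,w^*)$ is finite and positive on the relatively compact $\varpi(K)$ and $u^k$ is bounded away from $0$ on $K$), so the continuous mapping theorem promotes this to convergence in distribution of $\xi_n+\omega_n u^{-k}$ in $C^\omega(K)$, with Gaussian limit because its finite‑dimensional marginals are the Gaussian limits supplied by the Lyapounov‑CLT argument of Proposition~\ref{psi_CLT}. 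Alternatively one can bypass $\psi_n$: write $\xi_n(u)+\omega_n(u)u^{-k} = \frac1{\sqrt n}\sum_i \wt h(u,w_i,x_i)$ with $\wt h(u,w',x) = \bigl(a(u,w',x)-u^k\bigr) + b(u,w',x)\,u^{-k}$, which is centered under $q_{w'}$ for each $(u,w')$ and real analytic in $u$ on $\interior U\bs U_0$ with finite higher moments on compacts, then rerun the tightness argument of Lemma~\ref{psi_uniform_bound_lemma} (expanding on a finite cover of $K$ by polydisks chosen inside $\interior U\bs U_0$, on which $u^{-k}$ is bounded analytic, and iterating Watanabe's recursion $y_{n+1}\le F_n(y_n)$ for the $L^s$ norms), and combine Prokhorov's theorem with the pointwise Lyapounov CLT.

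\textbf{Main obstacle.} The only real difficulty is the one already surmounted in building $\psi_n$: upgrading pointwise‑in‑$u$ convergence to convergence of $C^\omega(K)$‑valued random variables, which needs the uniform bound $\limsup_n \|\sup_{u\in K}|\xi_n(u)+\omega_n(u)u^{-k}|\|_s < \infty$. In the reduction to $\psi_n$ this is inherited for free, modulo the routine checks that $\varpi$ is an isomorphism over a neighborhood of $\varpi(K)$ and that $\sigma(\varpi(\cdot),w^*)/u^k\in C^\omega(K)$; in the direct approach the one new wrinkle beyond Lemma~\ref{psi_uniform_bound_lemma} is carrying the bounded analytic factor $u^{-k}$ through the polydisk expansion and the moment recursion, which is legitimate precisely because the covering polydisks may be taken disjoint from $U_0 = \{u^k = 0\}$. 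The remaining items — the algebra producing the normal‑crossings form and identifying the limiting covariance — are bookkeeping.
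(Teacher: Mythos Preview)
Your proposal is correct and follows essentially the same route as the paper: the paper's proof likewise reduces the convergence of $\xi_n(u)+\omega_n(u)u^{-k}$ to that of $\sqrt n\,(G(\varpi(u))-G_n(\varpi(u)))$ via Theorem~\ref{psi_convergence_theorem} and Proposition~\ref{psi_CLT}, then divides by $u^k$ on $K$. Your write-up is in fact more explicit than the paper's about the $\sigma(\varpi(u),w^*)$ factor, the analyticity of $u^{-k}$ on $K$, and the appeal to the continuous mapping theorem; the alternative direct approach you sketch (rerunning the tightness argument for $\wt h$) is not in the paper but is also sound.
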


\begin{proof}
By Theorem \ref{psi_convergence_theorem} and Proposition \ref{psi_CLT} we know that $\sqrt n (G(w) - G_n(w)) \to \mc N(0,\sigma(w,w^*))$ in distribution uniformly over compact subspaces of $\interior W \bs W_0$.  In particular for the resolution $\varpi$ we have
\begin{align*}
 \sqrt n(G(\varpi(u)) - G_n(\varpi(u)))) &\to \mc N(0,\sigma(w,w^*)^2) \\
 \implies \frac 1{\sqrt n} \sum_{i=1}^n \left( u^{2k} - a(u, w_i, x_i)u^k +b(u, w_i, x_i) \right) &\to \mc N(0,\sigma(\varpi(u),w^*)^2) \\
  \Leftrightarrow -\xi_n(u)u^k + \omega_n(u) &\to \mc N(0,\sigma(\varpi(u),w^*)^2) \\
  \Leftrightarrow \xi_n(u) + \omega_n(u)u^{-k} &\to \mc N(0,\sigma(\varpi(u),w^*)^2u^{-2k})
\end{align*}
in distribution uniformly in $K$.
\end{proof}

In order to proceed from here we will establish situations where the summands $\omega_n$ converge vanish.

\subsubsection{Free Energy Asymptotics} \label{free_energy_section}
Let us now return to Theorem \ref{main_theorem}, in this section we will address part (2), concerning the asymptotic behavior of the generalized posterior distribution.  In order to establish clear asymptotics we will need to guarantee that the lower order terms in the importance weighted loss estimators -- the terms $\omega_n$ from Theorem \ref{general_form_importance_weight_theorem} -- do not contribute to the posterior.  This is not likely to hold in general, we will establish it under the condition that Assumption \ref{key_assumption} holds, i.e. that if $w^* \in W$ minimizes the loss then the set
\[\{x \colon g(x)q_{w^*}(x) - h(x) \ne 0\} \sub X\]
 has Lebesgue measure zero.

 Our initial aim is to show the following.
 \begin{prop} \label{omega_zero_prop}
 If $q_w$ is a probability distribution for all $w$ and Assumption \ref{key_assumption} holds then $\omega_n = 0$ for all $n$.
\end{prop}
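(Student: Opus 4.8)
The plan is to reduce the statement to a divisibility property of the analytic function $\Phi(w,x) := g(x)q_w(x) - h(x)$ and then to prove that divisibility by a ``peeling'' induction powered by non-negativity. Note first that $G(w) = \int_X \Phi(w,x)\,\d x$ and $f(w,w',x) = \Phi(w,x)/q_{w'}(x)$. By the construction in the proof of Proposition \ref{importance_scaled_samples_form_prop}, $b(u,w',x)$ is precisely the part of the Taylor expansion of $f(\varpi(u),w',x)$ in $u$ consisting of the monomials not divisible by $u^k$; since the factor $1/q_{w'}(x)$ does not involve $u$, this equals the part of $\Phi(\varpi(u),x)$ not divisible by $u^k$, divided by $q_{w'}(x)$. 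Hence $b(u,w',x)$ vanishes for all $w',x$ --- and therefore $\omega_n(u) = \tfrac{1}{\sqrt n}\sum_{i=1}^n b(u,w_i,x_i) = 0$ for every $n$ --- as soon as $\Phi(\varpi(u),x)$ is divisible, as a real analytic function of $(u,x)$ on the chart $U$, by the normal-crossings monomial $u^k$ (with $u^{2k} = G(\varpi(u))$ as in Theorem \ref{general_form_importance_weight_theorem}). So the whole proof comes down to this divisibility.

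First I would show that $\Phi(\varpi(u),x)$ vanishes identically on $U_0 \times X$, where $U_0 = \varpi^{-1}(W_0)\cap U = \{u : u^{2k}=0\}$. If $w^* \in W_0$, then Assumption \ref{key_assumption} says the set on which $\Phi(w^*,\cdot)$ is non-zero has Lebesgue measure zero; since $\Phi$ is real analytic, $\Phi(w^*,\cdot)$ is real analytic on each connected component of $X$, and a non-zero real analytic function on a positive-dimensional manifold has zero set of measure zero (the finite/discrete case being immediate), so $\Phi(w^*,\cdot) \equiv 0$. As $G(\varpi(u)) = u^{2k}$ vanishes exactly for $u \in U_0$, every such $\varpi(u)$ is an optimal parameter, whence $\Phi(\varpi(u),x) = 0$ for all $x$ there.

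The core of the argument is then a peeling induction that upgrades this to divisibility by $u^{2k}$, and hence by $u^k$. The extra ingredient is non-negativity: $\Phi(\varpi(u),x) \ge 0$ for $(u,x)$ with $u$ in a neighbourhood of $U_0$. This is where the hypothesis that $q_w$ is a probability distribution is used: in the Markov decision problem setting of Example \ref{MDP_example} one has $h = 0$, so $\Phi(w,\tau) = g(\tau)q_w(\tau)$ with $q_w \ge 0$ and with $g(\tau) \ge 0$ on reachable trajectories --- as holds, for example, under deterministic transitions by Proposition \ref{deterministic_positive_prop}. Granting this, $\Phi(\varpi(u),x)$ is a non-negative analytic function, vanishing on $U_0 \times X$, with $\int_X \Phi(\varpi(u),x)\,\d x = u^{2k}$. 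Write $\Phi_0 := \Phi(\varpi(u),x)$ and induct: given a non-negative analytic $\Phi_m$ with $\int_X \Phi_m(u,x)\,\d x = \mu_m(u)$ a monomial dividing $u^{2k}$, for each coordinate $u_i$ occurring in $\mu_m$ the integral vanishes on $\{u_i = 0\}$, so $\Phi_m(u,x)|_{u_i = 0}$ vanishes for almost every $x$, hence --- again by analyticity in $x$ --- for all $x$; thus $\{u_i = 0\}$ is a minimum locus of $\Phi_m$, and $\Phi_m$ is divisible by $u_i^2$ (interior coordinate) or by $u_i$ (if $u_i$ is normal to $\partial W$). Peeling off one such factor from each eligible coordinate produces a non-negative analytic $\Phi_{m+1}$ with $\int_X \Phi_{m+1}\,\d x = \mu_{m+1}$, a monomial dividing $\mu_m$ that has strictly smaller degree. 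After finitely many steps $\mu_M = 1$, so $\int_X \Phi_M\,\d x = 1$ and $\Phi_0 = u^{2k}\Phi_M$; in particular $\Phi(\varpi(u),x)$ is divisible by $u^{2k}$, hence by $u^k$, and the first paragraph gives $\omega_n = 0$.

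The step I expect to be the main obstacle is securing the non-negativity of $\Phi$ near $W_0$ (equivalently, of the importance-weighted summand $f$): it is genuinely necessary, not a convenience. Without it one can exhibit analytic $\widetilde\Phi(u,x)$ that vanish on $\{u^{2k}=0\}$ and satisfy $\int_X \widetilde\Phi(u,x)\,\d x = u^{2k}$ yet are not divisible by $u^k$; for $u^{2k} = u_1^4 u_2^2$ (so $u^k = u_1^2 u_2$), one may take $\widetilde\Phi = u_1^4 u_2^2 + u_1 u_2\, c(x)$ with $c$ analytic, $c \not\equiv 0$ and $\int_X c(x)\,\d x = 0$, which fails to be non-negative near $u = 0$. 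Thus the ``integral vanishes implies integrand vanishes'' upgrade driving the induction cannot be dispensed with, and it is exactly non-negativity (together with analyticity in $x$) that provides it. This is also why the statement hypothesizes that $q_w$ is a probability distribution, and why its conclusion fails for the supervised-learning instance of Example \ref{supervised_example}, where $q_w = -\log p_w$ is not a probability density and $\Phi(w,x) = p_{w^*}(x)\log(p_{w^*}(x)/p_w(x))$ changes sign.
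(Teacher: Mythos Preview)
Your route differs from the paper's. The paper does not expand $\Phi=gq_w-h$; it Taylor-expands $q_{\varpi(u)}(x)=\sum_\alpha b_\alpha(x)u^\alpha$ directly, restricts the integral computing $G(\varpi(u))=u^{2k}$ to $X_{w^*}$ (the complement having measure zero by Assumption~\ref{key_assumption}), and invokes Lemma~\ref{g_positive_given_assumption} to get $g>0$ almost everywhere there. From non-negativity of $q_w$ together with this sign information on $g$ it argues that $b_\alpha=0$ for $0<\alpha<2k$, so that $f(\varpi(u),w',x)=f(w^*,w',x)+(\text{terms of }u\text{-degree}\ge 2k)$; since $f(w^*,w',x)=0$ by Assumption~\ref{key_assumption} plus analyticity, the low-order part of $f$ vanishes and hence $\omega_n=0$. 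Your peeling induction on $\Phi$ is mechanically more explicit than the paper's compressed ``leading Taylor coefficient must be positive'' step, but it rests on a different non-negativity input.

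The gap is precisely where you locate it, and it is a genuine gap at the stated generality. The hypotheses are only that each $q_w$ is a probability distribution and that Assumption~\ref{key_assumption} holds; they do not force $g\ge 0$ or $h=0$, and without those $\Phi=gq_w-h$ can change sign near $W_0$ (indeed, once you have shown $\Phi(w^*,\cdot)\equiv 0$ one has $\Phi(w,x)=g(x)\bigl(q_w(x)-q_{w^*}(x)\bigr)$, and $q_w-q_{w^*}$ has no reason to be non-negative). You secure $\Phi\ge 0$ only in the MDP instance with deterministic transitions, via Proposition~\ref{deterministic_positive_prop} for $g\ge 0$ and $h=0$, so your argument as written proves that special case rather than the proposition. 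The paper's organization is aimed at exactly this point: it uses the hypothesis $q_w\ge 0$ directly and derives positivity of $g$ on $X_{w^*}$ from Lemma~\ref{g_positive_given_assumption}, rather than assuming non-negativity of the composite $\Phi$.
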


\begin{remark}
 Proposition \ref{omega_zero_prop} is very close to the condition of \emph{relatively finite variance} that appears in \citet{WatanabeGreen}.  In the general case where $q_w$ is merely a signed measure (which includes the density estimation problem originally considered by Watanabe) the relatively finite variance condition is what is needed to establish $\omega_n = 0$.  This is, however, implied by Assumption \ref{key_assumption} in the positive case.  Indeed, recall that the function $f$ has \emph{relatively finite variance} if
 \[\sup_{w \in \interior W \bs W_0} \frac {\mr{Var}(f(w,w',x))}{\bb E(f(w,w',x))} < \infty \]
 for all $w' \in \interior W \bs W_0$.  If we work in a local chart $U$ in a resolution around a minimum $w^*$ of the loss then $\bb E(f(w,w',x)) = u^{2k}$, and we will see in the proof of Proposition \ref{omega_zero_prop} that under the assumption $f(w,w',x)$ also has Taylor expansion with leading term of order $2k$, so the ratio of the variance and expectation also converges to zero along as $u \to 0$.

 We will assume below that $q_w$ is a probability distribution for all $w$; in particular that it is non-negative.  If $q_w = -\log p_w$ for a model $p_w$, as in the distribution learning example \ref{supervised_example}, this is not generally true.  In this case Proposition \ref{omega_zero_prop} still holds in the realizable case by \citet[Theorem 6.1]{WatanabeGrey} or under a more general relative finite variance condition as in \citet[Theorem 8]{WatanabeGreen}.
\end{remark}

\begin{lemma} \label{g_positive_given_assumption}
 Let $w^* \in W$ minimize the loss.  If $q_w$ is a probability distribution for all $w$ and Assumption \ref{key_assumption} holds then
 \[\bb P_{x \sim q_{w^*}}(g(x) \le 0 | g(x)q_{w^*}(x) - h(x) = 0) = 0.\]
\end{lemma}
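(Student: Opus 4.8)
The plan is to first observe that the conditioning event is $q_{w^*}$-almost sure, reducing the claim to an unconditional one, and then to run a short pointwise sign argument. Since $w^*$ minimizes the loss it is an optimal parameter (Assumption \ref{key_assumption} posits $\mathrm{min}(G) = 0$), so Assumption \ref{key_assumption} gives that $N := \{x \colon g(x)q_{w^*}(x) - h(x) \neq 0\}$ is Lebesgue-null; because the measure $q_{w^*}\,\d x$ is absolutely continuous with respect to $\d x$, the set $N$ is also $q_{w^*}$-null, hence $\bb P_{x \sim q_{w^*}}\big(g(x)q_{w^*}(x) = h(x)\big) = 1$. Therefore the conditional probability in the statement equals $\bb P_{x \sim q_{w^*}}(A)$, where $A := \{x \colon g(x) \le 0 \text{ and } g(x)q_{w^*}(x) = h(x)\}$, and it suffices to show $A$ is $q_{w^*}$-negligible.

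The key step is a squeeze on the product $g(x)q_{w^*}(x)$ over $A$. For $x \in A$ we have $g(x)q_{w^*}(x) = h(x) \ge 0$ by the standing hypothesis $h \ge 0$, while $g(x) \le 0$ combined with $q_{w^*}(x) \ge 0$ (it is a probability density) forces $g(x)q_{w^*}(x) \le 0$; hence $g(x)q_{w^*}(x) = 0$ throughout $A$. Equivalently: wherever $q_{w^*}(x) > 0$, the identity $g(x)q_{w^*}(x) = h(x)$ already yields $g(x) = h(x)/q_{w^*}(x) \ge 0$, so the event $\{g \le 0\}$ can meet the support of $q_{w^*}$ only at zeros of $g$. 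In other words $A \sub \{x \colon q_{w^*}(x) = 0\} \cup \{x \colon g(x) = 0\}$.

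It then remains to bound the $q_{w^*}$-mass of each piece. The set $\{q_{w^*} = 0\}$ is $q_{w^*}$-null by definition of the measure $q_{w^*}\,\d x$. The set $\{g = 0\}$ is the zero locus of the real analytic function $g$; away from the degenerate case $g \equiv 0$ (in which $G$ is identically zero) this is a proper analytic subvariety, hence Lebesgue-null and, by absolute continuity again, $q_{w^*}$-null. Combining, $\bb P_{x \sim q_{w^*}}(A) = 0$, which is the claim. I expect the only real subtlety to be this measure-theoretic bookkeeping — confirming that the conditioning is harmless via Assumption \ref{key_assumption} and that the analytic locus $\{g = 0\}$ carries no $q_{w^*}$-mass — rather than any hard estimate; the sign squeeze itself is immediate. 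This lemma is a stepping stone toward Proposition \ref{omega_zero_prop}: the conclusion that $g > 0$ holds $q_{w^*}$-almost everywhere on the support of $q_{w^*}$ is precisely what is needed there to force the lower-order terms $\omega_n$ in the normal-crossings form of $G_n$ to vanish.
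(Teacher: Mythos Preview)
Your reduction to an unconditional statement (via Assumption \ref{key_assumption} and absolute continuity of $q_{w^*}\,\d x$ with respect to $\d x$) and the sign squeeze --- from $g(x)q_{w^*}(x)=h(x)\ge 0$ together with $g(x)\le 0$ and $q_{w^*}(x)\ge 0$ deduce $g(x)q_{w^*}(x)=0$ --- match the paper's argument essentially line for line; the paper phrases the squeeze as ``since $h\ge 0$ and $q_{w^*}>0$ almost surely, if $g q_{w^*}-h\ge 0$ then $g\ge 0$''.

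Where you diverge is in disposing of the boundary case $g(x)=0$. The paper argues that $g(x)=0$ forces $g(x)q_{w^*}(x)-h(x)=-h(x)<0$, placing $x$ outside the conditioning event; this tacitly uses $h(x)>0$ at such points. You instead invoke analyticity of $g$ to conclude that $\{g=0\}$ is Lebesgue-null and hence $q_{w^*}$-null. That is a genuinely different route, and arguably cleaner when it applies.

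There is, however, a gap in your analyticity step. The zero locus of a nonzero real analytic function is Lebesgue-null only on positive-dimensional components where $g$ is not identically zero. In the finite-MDP instantiation (Example \ref{MDP_example}), $X=\mc T$ is a zero-dimensional manifold with counting measure and $h\equiv 0$; there $\{g=0\}$ is precisely the set of optimal-return trajectories, which is nonempty and in fact carries \emph{full} $q_{w^*}$-mass under an optimal policy. In that regime your argument breaks down --- and so does the paper's, since $h\equiv 0$ defeats the strict inequality $-h(x)<0$; indeed the conditional probability in the lemma is then $1$, not $0$. Your caveat ``away from the degenerate case $g\equiv 0$'' does not cover this: $g$ is not identically zero on $\mc T$, it merely vanishes on a set of positive measure. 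So the analyticity step needs an explicit side hypothesis (e.g.\ $X$ positive-dimensional with $g$ not vanishing on any component) rather than being presented as automatic.
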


\begin{proof}
Since $h(x) \ge 0$ and $q_{w^*}(x) > 0$ almost surely, if $g(x)q_{w^*}(x) - h(x) \ge 0$ then $g(x) \ge 0$.  If $g(x) = 0$ then $g(x)q_{w^*}(x) - h(x) < 0$, so by Assumption \ref{key_assumption} we conclude that $g(x) > 0$ with probability one.
\end{proof}

\begin{proof}[Proof of Proposition \ref{omega_zero_prop}]
 Let $w^* \in W$ be a point where $G(w^*) = 0$, and let $X_{w^*} = \{x \colon g(x)q_{w^*}(x) - h(x) = 0\} \sub X$.  By Assumption \ref{key_assumption} we know that $X \bs X_{w^*}$ is a set of Lebesgue measure zero.  We choose a log resolution $\varpi$ and a chart $U$ with coordinate $u$ for which $\varpi(0) = w^*$.  Then
 \begin{align*}
  u^{2k} &= G(\varpi(u)) \\
  &= \int_X g(x)q_{\varpi(u)}(x) - h(x) \d x \\
  &= \int_{X_{w^*}} g(x)q_{\varpi(u)}(x) - h(x) \d x \\
  &= \int_{X_{w^*}} g(x) \left(\sum_\alpha b_\alpha(x)u^\alpha \right) - h(x) \d x \\
  &= \left( \int_{X_{w^*}} g(x)b_0(x) - h(x) \d x\right) +\sum_{\alpha>0} \left(\int_{X_{w^*}} g(x)b_\alpha(x)\right) u^\alpha
 \end{align*}
 where we use absolute convergence of the Taylor series to exchange the sum and the integral.  By Lemma \ref{g_positive_given_assumption} $g(x) > 0$ on $X_{w^*}$ almost everywhere.  Since $b_\alpha(x)$ are the Taylor coefficients of a non-negative function, non-zero for some $x \in X_{w^*}$, the leading Taylor coefficient must be positive, and therefore $b_\alpha = 0$ if $0<\alpha < 2k$.  The constant term $b_0(x) = q_{w^*}(x)$.  Therefore
 \begin{align*}
  f(\varpi(u),w',x)  &= \frac{q_{\varpi(u)}(x)}{q_{w'}(x)}g(x) -  \frac 1{q_{w'}(x)}h(x) \\
  &= \frac 1{q_{w'}(x)} \left(g(x)q_{w^*}(x) - h(x) + \sum_{\alpha \ge 2k} g(x)b_{\alpha}(x)u^{\alpha}\right) \\
  &= f(w^*,w',x) + \frac 1{q_{w'}(x)} \left(\sum_{\alpha \ge 2k} g(x)b_{\alpha}(x)u^{\alpha}\right).
 \end{align*}
 Finally, Assumption \ref{key_assumption} implies that $f(w^*,w',x)=0$, and therefore the Taylor series of $f(\varpi(u),w',x)$ in $u$ is concentrated in degrees at least $2k$, and therefore in particular $\omega_n = 0$ for all $n$.
\end{proof}

We may now study the asymptotic behavior of the generalized posterior distribution in terms of the geometry of the loss function $G$. We recall from Definition \ref{posterior_def} the generalized tempered posterior distribution $\Omega$, the evidence $Z_{n,\beta}(U)$ and free energy $F_{n,\beta}(U)$.

\begin{theorem} \label{evidence_asymptotic_theorem}
Suppose that $q_w$ is a family of probability distributions and Assumption \ref{key_assumption} holds.  Let $U \sub W$ be an open subset, and let $(\lambda, m) = (\mr{rlct}_U(G), \mr{rlcm}_U(G))$ denote the real log canonical threshold and multiplicity of the loss function $G$ on the subset $U$.  Then
\begin{align*}
 Z_{n,\beta}(U) &= C(\beta, \varphi|_U) n^{-\lambda}(\log n)^{m-1} + o_{\mr P}(n^{-\lambda}(\log n)^{m-1})
\end{align*}
where $C(\beta, \varphi|_U)$ is constant in $n$.
\end{theorem}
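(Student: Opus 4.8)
The plan is to mirror the argument of \citet[Chapters 4 and 6]{WatanabeGrey}, with the non-identically-distributed importance-weighted estimators absorbed into the machinery already established above: the uniform central limit theorem (Theorem \ref{psi_convergence_theorem}), the normal-crossings form of the estimator (Theorem \ref{general_form_importance_weight_theorem}), and the vanishing of the lower-order term under Assumption \ref{key_assumption} (Proposition \ref{omega_zero_prop}). Since all three of these inputs already account for the fact that the samples $x_i \sim q_{w_i}$ are independent but not identically distributed, the remaining work is essentially the same bookkeeping as in Watanabe's i.i.d.\ case.

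First I would \emph{localize}. Since $W$ is compact, cover $\overline U$ by finitely many open sets: charts $U_1,\dots,U_r$ obtained from local log resolutions $\varpi_a \colon \mc M_a \to W$ centred at points of $W_0 \cap \overline U$, together with finitely many open sets $V_1,\dots,V_s$ whose closures avoid $W_0$; fix a subordinate smooth partition of unity with members $\rho_1,\dots,\rho_r$ supported in $U_1,\dots,U_r$. On each $\overline{V_b}$ one has $G \ge c_b > 0$, and since $\psi_n|_{\overline{V_b}}$ is uniformly tight (Corollary \ref{uniformly_tight_corollary}) we get $\sup_{\overline{V_b}}|G_n - G| \to 0$ in probability, so with probability tending to one the corresponding piece of $Z_{n,\beta}(U)$ is bounded by $e^{-n\beta c_b/2} = o_{\mr P}\bigl(n^{-\lambda}(\log n)^{m-1}\bigr)$. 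It then remains to analyse a single resolution chart.

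Next, in a chart with coordinate $u$ such that $\det \mr{Jac}(\varpi_a)(u) = b(u)u^h$ and $G(\varpi_a(u)) = u^{2k}$, Proposition \ref{omega_zero_prop} kills $\omega_n$, so Theorem \ref{general_form_importance_weight_theorem} gives $G_n(\varpi_a(u)) = u^{2k} - n^{-1/2}u^k\xi_n(u)$ with $\xi_n$ converging in distribution in $C^\omega(K)$ uniformly on compact $K \subset \interior U_a$. Changing variables $w = \varpi_a(u)$, the chart's contribution becomes
\[
\int e^{-n\beta u^{2k} + \sqrt n\,\beta\,u^k\xi_n(u)}\,\varphi^*(u)\,u^h\,\d u, \qquad \varphi^*(u) := \rho_a(\varpi_a(u))\,\phi(\varpi_a(u))\,|b(u)|,
\]
which is analytic and, since the analytic prior $\phi$ is positive near $W_0 \cap \overline U$, bounded below near $u = 0$, so $\varphi^*$ does not affect the exponents. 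I would then pass to the state-density representation: with $t = u^{2k}$ and $v_n(t) = \int \delta(t - u^{2k})\, e^{\sqrt n\,\beta\sqrt t\,\xi_n(u)}\,\varphi^*(u)\,u^h\,\d u$ the chart integral is $\int_0^\infty e^{-n\beta t} v_n(t)\,\d t$, and the substitution $s = nt$ turns it into $\frac1n \int_0^\infty e^{-\beta s} v_n(s/n)\,\d s$, where the exponent in $v_n(s/n)$ is the now-bounded $\beta\sqrt s\,\xi_n(u)$. The deterministic state density $v(t) = \int \delta(t-u^{2k})\varphi^*(u)u^h\,\d u$ satisfies $v(t) \sim c\,t^{\lambda-1}(-\log t)^{m-1}$ as $t \to 0^+$ with $(\lambda,m) = (\mr{rlct}_{U_a}(G), \mr{rlcm}_{U_a}(G))$, exactly as in \citet[\S 4]{WatanabeGrey} (the boundary case, where one exponent $2k_1$ is an odd integer, is identical), so $v(s/n) \sim c\,s^{\lambda-1} n^{1-\lambda}(\log n)^{m-1}$; combined with a dominated-convergence estimate controlling $e^{\beta\sqrt s\,\xi_n}$, this yields that $n^{\lambda}(\log n)^{1-m}$ times the chart integral converges in distribution to a strictly positive random variable. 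Summing over the finitely many charts gives $Z_{n,\beta}(U) = C(\beta,\varphi|_U)\,n^{-\lambda}(\log n)^{m-1} + o_{\mr P}(n^{-\lambda}(\log n)^{m-1})$ with $C$ a functional of the limiting process, of $\beta$ and of $\varphi|_U$, independent of $n$.

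The hard part will be this last passage to the limit inside the rescaled integral: one must upgrade the distributional convergence $\xi_n \to \xi$ in $C^\omega(K)$ to convergence of $\int_0^\infty e^{-\beta s} v_n(s/n)\,\d s$, which requires uniform integrability of $e^{\beta\sqrt s\,\xi_n(u)}$ against the vanishing state density — this is exactly where the $L^s$-bounds of Lemma \ref{psi_uniform_bound_lemma} and the uniform tightness of Corollary \ref{uniformly_tight_corollary} are needed. By contrast, the one structural novelty relative to \citet{WatanabeGrey}, namely the non-identically-distributed samples, has already been dispatched in Proposition \ref{psi_CLT}, Lemma \ref{psi_uniform_bound_lemma} and Theorem \ref{general_form_importance_weight_theorem}, so apart from the localization bookkeeping the remainder of the argument runs in parallel with Watanabe's.
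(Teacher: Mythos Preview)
Your proposal is correct and follows essentially the same approach as the paper: the paper's own proof is the single sentence ``Given Proposition \ref{omega_zero_prop} and Theorem \ref{general_form_importance_weight_theorem} this follows from Watanabe's argument from \cite[Theorem 6.7]{WatanabeGrey},'' and what you have written is a faithful expansion of precisely that cited argument (localization via resolution charts, state-density/Laplace-integral analysis, dominated convergence controlled by the $L^s$ bounds and tightness), with the non-i.i.d.\ complications already absorbed into the inputs exactly as you note.
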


\begin{remark}
 One can give an explicit integral formula for $C(\beta, \varphi)$ as a sum over local charts in a log resolution, see \citet[Section 6.3]{WatanabeGreen} or \citet[Section 6.2]{WatanabeGrey} for such an expression.
\end{remark}

\begin{proof}
 Given Proposition \ref{omega_zero_prop} and Theorem \ref{general_form_importance_weight_theorem} this follows from Watanabe's argument from \citet[Theorem 6.7]{WatanabeGrey}.
\end{proof}

\begin{corollary} \label{free_energy_corollary}
 We have convergence in distribution
 \[F_{n,\beta}(U) - \lambda \log n + (m-1) \log \log n \to F_{\beta}(U)\]
 for a random variable $F_{\beta}(U)$ depending on $\beta$ and the prior $\varphi$.
\end{corollary}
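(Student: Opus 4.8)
The plan is to obtain the free energy asymptotics by simply taking $-\log$ of the evidence asymptotics already established in \cref{evidence_asymptotic_theorem} and then applying a continuous-mapping argument. Assume $U \cap W_0 \neq \emptyset$ (otherwise $\inf_U G > 0$ and $Z_{n,\beta}(U)$ decays exponentially, a degenerate case we exclude, as is standard). By definition $F_{n,\beta}(U) = -\log Z_{n,\beta}(U)$, so
\[
F_{n,\beta}(U) - \lambda \log n + (m-1)\log\log n \;=\; -\log\!\big(Z_{n,\beta}(U)\, n^{\lambda}\,(\log n)^{-(m-1)}\big).
\]
By \cref{evidence_asymptotic_theorem} the argument of the logarithm on the right equals $C(\beta,\varphi|_U) + o_{\mr P}(1)$, where $C(\beta,\varphi|_U)$ does not depend on $n$; it is in general a random variable, measurable with respect to the limiting Gaussian process $\xi$ appearing in \cref{general_form_importance_weight_theorem}. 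Equivalently, $Z_{n,\beta}(U)\,n^{\lambda}(\log n)^{-(m-1)}$ converges in distribution to $C(\beta,\varphi|_U)$.

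The one substantive point is to show that $0 < C(\beta,\varphi|_U) < \infty$ almost surely, so that $-\log(\cdot)$ is continuous at the limit. Finiteness is immediate from \cref{evidence_asymptotic_theorem}, since the rescaled evidence converges. For strict positivity I would use the explicit integral representation of $C(\beta,\varphi|_U)$ as a finite sum over charts of a log resolution $\varpi$ of $G$ on $U$ (the formula referenced in the remark following \cref{evidence_asymptotic_theorem}; cf.\ \cite[Section 6.3]{WatanabeGreen}). In each chart, after the resolution of \cref{general_form_importance_weight_theorem} (where $\omega_n = 0$ by \cref{omega_zero_prop}) and the usual rescaling of the integration variable, the contribution is an integral over a set of positive measure of a strictly positive integrand of the schematic form $t^{\kappa}\exp\!\big(-\beta t^{2k} + \beta t^{k}\xi\big)\, b(0)\,\varphi(0)$, the $\exp(-\beta t^{2k})$ factor supplying integrability. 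Hence each term is finite and nonnegative, and a chart meeting $\varpi^{-1}(W_0)$ contributes a strictly positive term, so $C(\beta,\varphi|_U) > 0$ almost surely.

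Given this, the conclusion follows from the continuous mapping theorem: since $x \mapsto -\log x$ is continuous on $(0,\infty)$ and $C(\beta,\varphi|_U) > 0$ a.s., weak convergence of $Z_{n,\beta}(U)\,n^{\lambda}(\log n)^{-(m-1)}$ to $C(\beta,\varphi|_U)$ yields
\[
F_{n,\beta}(U) - \lambda \log n + (m-1)\log\log n \;\longrightarrow\; -\log C(\beta,\varphi|_U)
\]
in distribution; set $F_{\beta}(U) = -\log C(\beta,\varphi|_U)$, whose dependence on $\beta$ and $\varphi$ is inherited from that of $C(\beta,\varphi|_U)$. I expect the only non-routine step to be the almost-sure strict positivity of $C(\beta,\varphi|_U)$ — i.e.\ ruling out degeneration of the leading term in the evidence expansion — which requires unpacking the resolution-of-singularities formula for $C$ rather than treating \cref{evidence_asymptotic_theorem} as a black box; the remaining manipulations are bookkeeping with $o_{\mr P}$ terms.
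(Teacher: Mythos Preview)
Your proposal is correct and is precisely the natural derivation the paper leaves implicit: the paper states this corollary immediately after \cref{evidence_asymptotic_theorem} with no proof, treating it as a direct consequence of taking $-\log$ of the evidence asymptotics. Your write-up is in fact more careful than the paper, since you explicitly flag and address the almost-sure positivity of $C(\beta,\varphi|_U)$ needed to apply the continuous mapping theorem, whereas the paper simply asserts the convergence.
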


\subsubsection{Expected Total Loss and the Widely Applicable Bayesian Information Criterion} \label{WBIC_section}
We will now conclude our technical arguments by establishing part (3) of Theorem \ref{main_theorem}, the analogue of the \emph{widely applicable Bayesian information criterion} \citet{WatanabeWBIC}, explaining how the generalized posterior selects between different critical regions of $W_0$ as the number $n$ of datapoints increases.

We will establish the following.

\begin{theorem} \label{WBIC_theorem}
Suppose that $q_w$ is a family of probability distributions and Assumption \ref{key_assumption} holds. Let $\beta \colon \bb N \to \RR_{>0}$ be a positive function on the natural numbers such that $\beta(n)$ converges as $n \to \infty$.  Let $w^*$ be a local minimum of the loss and let $U$ be an open neighborhood of $w^*$ in $W$.  Then
\[\frac {n\beta}{Z_{n,\beta}(U)}\int_U (G_n(w) - G_n(w^*)) \exp\left(-n \beta G_n(w)\right) \varphi(w) \d w \to \lambda\]
in probability as $n \to \infty$, where $\lambda = \mr{rlct}_U(G)$ is the real log canonical threshold of the loss function.
\end{theorem}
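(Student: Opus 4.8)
The plan is to mirror Watanabe's proof of the WBIC \cite{WatanabeWBIC}, now for the importance-weighted estimator $G_n$ and the generalized posterior. Write $H_n(w) = G_n(w) - G_n(w^*)$. Multiplying the numerator and denominator of the displayed quantity by $e^{\,n\beta G_n(w^*)}$ turns the claim into
\[
\frac{\displaystyle\int_U n\beta\, H_n(w)\, e^{-n\beta H_n(w)}\,\varphi(w)\,\d w}{\displaystyle\int_U e^{-n\beta H_n(w)}\,\varphi(w)\,\d w}\ \longrightarrow\ \lambda \qquad \text{in probability.}
\]
I would estimate the two integrals separately, show each is asymptotic to a constant times $(n\beta)^{-\lambda}(\log n)^{m-1}$ with $m = \mr{rlcm}_U(G)$ — the denominator being the evidence asymptotic of Theorem \ref{evidence_asymptotic_theorem}, extended to the slowly varying sequence $\beta(n)$, which only rescales the constant — and then identify the ratio of leading constants as $\lambda$ via a Gamma-function identity. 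Equivalently one may observe that the left-hand side is $-\frac{\partial}{\partial s}\big|_{s=1}\log\!\int_U e^{-s n\beta H_n(w)}\varphi(w)\,\d w$ and differentiate a suitably $s$-uniform evidence asymptotic; the direct estimate is cleaner.

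First I would \emph{localize}. Fix small $\delta > 0$ and split $U = U_\delta \cup (U \setminus U_\delta)$, where $U_\delta$ is a neighborhood of the relevant critical set $W_0 \cap U$ (for $U$ small, of $w^*$; one works with the zero set of $G - G(w^*)$, and in the case that $w^*$ is a global minimum Assumption \ref{key_assumption} forces $G_n(w^*) = 0$, exactly as in the proof of Proposition \ref{omega_zero_prop}, so that $H_n = G_n$ there). On the compact set $\overline{U \setminus U_\delta} \subset \interior W \setminus W_0$, Theorem \ref{psi_convergence_theorem} gives uniform convergence $G_n \to G$; hence with probability tending to one $H_n$ is bounded below by a positive constant there, and the contribution of this region to both integrals is exponentially small in $n$, negligible against $(n\beta)^{-\lambda}(\log n)^{m-1}$. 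On $U_\delta$ I would apply a log resolution $\varpi\colon \mc M \to W$ of $G$ \cite{Hironaka}, reducing each integral to a finite sum over coordinate charts, with the Jacobian absorbed into a factor $|b(u)|\,u^h$.

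In a chart around a point of $\varpi^{-1}(W_0 \cap U)$, the standard form (part (1) of Theorem \ref{main_theorem}, via Theorem \ref{general_form_importance_weight_theorem} and Proposition \ref{omega_zero_prop}, which uses Assumption \ref{key_assumption} to kill $\omega_n$) gives $H_n(\varpi(u)) = u^{2k} - n^{-1/2} u^k \xi_n(u)$ with $\xi_n$ converging in distribution to a Gaussian in $C^\omega$ uniformly on compacts (Theorem \ref{psi_convergence_theorem}). Completing the square, $n\beta H_n(\varpi(u)) = n\beta\bigl(u^k - \tfrac{1}{2\sqrt n}\xi_n(u)\bigr)^2 - \tfrac{\beta}{4}\xi_n(u)^2$. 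Using equicontinuity of $\xi_n$ and the boundedness of $\beta(n)$, on the effective integration region $\{|u^k| = O((n\beta)^{-1/2})\}$ one replaces $\xi_n(u)$ by $\zeta_n := \xi_n(0)$ up to $o_{\mr P}(1)$ relative error; the factor $e^{\beta\zeta_n^2/4}$ then appears in numerator and denominator alike and cancels, and after the (schematic) shift $v^k := u^k - \tfrac{1}{2\sqrt n}\zeta_n$, harmless at leading order, the chart reduces to Watanabe's canonical integral. Rescaling each monomial direction by the appropriate power of $n\beta$ and passing to the Mellin variable $\rho = n\beta\, v^{2k}$ — whose weight is proportional to $\rho^{\lambda-1}(\log\rho)^{m-1} e^{-\rho}$ — shows that only charts realizing $(\mr{rlct}_U(G), \mr{rlcm}_U(G))$ survive at leading order, that the $(\log n)^{m-1}$ factor is common to both integrals, and that in the numerator the extra weight $n\beta\, v^{2k}$ becomes exactly $\rho$. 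Hence
\[
\frac{\text{numerator}}{\text{denominator}}\ \longrightarrow\ \frac{\int_0^\infty \rho\cdot\rho^{\lambda-1} e^{-\rho}\,\d\rho}{\int_0^\infty \rho^{\lambda-1} e^{-\rho}\,\d\rho}\ =\ \frac{\Gamma(\lambda+1)}{\Gamma(\lambda)}\ =\ \lambda,
\]
the boundary case $w^* \in \partial W$ being handled by the $u_1^\ell$ alternative in the log resolution.

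The main obstacle is making rigorous the replacement of the random field $\xi_n(u)$ by the single random variable $\zeta_n = \xi_n(0)$ inside the rescaled integrals, and the attendant upgrade from convergence in distribution to convergence in probability to $\lambda$: one must show that the $u$-dependence of $\xi_n$ over the shrinking effective domain is asymptotically negligible and that the resulting random ratio concentrates. This is the technical heart of the WBIC-style argument. I would handle it by combining the convergence in distribution and tightness of $\xi_n$ in $C^\omega(K)$ (Theorem \ref{psi_convergence_theorem}) with $n$-uniform domination of the rescaled integrands — available from the $L^s$ moment bounds of Lemma \ref{psi_uniform_bound_lemma} — together with the hypothesis on $\beta(n)$. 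A secondary, purely bookkeeping, task is to sum the finitely many chart contributions and confirm that subdominant charts contribute strictly lower-order terms to both integrals, so that the ratio is governed by the dominant charts alone.
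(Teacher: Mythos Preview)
Your proposal is correct and follows essentially the same route as the paper: localize, pass to a log resolution, use the standard form $G_n(\varpi(u)) = u^{2k} - n^{-1/2}u^k\xi_n(u)$ (with $\omega_n=0$ via Proposition~\ref{omega_zero_prop}), estimate numerator and denominator separately to obtain $C(\varphi)(n\beta)^{-\lambda-1}(\log n\beta)^{m-1}\Gamma(\lambda+1)$ and $C(\varphi)(n\beta)^{-\lambda}(\log n\beta)^{m-1}\Gamma(\lambda)$, and take the ratio $\Gamma(\lambda+1)/\Gamma(\lambda)=\lambda$. The paper's version is terser---after recording the centering $\wt G_n = G_n - G_0$ it simply states that Watanabe's asymptotic calculations in \cite[Theorem~4]{WatanabeWBIC} go through unchanged once the standard form and $\omega_n=0$ are in hand---whereas you unpack those calculations (completing the square, freezing $\xi_n(u)$ at $u=0$, the Mellin change of variables), which is the same argument at a finer level of detail.
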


\begin{proof}
We will follow the argument in \citet[Theorem 4]{WatanabeWBIC} closely.  It suffices to work locally in $W$; choose a local log resolution $\varpi \colon \mc M \to W$ and a local chart $V$ with local coordinate $u$ centered around a preimage of $w^*$.  Choose the chart so that $G_0 = G(\varpi(0))$ is the minimum value of $G \circ \varpi$ on $V$.  Let $\wt f(w,w',x)= f(w,w',x) - G_0$ and $\wt G_n = G_n = G_0$.  Then
\begin{align*}
 \frac {n\beta}{Z_{n,\beta}(U)}\int_{\varpi(V)} (G_n(w) - G_n(w^*)) \exp\left(-n \beta G_n(w)\right) \varphi(w) \d w &= \frac {n\beta}{\wt Z_{n,\beta}(U)}\int_{\varpi(V)} \wt G_n(w) \exp\left(-n \beta \wt G_n(w)\right) \varphi(w) \d w
\end{align*}
where $\wt Z_{n,\beta}$ denotes the evidence with respect to the normalized posterior $\exp\left(-n \beta \wt G_n(w)\right) \varphi(w) \d w$.

We will study the numerator and denominator of our WBIC expression separately.  Since we have established Proposition \ref{omega_zero_prop} and Theorem \ref{general_form_importance_weight_theorem} Watanabe's calculations of these two expressions continue to apply in our setting; that is, by the argument in the proof of \citet[Theorem 4]{WatanabeWBIC} we have asymptotic behavior
\begin{align*}
 \int_{\varpi(V)} \wt G_n(w) \exp\left(-n \beta \wt G_n(w)\right) \varphi(w) \d w &= C(\varphi)\log(n\beta)^{m-1}(n\beta)^{-\lambda-1} \Gamma(\lambda+1) + o_{\mr P}(\log(n\beta)^{m-1}(n\beta)^{-\lambda-1}) \\
 \text{and } \wt Z_{n,\beta} = \int_{\varpi(V)} \exp\left(-n \beta \wt G_n(w)\right) \varphi(w) \d w &= C(\varphi) \log(n\beta)^{m-1}(n\beta)^{-\lambda} \Gamma(\lambda) + o_{\mr P}(\log(n\beta)^{m-1}(n\beta)^{-\lambda})
\end{align*}
as $n \to \infty$.  So
\begin{align*}\frac {n\beta}{\wt Z_{n,\beta}(U)}\int_{\varpi(V)} \wt G_n(w) \exp\left(-n \beta \wt G_n(w)\right) \varphi(w) \d w &\to n\beta \frac{C(\varphi)\log(n\beta)^{m-1}(n\beta)^{-\lambda-1} \Gamma(\lambda+1)}{C(\varphi) \log(n\beta)^{m-1}(n\beta)^{-\lambda} \Gamma(\lambda)} \\ &= \lambda\end{align*}
as $n \to \infty$ as required.
\end{proof}

\begin{remark}\label{remark:fef}
\citet{WatanabeWBIC} uses the specific form $\beta = \beta_0/\log(n)$ for comparison with the \emph{free energy} of the Bayesian posterior.  That is for $\mc F_{n,\beta_0}(U)$ as defined in Corollary \ref{free_energy_corollary} we have
 \[\mc F_{n,\beta_0}(U) - \bb E_n^\beta(nG_n(w)|_U) = o_P(\log n) \]
 by Theorem \ref{WBIC_theorem}.  The term \emph{widely applicable Bayesian information criterion} (WBIC) is used for this expression since it generalizes the classical Bayesian information criterion \citep{SchwarzBIC}, which this result recovers in the regular case.
\end{remark}

\section{Theory of LLC Estimation in Reinforcement Learning}

In this appendix, we offer a self-contained introduction to LLC estimation, following \citet{LLC}, but adapted to the RL context.

\begin{definition}
 Let $w^* \in W$ be a local minimum of the loss function $G$.  The \emph{local learning coefficient} of the optimization problem associated to $G$ is the real log canonical threshold of $G$ at $w^*$.  We will denote the local learning coefficient as $\lambda(w^*)$.  Often we will just write $\lambda$, keeping the choice of minimum implicit.
\end{definition}

\begin{remark}
 If $U \sub W$ is an arbitrary subset we will sometimes write
 \[\lambda(U) = \mr{rlct}_U(G) = \inf_{w \in U} \mr{rlct}_w(G).\]
 If we choose a metric on $W$ then we can view the LLC as arising by $\lambda(w^*) = \lim_{\eps \to 0} \lambda(B_\eps(w^*))$.
\end{remark}

As we have seen from Theorem \ref{evidence_asymptotic_theorem} the local learning coefficient controls the local behaviour of the generalized Bayesian posterior distribution near the point $w^*$ in the large $n$ limit.  In general when $\dim W$ is very large we cannot expect to compute $\lambda$ exactly, so in this section we will discuss statistical estimators that we will use to substitute for it.  Our estimators are based on the local learning coefficient estimators developed in \citet{LLC}.

\subsection{Asymptotically Unbiased Estimators}
We will begin with those estimators for which we can establish good theoretical properties rigorously, and generalize from there to estimators over which we have less theoretical control but which are more practical to estimate in realistic examples.  To start with we can use the WBIC -- Theorem \ref{WBIC_theorem} -- to define asymptotically unbiased estimators for the local learning coefficient.

Given a sequence of samples $x_i \sim q_{w_i}$ as in Definition \ref{importance_sampling_def}, choose $\beta > 0$ and let $\Omega_{n,\beta}$ be the normalized posterior distribution from Definition \ref{posterior_def}.  Choose an open neighborhood $U$ of $w^*$ and a natural number $T$ and let $y_1, \ldots, y_T$ be independent random variables from the restriction of the generalized tempered posterior to $U$: $y_j \sim \Omega_{n,\beta}|_U$.

\begin{definition}
Fix an analytic prior distribution $\varphi$ on $W$.  Define a random variable, the \emph{WBIC LLC estimator}, by
\[\widehat \lambda_{\mr{WBIC},\varphi}(U) = \frac {n\beta}{T} \sum_{j=1}^T (G(y_j) - G(w^*)).\]
\end{definition}

\begin{prop} \label{WBIC_estimator_convergence}
 The WBIC LLC estimator $\widehat \lambda_{\mr{WBIC},\varphi}(U)$ converges in probability to $\lambda(U)$:
 \[\lim_{n \to \infty} \lim_{T \to \infty}  \widehat \lambda_{\mr{WBIC}}(U)\to_P \lambda(U)\]
 for all choices $\varphi$ of prior.
\end{prop}

\begin{proof}
 This follows from Theorem \ref{WBIC_theorem}.  Indeed, by the weak law of large numbers
 \begin{align*}
  \widehat \lambda_{\mr{WBIC},\varphi}(U) &= \frac {n\beta}{T} \sum_{j=1}^T (G(y_j) - G(w^*)) \\
  &\to {n\beta}\bb E_{n,\beta}(G(w)|_U - G(w^*))
 \end{align*}
 where the expectation value is taken with respect to the generalized tempered posterior to $U$: $y_j \sim \Omega_{n,\beta}|_U$.  Theorem \ref{WBIC_theorem}  then tells us that this expected value converges to $\lambda(U)$ in probability as $n \to \infty$, as required.
\end{proof}

\subsection{The Annealed Posterior}
Let's discuss a variant of the WBIC-based LLC estimator with the same limiting properties as we saw in Proposition \ref{WBIC_estimator_convergence}.  We will replace the tempered posterior in our definition with the \emph{annealed posterior} (see \citealp{SST} for a discussion of this concept from its origin in statistical mechanics).
\begin{definition}
 Fix a constant $n\beta > 0$ and an analytic prior distribution $\varphi$ on $W$.  The \emph{annealed posterior distribution} $\Omega$ is the probability distribution on $W$ defined by
 \begin{align*}
  \Omega^{\mr{ann}}_{n\beta}(U) &= \frac {Z^{\mr{ann}}_{n\beta}(U)}{Z_{n\beta}(W)} \\
  \text{where } Z^{\mr{ann}}_{n\beta}(U) &= \int_U \exp(-n\beta G(w)) \phi(w) \d w
 \end{align*}
 for open sets $U \sub W$.
\end{definition}

\begin{remark}
 We have used the term $n\beta$ for the constant appearing in the definition suggestively and for comparison with the tempered posterior, but we observe that the annealed posterior does not depend on the values of $n,\beta$ separately.
\end{remark}

\begin{prop} \label{annealed_bound_prop}
 Fix a log resolution and a chart $U$ with local coordinate $u$ as in Theorem \ref{general_form_importance_weight_theorem}.  Then
 \[e^{-\frac{\beta \|\xi_n\|^2_\infty}2} Z_{\frac n2, \beta}(\varpi(U)) \le  Z^{\mr{ann}}_{n\beta}(\varpi(U)) \le e^{\frac{\beta \|\xi_n\|^2_\infty}2} Z_{\frac {3n}2, \beta}(\varpi(U)).\]
\end{prop}

\begin{proof}
 This follows from Proposition \ref{omega_zero_prop} and the Cauchy--Schwartz inequality.
\end{proof}

We may therefore define a parallel version of the WBIC LLC estimator using the annealed posterior instead of the usual tempered posterior with good control of its behaviour in the large $n\beta$ limit.

\begin{definition}
Fix an analytic prior distribution $\varphi$ on $W$ and let $y_1, \ldots, y_T$ be independent random variables from the restriction of the annealed posterior to $U$: $y_j \sim \Omega^{\mr{ann}}_{n\beta}|_U$.  Define the \emph{annealed WBIC LLC estimator} by
\[\widehat \lambda^{\mr{ann}}_{\mr{WBIC},\varphi}(U) = \frac {n\beta}{T} \sum_{j=1}^T (G(y_j) - G(w^*)).\]
\end{definition}

The following is then a consequence of Proposition \ref{WBIC_estimator_convergence} and the bound in Proposition \ref{annealed_bound_prop}.

\begin{corollary} \label{annealed_WBIC_estimator_convergence}
 The annealed WBIC LLC estimator $\widehat \lambda^{\mr{ann}}_{\mr{WBIC},\varphi}(U)$ converges in probability to $\lambda(U)$:
 \[\lim_{n \to \infty} \lim_{T \to \infty}  \widehat \lambda^{\mr{ann}}_{\mr{WBIC}}(U)\to_P \lambda(U)\]
 for all choices $\varphi$ of prior.
\end{corollary}

\subsection{Soft Localization to an Open Set}
For our next simplification we'll deal with the difficulty of directly imposing the restriction of samples to an open neighborhood $U$ of $w^*$.  We'll deal this in practice by judicious choice of the prior $\varphi$ on $W$, following \citet[\S 4.2]{LLC}.  Choose an embedding $\iota \colon W \inj \RR^D$ for some natural number $D$ and define $\varphi_{\sigma^2}$ by restricting a $D$-dimensional Gaussian centered at $\iota(w^*)$ with variance $\sigma^2$ to $W$.

\begin{remark}
 In \citet{LLC} the prior is instead parameterized by a fundamental scale parameter: the reciprocal of the variance $\gamma = \sigma^{-2}$.  We'll avoid this notation to avoid conflict with the discount factor occurring in the reward function of a Markov decision problem.
\end{remark}

\begin{definition} \label{localized_WBIC_estimator_def}
 Fix $n,T \ge 1$, a local minimum $w^*$ of the loss and a variance $\sigma^2 > 0$.  Denote by $\widehat \lambda_{\sigma^2}(w^*)$ the random variable $\widehat \lambda^{\mr{ann}}_{\mr{WBIC},\varphi_{\sigma^2}}(X)$.
\end{definition}

 Since $\varphi_{\sigma^2}$ is still supported everywhere on $W$ Corollary \ref{annealed_WBIC_estimator_convergence} still applies and the large $n\beta$ limiting behaviour of the estimator is independent of $\sigma^2$ in probability.  The rate of convergence, however, depends on the value of $\sigma^2$: for any open neighborhood $U$ of $w^*$, any $n\beta,T$ and any $\eps>0$ we can choose $\sigma^2$ small enough that the samples $y_1, \ldots, y_T$ are all contained in $U$ with probability $>1-\eps$.

\subsection{Estimators from Stochastic Gradient Langevin Dynamics}
We will now discuss the LLC estimators that are practically computable, corresponding to the estimator defined in \citet[\S 4.4]{LLC}.  These estimators are defined using \emph{stochastic gradient Langevin dynamics} \citep[SGLD;][]{WellingTeh} as a substitute for sampling from the generalized posterior associated to the Gaussian prior $\varphi_{\sigma^2}$.

Given an initial point $w^* \in W$ we will define a sequence $w_1, \ldots, w_T$ of points in $W$ as follows.  Fix a step size $\eps>0$.  Then we let $w_1 = w^*$, and for $j > 1$ define
\begin{equation} \label{SGLD_step}
w_j = w_{j-1} + \frac \eps 2 \left( \frac{n \beta}{m} \sum_{i=1}^m \nabla_w f(w,w_j,x_{j,i})|_{w=w_j} + \frac 1{\sigma^2}(w^* - w_j) \right)
\end{equation}
where $m$ is a natural number (the \emph{batch size}) and $x_{j,1}, \ldots, x_{j,m}$ are a sequence of independent samples from $q_{w_j}$.

Let us comment briefly on the motivation behind this idea. \citet{RobertsTweedie} study the \emph{Langevin diffusion} stochastic differential equation associated to a probability distribution $\pi$ on $\RR^D$
\[\d L_t = \frac 12 \nabla \log \pi(L_t) \d t + \d W_t\]
where $W_t$ is a standard Brownian motion on $\RR^D$.  They argue that under a continuity condition for $\pi$ together with a mild growth condition along rays the Langevin diffusion equation admits $\pi$ as its unique invariant distribution (Theorem 2.1 in \emph{loc. cit.}).  Therefore, in our applications, the generalized (annealed) posterior distribution from which we need to be sampled can be obtained by flowing with respect to the Langevin diffusion.

In order to reify this idea the idea is to use a suitable discrete approximation to the continuous Langevin diffusion: this is the role played by Welling and Teh's stochastic gradient Langevin descent.

\begin{remark}
 We mention here some open problems related to the behaviour of SGLD in singular learning theory; we refer the reader to \citet{HitchcockHoogland} for a more detailed discussion.
 \begin{enumerate}
  \item SGLD with constant step size is known, even under strong convexity assumptions on the objective function, to have limiting behaviour that differs from that of Langevin flow \citep[as shown, for instance, in][]{Pitfalls}.  Therefore the SGLD based estimator is likely to be asymptotically biased with bias depending on the step size $\eps$.  Can this bias be quantified?
  \item If one uses decaying step size then there are results implying almost sure convergence to the posterior distribution as the chain length $T \to \infty$ \citep{TTV}.  These results, however, make assumptions that do not typically hold for the neural network models we are most interested in \citep[for a detailed discussion, see][Apx.\ B]{HitchcockHoogland}.  What can one say upon relaxing these assumptions?
 \end{enumerate}
\end{remark}

\begin{remark}
Notice that in our implementation of SGLD we sample trajectories at each step in the chain \emph{on-policy}.  In other words in equation \ref{SGLD_step} we use the derivative $\nabla_w f(w,w_j,x_{j,i})|_{w=w_j}$ at step $j$ and not $\nabla_w f(w,w^*,x_{j,i})|_{w=w_j}$.  Recall that these two random variables have the same expected value but generally different higher moments.  Either choice would provide an estimator for the gradient flow term in the Langevin diffusion for the annealed posterior; we choose to use on-policy sampling since it typically has lower variance.
\end{remark}

If we use a subsequence $w_B, \ldots, w_T$ for some initial point $B>0$ (the \emph{burn-in} parameter) as a substitute for sampling from the posterior distribution $\exp(-n\beta G_n(w))\varphi_{\sigma^*}(w) \d w$ on $W$ then we obtain the estimator that we compute in practice by analogy with the one defined in Definition \ref{localized_WBIC_estimator_def}.  For convenience we summarize the hyperparameters that this estimator depends on a table below.

\begin{table}[!h]
\centering
\label{SGLD_hparam_table}
 \begin{tabular}{|c|c|}
  \hline
  Notation & Description \\ \hline
  $w^* \in W$ & Initial point in $W$\\
  $\beta \in \RR_{>0}$ & Inverse temperature\\
  $n \in \bb N$ & Number of training steps\\
  $\sigma^2 \in \RR_{>0}$ & Noise variance \\
  $T \in \bb N$ & SGLD chain length\\
  $\eps \in \RR_{>0}$ & SGLD step size\\
  $m \in \bb N$ & SGLD batch size\\
  $B \in \bb N$ & Number of burn-in steps\\
  \hline
 \end{tabular}
 \caption{A list of the hyperparameters on which the random variable $\widehat \lambda_{\mr{SGLD}}(w^*)$ depends.  Note that the estimator does not depend on $n$ and $\beta$ independently, only on their product $n\beta$.}
\end{table}

\begin{definition}
 The \emph{SGLD based LLC estimator} $\widehat \lambda_{\mr{SGLD}}(w^*)$ is defined by
\[\widehat \lambda_{\mr{SGLD}}(w^*) = \frac {n\beta}{T} \sum_{j=B}^T (G_n(w_j) - G_n(w^*)).\]
\end{definition}

\section{Further LLC Estimate Data} \label{LLC_plot_appendix}

In this appendix we present our LLC estimates during training for all 25 independent seeds with $\alpha=0.6, \gamma=0.98$.  We present two ensembles of figures.
\begin{itemize}
    \item \cref{fig:llc_grid_alpha_0.5_l2_norms} shows LLC estimates for 25 independently sampled models.  We also show the behavior of the weight norms through training.  We observe that the weight norms increase monotonically as the model trains and do not exhibit the "staircase" behaviour of the LLC estimates, nor the decline within phases that we typically observe for LLC estimation.
    \item \cref{fig:llc_grid_alpha_0} shows LLC estimates for models where estimation was performed with a \emph{distinct} value of $\alpha$ to that used in training (as described in Remark \ref{off_dist_rmk}), specifically with $\alpha=0$.  In other words during LLC estimation we only consider the behavior of the model in states where the goal is in the top-left corner.  Note that in such states we cannot distinguish between phases 2 and 3 by means of the regret anymore: these phases are both optimal and receive zero regret.
\end{itemize}
The shaded areas indicate phases, whose detection was described in Section \ref{subsec:auto_phase_detect}.  The blue, beige and pink shaded regions indicate phases 1, 2b and 3 respectively.  

We also show in green an additional phase that appears infrequently.  \emph{Phase 2c} exhibits optimal behaviour exactly in the half space where the mouse $x$-position in greater than or equal to the cheese $x$-position (or, alternatively, the identical condition with the $y$-coordinate).  For the purpose of phase detection this is characterized by the region in policy space where $\pi(R) = 0$ in all states, $\pi(D) = 0$ if the mouse is not to the left of the cheese and not above the cheese, and $\pi(U) = 0$ if the mouse is not to the left of the cheese and not below the cheese.

\begin{remark}
Observe that when estimating LLCs off distribution the transition between phase 2 and phase 3 is frequently visible through a spike in the LLC estimate, or a rapid change in the derivative, even though the policy does not distinguish between these two phases.  The LLC estimates within the two phases however exhibit much higher variance both within and between seeds than we saw in the case of on-distribution LLC estimation.  It may be possible to ameliorate this through more judicious choice of hyperparameters but doing so is not within the scope of the present work.
\end{remark}

\begin{remark}[Environment complexity]
By enhancing our toy model slightly one could explore the ability of the LLC estimator to detect out-of-distribution generalization, meaning here performance in classes of states not seen during training.  For a simple example consider the extension of the model presented here to maze environments (as in the original cheese-in-the-corner example presented in \citealp{Misgen}).  One can allow the distribution of initial states used during training and LLC estimation to differ, but now changes in the LLC may detect changes in performance in out-of-distribution states.
\end{remark}

\begin{figure}[p]
    \centering
    \includegraphics[width=0.85\linewidth]{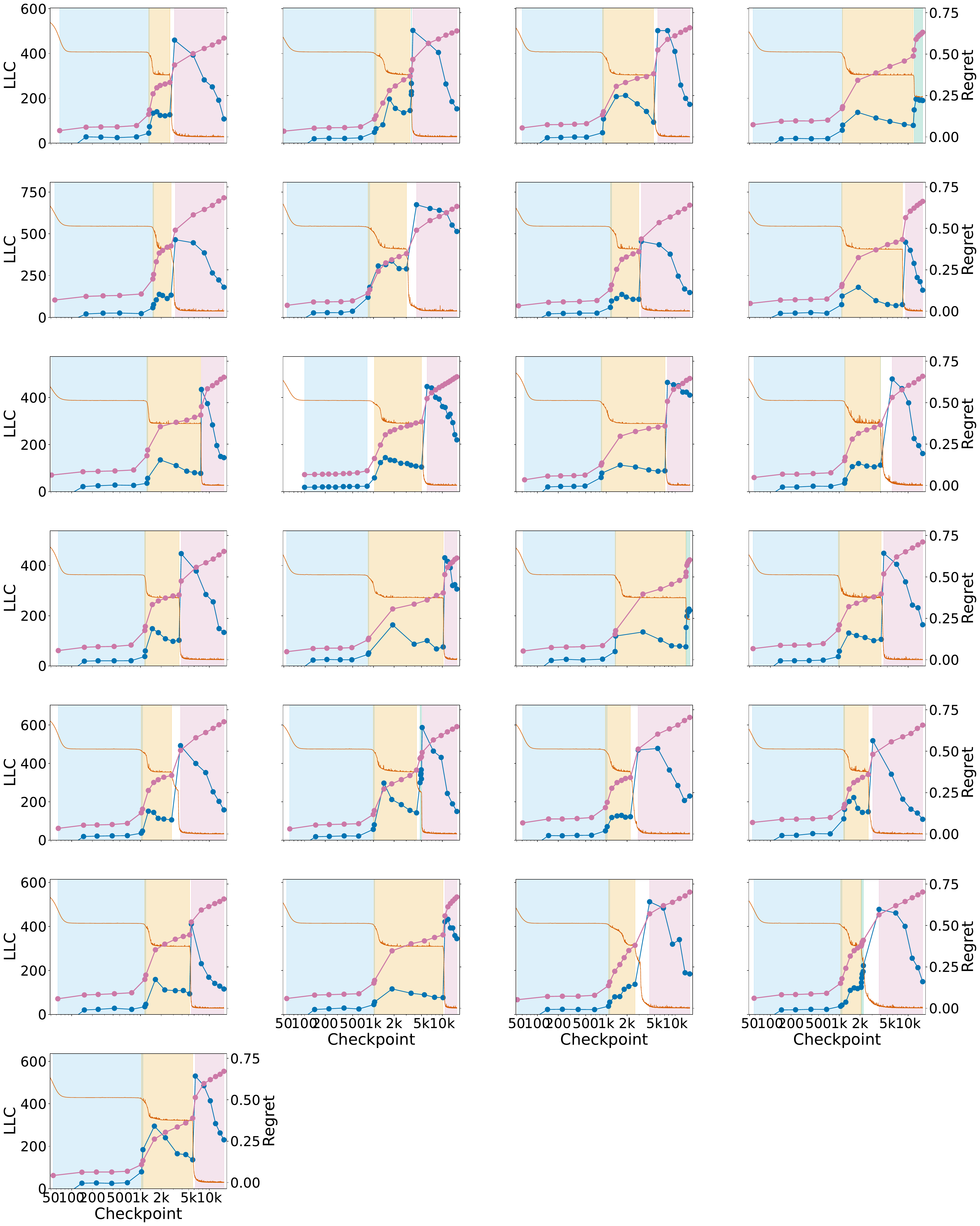}
    \caption{LLC estimates (blue curves) for models with $\gamma=0.98$ and $\alpha=0.6$.These plots also show the weight $L^2$ norms (orange curves), which monotonically increase throughout training, and the regret curves in pink. The weight norms range from 70 to 77.}
    \label{fig:llc_grid_alpha_0.5_l2_norms}
\end{figure}

\begin{figure}[p]
    \centering
    \includegraphics[width=0.85\linewidth]{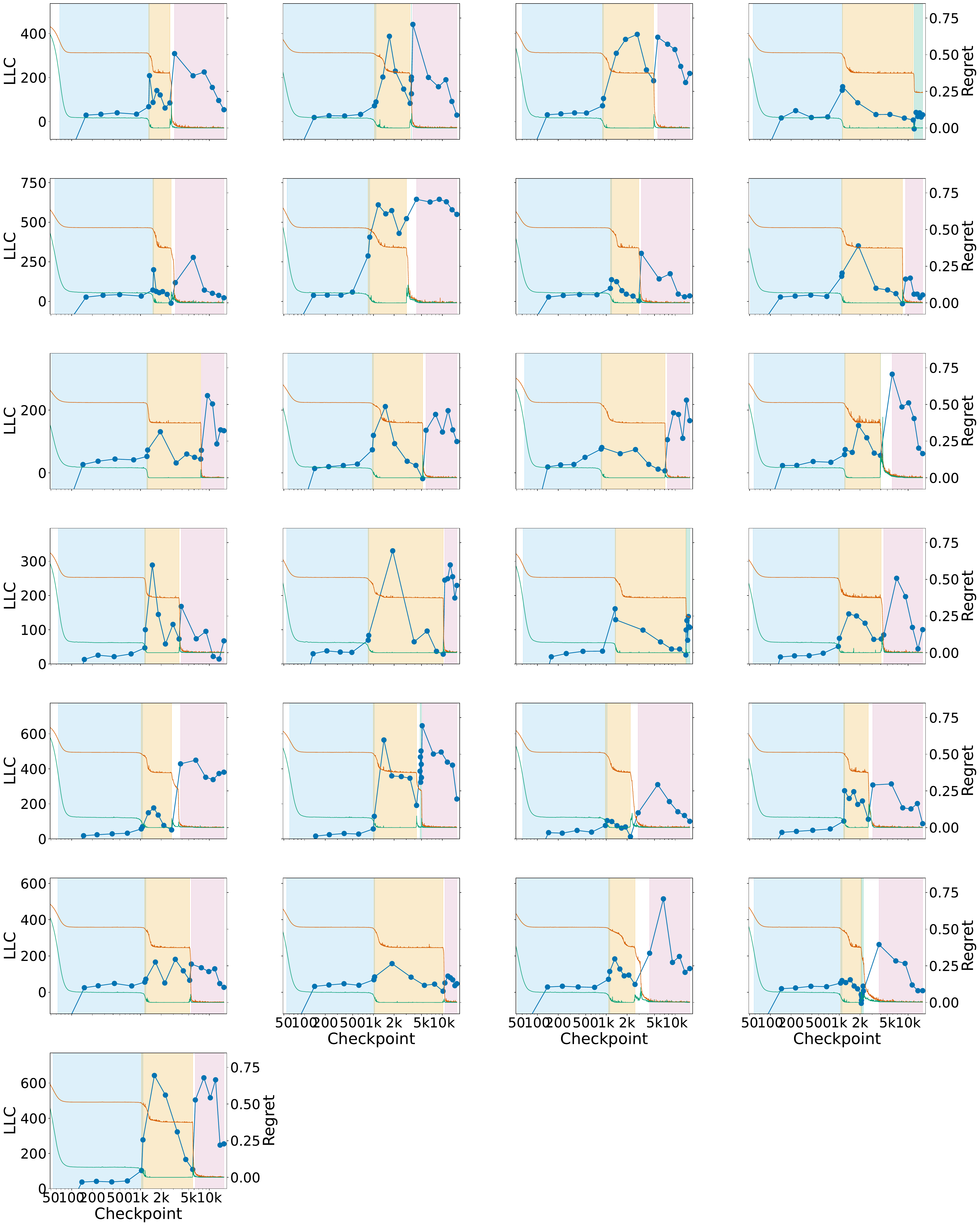}
    \caption{LLC (blue line) estimated with $\gamma=0.98$ and $\alpha=0$ for models trained with values of $\alpha$ and $\gamma$ indicated in the titles of the panels. The runs that share $\alpha$ and $\gamma$ differ only by the random seed with which they were trained. The brown curve shows the on-distribution regret and the green curve shows the regret measured with $\alpha=0$.}
    \label{fig:llc_grid_alpha_0}
\end{figure}

\section{Nonlinear relationship between LLC estimates and regret} \label{LLC_linearity_appendix}
\begin{figure}[htbp]
    \centering
    \includegraphics[width=0.85\linewidth]{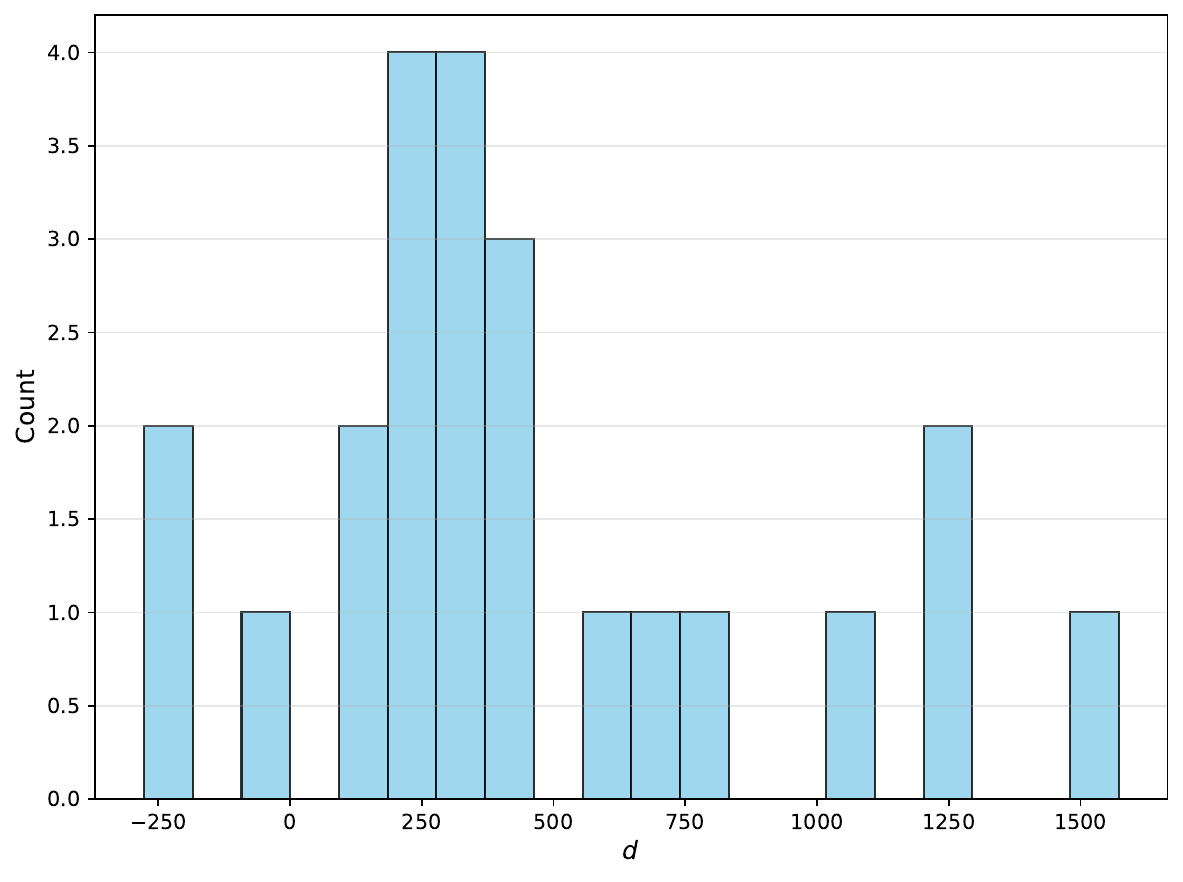}
    \caption{Distribution of the random variable $d$ representing the difference in the rate of change of the LLC estimate with respect to the regret between phases 1 and 2b, and between phases 2b and 3, measured across 25 different iid runs for which $\gamma=0.98$ and $\alpha=0.6$.}
    \label{fig:d-distribution}
\end{figure}

Looking at Figure \ref{fig:llc_grid_alpha_0.5_l2_norms} one might be concerned that the LLC is approximately a linear function of the regret.  To see that this is not the case it is sufficient to show that the average (regret, LLC estimate) pairs in the three phases are not collinear (although we do additionally observe non-linear development \emph{within} the phases).  To test this we define the random variable 
\[d = \frac{\hat \lambda_{2b}-\hat \lambda_1}{G_{2b}-G_1} - \frac{\hat \lambda_3-\hat \lambda_{2b}}{G_3-G_{2b}}\]
where $\hat \lambda_i$ and $G_i$ are the average LLC estimates and average regrets among checkpoints in phase $i$ respectively. Under the hypothesis that $G_i$ and $\hat \lambda_i$ are linearly related the random variable $d$ would have expected value zero for all $\alpha$ and $\gamma$ values.  We may test this using the mean of $d$ across a set of independent samples with fixed $\alpha$ and $\gamma$. We have 25 runs with $\alpha=0.6$ and $\gamma=0.98$ and show the distribution of $d$ for llcs computed on-distribution in Figure \ref{fig:d-distribution}. Since these samples are iid we may perform a one sample t-test and reject the hypothesis that $\bb E(d)=0$ with significance $p=1.39\times 10^{-4}$.

\section{Sensitivity of automatic phase detection} \label{app:delta_range}

In \Cref{subsec:auto_phase_detect} we described a natural function by which one can automatically detect when the training process has entered a phase by measuring the distance in policy space between the current policy $\pi$, and the closest example policy $\pi'$ contained in some subspace $P$.
If the distance is below some fraction $\delta$ of its maximum value then we say that $\pi$ has approximately entered the phase associated with the subspace of policies $P$.  In other words we detect the phase associated to $P$ when
$$
\min_{\pi' \in P} ||\pi - \pi'|| < \delta d_P
$$
where $d_P = \max_{\pi} \max_{\pi' \in p} ||\pi - \pi'||$ is the maximum distance from the subspace $P$.

\begin{figure}[htbp]
    \centering
    \includesvg[width=0.8\linewidth]{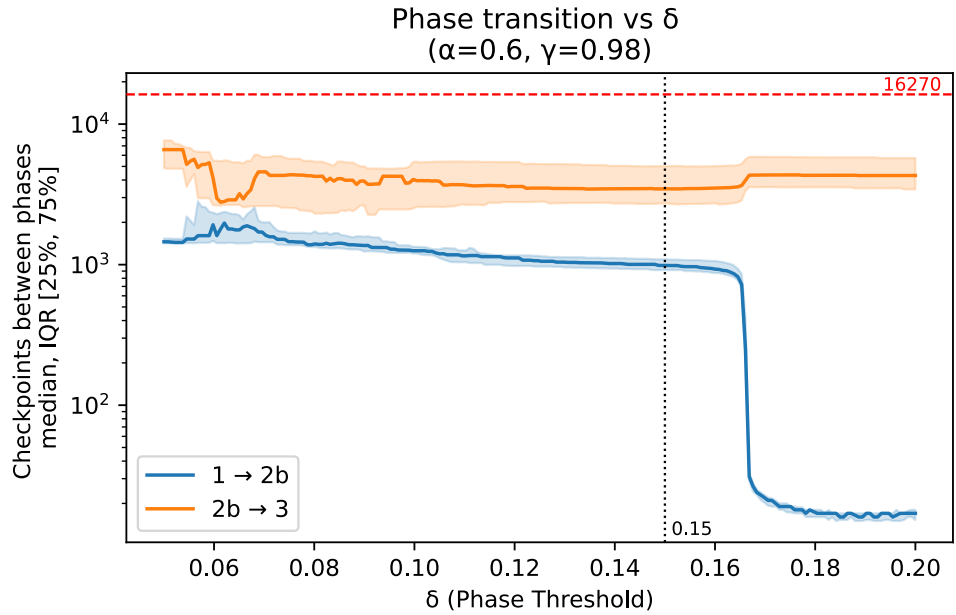}\\[0.4em]
    \caption{A plot showing the median number of checkpoints between Phase 1 and Phase 2b (blue), and the median number of checkpoints
    between Phase 2b to Phase 3 (orange). Models were trained with hyperparameters $\alpha=0.6, \gamma = 0.98$.
    Training runs are only included if the policy was detected to be in all of phase 1, phase 2b and phase 3 at some point during training.}
    \label{fig:delta_range}
\end{figure}

From \Cref{fig:delta_range} we can see that for $0.08 \leq \delta \leq 0.16$ the number of checkpoints between phase transitions is roughly constant (recall that spacing between checkpoints is logarithmic).  In particular our LLC estimates within each phase are not significantly sensitive to the value of the threshold $\delta$ within this range.

When $\delta$ exceeds 0.16 we spuriously detect phase 2b early -- our threshold has exceeded the distance between the two phases. Since we only include training runs where all three phases are present, for values of $\delta$ sufficiently small the sample size begins to decrease as fewer samples exceed the threshold for detecting the middle phase at some point during training; this is why the plots are not monotonic. Below $\delta=0.05$ no phases are detected in any of the runs, and the average is undefined.

\end{document}